\newtheorem{theorem}{Theorem}
\newtheorem{lemma}[theorem]{Lemma}
\newtheorem{definition}[theorem]{Definition}
\newtheorem{proposition}[theorem]{Proposition}
\newtheorem{example}[theorem]{Example}
\newcommand{\bdmath}{\begin{dmath}}
\newcommand{\edmath}{\end{dmath}}
\newcommand{\beq}{\begin{equation}}
\newcommand{\eeq}{\end{equation}}
\newcommand{\bdm}{\begin{displaymath}}
\newcommand{\edm}{\end{displaymath}}
\newcommand{\bea}{\begin{eqnarray}}
\newcommand{\eea}{\end{eqnarray}}
\newcommand{\beal}{\beq \begin{array}{ll}}
\newcommand{\eeal}{\end{array} \eeq}
\newcommand{\beas}{\begin{eqnarray*}}
\newcommand{\eeas}{\end{eqnarray*}}
\newcommand{\ba}{\begin{array}}
\newcommand{\ea}{\end{array}}
\newcommand{\bit}{\begin{itemize}}
\newcommand{\eit}{\end{itemize}}
\newcommand{\ben}{\begin{enumerate}}
\newcommand{\een}{\end{enumerate}}
\newcommand{\calD}{{\cal D}}
\newcommand{\calI}{{\cal I}}
\newcommand{\calS}{{\cal S}}
\newcommand{\calY}{{\cal Y}}
\newcommand{\eg}{\emph{e.g.,}\xspace}
\newcommand{\ie}{\emph{i.e.,}\xspace}
\newcommand{\resp}{\emph{resp.,}\xspace}
\newcommand{\hide}[1]{}
\newcommand{\wrt}{w.r.t.\xspace}
\newcommand{\hiddenText}{{\color{gray} hidden text.}}
\newcommand{\hideWithText}[1]{\hiddenText}
\newcommand{\subject}{\text{ subject to }}
\newcommand{\norm}[1]{\left\| #1 \right\|}
\newcommand{\tran}{^{\mathsf{T}}}
\newcommand{\trace}[1]{\mathrm{tr}\left(#1\right)}
\newcommand{\conv}[1]{\mathrm{conv}\left(#1\right)}
\newcommand{\eye}{{\mathbf I}}
\newcommand{\Real}[1]{ { {\mathbb R}^{#1} } }
\newcommand{\SEthree}{\ensuremath{\mathrm{SE}(3)}\xspace}
\newcommand{\SOthree}{\ensuremath{\mathrm{SO}(3)}\xspace}
\newcommand{\scenario}[1]{{\smaller \sf#1}\xspace}
\newcommand{\blue}[1]{{\color{blue}#1}}
\newcommand{\linkToPdf}[1]{\href{#1}{\blue{(pdf)}}}
\newcommand{\linkToPpt}[1]{\href{#1}{\blue{(ppt)}}}
\newcommand{\linkToCode}[1]{\href{#1}{\blue{(code)}}}
\newcommand{\linkToWeb}[1]{\href{#1}{\blue{(web)}}}
\newcommand{\linkToVideo}[1]{\href{#1}{\blue{(video)}}}
\newcommand{\linkToMedia}[1]{\href{#1}{\blue{(media)}}}
\newcommand{\award}[1]{\xspace} 
\newcommand{\R}{\mathbb{R}}
\renewcommand{\norm}[1]{\left\lVert #1 \right\rVert}
\newcommand{\vectorize}[1]{\mathrm{vec}\parentheses{#1}}
\newcommand{\cbrace}[1]{\left\{#1\right\}}
\newcommand{\sym}[1]{\mathbb{S}^{#1}}
\newcommand{\bmat}{\left[ \begin{array}}
\newcommand{\emat}{\end{array}\right]}
\newcommand{\pd}[1]{\sym{#1}_{++}}
\newcommand{\parentheses}[1]{\left(#1\right)}
\newcommand{\half}{\frac{1}{2}}
\newcommand{\abs}[1]{\left|#1\right|}
\newcommand{\rotsub}{_{{R}}}
\newcommand{\transub}{_{{t}}}
\newcommand{\halfpi}{\frac{\pi}{2}}
\newcommand{\Exp}[1]{\mathrm{Exp}\parentheses{#1}}
\renewcommand{\cos}[1]{\mathrm{cos}\parentheses{#1}}
\renewcommand{\sin}[1]{\mathrm{sin}\parentheses{#1}}
\renewcommand{\cot}[1]{\mathrm{cot}\parentheses{#1}}
\newcommand{\acos}[1]{\mathrm{arccos}\parentheses{#1}}
\newcommand{\purse}{\scenario{PURSE}}
\newcommand{\pursetwo}{\scenario{PURSE2D3D}}
\newcommand{\pursethree}{\scenario{PURSE3D3D}}
\newcommand{\purseregression}{\scenario{PURSEreg}}
\newcommand{\threedmatch}{\scenario{3DMatch}}
\newcommand{\lmo}{\scenario{LM-O}}
\newcommand{\lm}{\scenario{LM}}
\newcommand{\fpfh}{\scenario{FPFH}}
\newcommand{\fcgf}{\scenario{FCGF}}
\newcommand{\ransag}{\scenario{RANSAG}}
\renewcommand{\eye}{\mathrm{I}}
\newcommand{\megb}{\scenario{MEGB}}
\newcommand{\rotdist}{\mathrm{dist}_{\SOthree}}
\newcommand{\trandist}{\mathrm{dist}_{\Real{3}}}
\newcommand{\ballrot}{B_{\SOthree}}
\newcommand{\balltran}{B_{\Real{3}}}
\newcommand{\hatC}{\hat{C}}
\newcommand{\hatc}{\hat{c}}
\newcommand{\hatD}{\hat{D}}
\newcommand{\hatd}{\hat{d}}
\newcommand{\hatS}{\hat{S}}
\newcommand{\nameshort}{\scenario{CLOSURE}}
\newcommand{\supp}{Supplementary Material}
\newcommand{\dgr}{\scenario{DGR}}
\newcommand{\nameacc}{\scenario{CLOSURE++}}
\newcommand{\grcc}{\scenario{GRCC}}
\newcommand{\ransagfmincon}{\scenario{RANSAG+fmincon}}
\newcommand{\calnum}{\scenario{Calibration}}
\begin{document}

\title{\huge \nameshort: Fast Quantification of Pose Uncertainty Sets}

\author{Author Names Omitted for Anonymous Review. Paper-ID [192]}



%
\author{\authorblockN{Yihuai Gao\authorrefmark{1}\authorrefmark{2},
Yukai Tang\authorrefmark{1}\authorrefmark{3},
Han Qi\authorrefmark{4} and 
Heng Yang\authorrefmark{4}}
\authorblockA{\authorrefmark{2}Stanford University, \texttt{yihuai@stanford.edu}}
\authorblockA{\authorrefmark{3}Princeton University, \texttt{yt3846@princeton.edu}}
\authorblockA{\authorrefmark{4}Harvard University, \texttt{\{hqi,hankyang\}@g.harvard.edu}
}
}
\newcommand\blfootnote[1]{%
  \begingroup
  \renewcommand\thefootnote{}\footnote{#1}%
  \addtocounter{footnote}{-1}%
  \endgroup
}

\maketitle
\blfootnote{\authorrefmark{1} equal contribution.}


\begin{abstract}
We investigate uncertainty quantification of 6D pose estimation from {learned noisy} measurements {(\eg keypoints and pose hypotheses)}. Assuming \emph{unknown-but-bounded} measurement noises, a \emph{pose uncertainty set} {(\purse)} is a subset of $\SEthree$ that contains all possible 6D poses compatible with the measurements. Despite being simple to formulate and its ability to embed uncertainty, the \purse is difficult to manipulate and interpret due to the many abstract nonconvex polynomial constraints defining it. An appealing simplification of \purse --motivated by the bounded state estimation error assumption in robust control-- is to find its \emph{minimum enclosing geodesic ball} (\megb), \ie a point pose estimation with minimum worst-case error bound.
We contribute (i) a {geometric interpretation of the nonconvex \purse}, and (ii) a fast algorithm to \emph{inner} approximate the \megb. Particularly, we show the \purse corresponds to the feasible set of a constrained dynamical system {or the intersection of multiple geodesic balls}, and this perspective allows us to design an algorithm to densely sample the \emph{boundary} of the \purse through strategic random walks that are efficiently parallelizable on a GPU. We then use the miniball algorithm by Gärtner (1999) to compute the \megb of \purse samples, leading to an \emph{inner} approximation of the true \megb. Our algorithm is named \nameshort (\emph{enClosing baLl frOm purSe boUndaRy samplEs}) and it enables computing a \emph{certificate of approximation tightness} by calculating the \emph{relative ratio} between the size of the inner approximation and the size of the \emph{outer} approximation \grcc from Tang, Lasserre, and Yang (2023). Running on a {single RTX 3090 GPU}, {\nameshort achieves the relative ratio of $92.8\%$ on the \lmo object pose estimation dataset, $91.4\%$ on the \threedmatch point cloud registration dataset and $96.6\%$ on the \lm object pose estimation dataset with an average runtime below $0.3$ seconds}. Obtaining comparable worst-case error bound but $398\times$, $833\times$ {and $23.6\times$} faster than the outer approximation \grcc, \nameshort enables uncertainty quantification of 6D pose estimation to be implemented in real-time robot perception applications.

\end{abstract}

\IEEEpeerreviewmaketitle


\section{Introduction}
\label{sec:intro}

6D pose estimation (\ie a 3D rotation and a 3D translation) from images and point clouds is a longstanding problem in robotics and vision and finds extensive applications in localization and mapping~\cite{song21ral-right}, robotic manipulation~\cite{deng20icra-self}, virtual and augmented reality~\cite{wen24cvpr-foundationpose}, and autonomous driving~\cite{shi23tro-optimal}.

{We focus on two popular paradigms for pose estimation in this paper and aim to endow them with rigorous \emph{uncertainty quantification}.}
{The first paradigm} is to start by detecting salient \emph{keypoints} in the sensor data --often done using deep neural networks~\cite{choy2020deep,yang21cvpr-self,peng19cvpr-pvnet,pavlakos17icra-6D}-- and then leverage the \emph{maximum likelihood estimation} (MLE) framework to estimate the optimal pose that best fits the keypoint measurements. {The second paradigm --initiated by PoseNet~\cite{kendall15iccv-posenet} and PoseCNN~\cite{xiang2017rss-posecnn} but recently became state of the art via FoundationPose~\cite{wen24cvpr-foundationpose}-- circumvents the need to detect keypoints and directly \emph{regresses} pose hypotheses (potentially followed by another MLE step).} Formally, let $x = (R,t) \in \SOthree \times \Real{3}:=\SEthree$ be the unknown pose to be estimated, {both paradigms generate $N$ noisy measurements $y_i \in \calY,i=1,\dots,N$ that satisfy}
\bea\label{eq:generative-model}
g(x,y_i) = \epsilon_i, \quad i=1,\dots,N,
\eea
where $g: \SEthree \times \calY \rightarrow \Real{m}$ is a known residual function that measures the discrepancy between the measurement $y_i$ and pose $x$. {In the keypoint-based paradigm, $y_i$ are keypoints; in the direct regression paradigm, $y_i$ are regressed pose hypotheses.} When $y_i$ is a perfect (\ie noise-free) measurement, $g$ evaluates to zero; when $y_i$ is noisy, $\epsilon_i \in \Real{m}$ describes the measurement noise in neural network predictions. We give {three} instantiations of~\eqref{eq:generative-model} that will be the focus of this paper. 
\begin{example}[{Keypoint-based} Object Pose Estimation~\cite{yang23cvpr-object,kneip14eccv-upnp}]\label{ex:2D3D}
Let $y_i = (z_i,Z_i) \in \Real{2}\times \Real{3}$ be a pair of matched 2D image keypoint and 3D object keypoint. The function
\bea\label{eq:g-2D3D}
g(x,y_i) = z_i - \Pi(R Z_i + t)
\eea
describes the \emph{reprojection error} of the 3D keypoint $Z_i$, where $\Pi(\cdot)$ is the camera projection function.\footnote{$\Pi: \Real{3} \rightarrow \Real{2}$, $\Pi(v) = [v_1/v_3,v_2/v_3]\tran$.}
\end{example}
\begin{example}[{Keypoint-based} Point Cloud Registration~\cite{yang20tro-teaser,choy2020deep}]\label{ex:3D3D}
Let $y_i = (a_i, b_i) \in \Real{3} \times \Real{3}$ be a pair of matched 3D keypoints in the source and target point clouds, respectively. The function
\bea\label{eq:g-3D3D}
g(x,y_i) = b_i - (R a_i + t)
\eea
describes the \emph{Euclidean error} between the keypoints.
\end{example}

{
    \begin{example}[Direct Pose Regression~\cite{kendall15iccv-posenet,xiang2017rss-posecnn,wen24cvpr-foundationpose}]\label{ex:poseregression}
        Let $y_i = (R_i, t_i) \in \SEthree$ be a pose hypothesis, the function
        \bea\label{eq:g-poseregression}
        g(x,y_i) = \begin{bmatrix}
            \vectorize{R} - \vectorize{R_i} \\
            t - t_i
        \end{bmatrix}
        \eea
        describes the \emph{relative pose} between $(R,t)$ and $(R_i,t_i)$,  where $\vectorize{\cdot}$ vectorizes a matrix as a vector.
        \end{example}
}
 
Given the {noisy measurements} $\{y_i\}_{i=1}^N$, one then formulates
\bea \label{eq:MLE}
\min_{x \in \SEthree} \sum_{i=1}^N \rho(g(x,z_i)),
\eea 
whose solution provides an estimate of the unknown pose $x$. For example, when the noise $\epsilon_i$ is assumed to follow a standard Gaussian distribution, then choosing $\rho(g(x,z_i)) = \Vert g(x,z_i) \Vert^2$ makes the solution of \eqref{eq:MLE} to be the maximum likelihood estimator. In practice, there are often measurements that do not follow \eqref{eq:generative-model} and lead to very large $\epsilon_i$, commonly known as an \emph{outlier}. To regain robustness to outliers, $\rho$ in \eqref{eq:MLE} is modified to be a \emph{robust loss}~\cite{huber04book-robust,black96ijcv-unification} and the optimization \eqref{eq:MLE} is referred to as \emph{M-estimation} (\ie MLE-like estimation).

{\bf Uncertainty Quantification}. Despite the many algorithmic advances in solving \eqref{eq:MLE}~\cite{bustos17pami-guaranteed,antonante21tro-outlier,chin22book-maximum}, \eg with certifiable global optimality guarantees~\cite{yang22pami-certifiably}, we argue there are two fundamental issues with the M-estimation framework. First, the starting assumption that $\epsilon_i$ follows a Gaussian-like distribution (up to the removal of outliers) is not justified. For example, \cite{tang23arxiv-uncertainty} shows noises generated by neural network keypoint detections in Example~\ref{ex:2D3D} fail almost all statistical multivariate normality tests~\cite{korkmaz2014mvn}, regardless of whether outliers are removed. We reinforce this empirical observation in \supp~and provide similar results showing real noises in keypoint matches of Example~\ref{ex:3D3D} also deviate far from a Gaussian distribution. Second, the solution of \eqref{eq:MLE} provides a \emph{single point estimate} that usually comes with no \emph{uncertainty quantification}, which is crucial when the pose estimations need to be used for downstream decision-making~\cite{dean20l4dc-robust,chou22wafr-safe}. Though it is possible to use the inverse of Fisher Information at the optimal solution to approximate the uncertainty,\footnote{{In linear least squares, this coincides with the covariance of the posterior distribution; in nonlinear least squares, this is called the Cramer-Rao lower bound, see \cite[Section B.6]{szeliski22book-computer} for a detailed explanation.}} such approximation (i) again builds on the assumption that $\epsilon_i$ is Gaussian, and (ii) is known to underestimate the true uncertainty~\cite{szeliski22book-computer}. Recent works~\cite{rosen19ijrr-sesync,yang20tro-teaser,carlone23ftr-estimation} provided uncertainty estimation for a few robot perception problems. However, the uncertainty either depends on uncheckable assumptions and cannot be computed~\cite{rosen19ijrr-sesync,yang20tro-teaser}, or build on machinery that only applies to estimators based on expensive semidefinite relaxations~\cite{yang22pami-certifiably}. 

{\bf Set-Membership Estimation with Noise Calibration}. An alternative (albeit less popular) estimation framework that this paper advocates for the purpose of practical and rigorous uncertainty quantification is that of \emph{set-membership estimation} (SME), widely known in control theory for system identification~\cite{milanese91automatica-sme,li23arxiv-learning,wang18automatica-set}. Instead of placing distributional assumptions on the measurement noise $\epsilon_i$ in~\eqref{eq:generative-model}, SME assumes the noise is \emph{unknown but bounded}, \ie
\bea\label{eq:unknown-but-bounded} 
\sqrt{\epsilon_i\tran \Lambda_i \epsilon_i}=:\Vert \epsilon_i \Vert_{\Lambda_i} \leq \beta_i, \ \ i=1,\dots,N
\eea
for positive definite matrices $\Lambda_i \in \pd{m}$ and positive noise bounds $\beta_i > 0$. With this assumption, and invoking~\eqref{eq:generative-model}, a \emph{Pose UnceRtainty SEt} (\purse) can be formulated as~\cite{yang23cvpr-object}
\begin{equation}\label{eq:purse}
    \hspace{-3mm} S = \cbrace{x \in \SEthree \mid \Vert g(x,y_i) \Vert_{\Lambda_i} \leq \beta_i,i=1,\dots,N }, \tag{\purse}
\end{equation}
\ie the set of all possible poses that are compatible with the measurements $y_i$ and the assumption~\eqref{eq:unknown-but-bounded}. The careful reader may ask ``\emph{why should we trust the unknown-but-bounded noise assumption~\eqref{eq:unknown-but-bounded} in SME more than the Gaussian-like noise assumption in MLE}?'' The answer is that recent statistical tools for distribution-free uncertainty calibration~\cite{roth22book-uncertain}, \eg \emph{conformal prediction}~\cite{angelopoulos21arxiv-gentle}, enables rigorous estimation of $\Lambda_i$ and $\beta_i$ from a calibration dataset. For instance, \cite{yang23cvpr-object} demonstrated how to calibrate noises generated by a pretrained neural network for Example~\ref{ex:2D3D}, and in \supp~we show how to calibrate neural network detection noises for Example~\ref{ex:3D3D} {and Example~\ref{ex:poseregression}} using conformal prediction. In fact within robotics, the SME framework has recently also gained popularity in simultaneous localization and mapping (SLAM) for guaranteed error analysis~\cite{ehambram22icra-interval,mustafa18ral-guaranteed}. However, the SLAM literature used \emph{interval analysis} to bound the estimation error of 2D poses, while in this paper we focus on uncertainty quantification of 3D rotations and translations.

{\bf Computational Challenges}. Although SME provides a natural description of uncertainty and its unknown-but-bounded noise assumption can be made practical by modern uncertainty calibration tools, it does bring computational challenges because the {\purse} is an abstract subset of $\SEthree$ defined by many nonconvex constraints (more precisely, polynomial inequalities~\cite{yang23cvpr-object}). This makes \purse difficult to be manipulated, \eg draw samples, visualize and interpret, extract a point estimate, compute volume, to name a few.  Therefore, it is desirable to simplify the {\purse}. Motivated by the common practice in robust perception-based control~\cite{dean20l4dc-robust,khalil96book-robust,cosner21iros-measurement} that assumes the state estimation has bounded error, it is appropriate to simplify \purse as a point estimate together with a \emph{worst-case error bound}, \ie an \emph{enclosing geodesic ball}\footnote{Since $\SEthree$ is a Riemannian manifold, we use the term ``geodesic''.} centered at the point estimate.\footnote{Given an enclosing geodesic ball, one can draw samples from \purse and estimate its volume by performing rejection sampling.} Since it is desired to find an enclosing geodesic ball with \emph{minimum conservatism}, we wish to compute the \emph{minimum enclosing geodesic ball} (\megb), \ie the smallest geodesic ball that encloses the original \purse (to be formulated precisely in Section~\ref{sec:megb}, {see Fig.~\ref{fig:inner-outer} for a graphical illustration}). Towards this goal, \cite{yang23cvpr-object} proposed an algorithm to compute the worst-case error bound for any point estimate, and~\cite{tang23arxiv-uncertainty} showed how to compute a hierarchy of enclosing geodesic balls (with decreasing sizes) that asymptotically converge to the \megb. There are, however, two shortcomings in~\cite{yang23cvpr-object,tang23arxiv-uncertainty}. First, both works provide \emph{outer} approximations of the \megb and it is unclear how conservative they are (\eg how much larger are these approximations compared to the true \megb). Second, they rely on semidefinite relaxations for the outer approximation, which are too expensive to be practical for real-time robotics applications (see Section~\ref{sec:experiments}). 


\begin{figure}[t]
	\begin{center}
    \includegraphics[width=0.9\columnwidth]{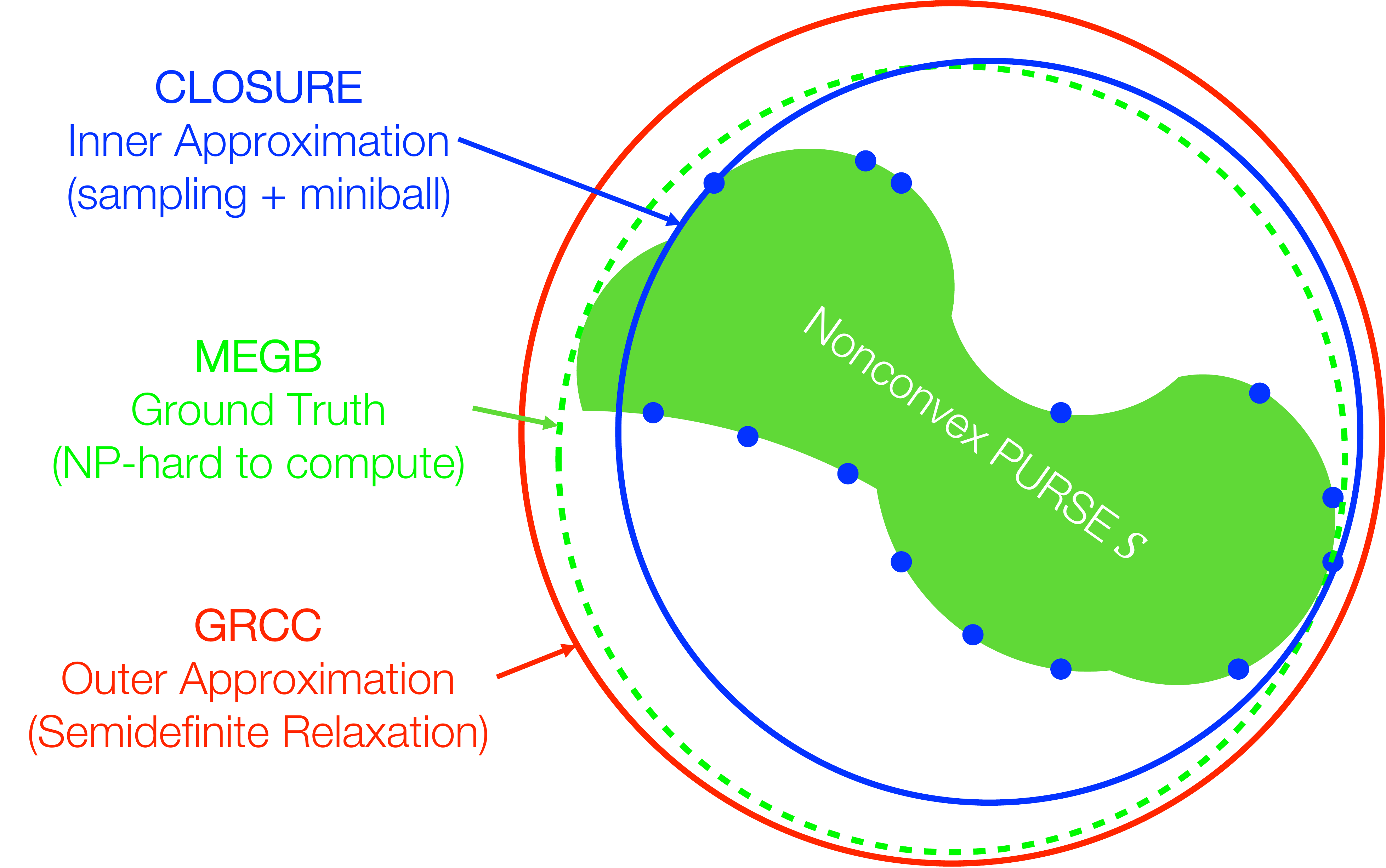}\\
    \end{center}
    \vspace{-5mm}
	\caption{{Illustration of the relationship between the outer approximation (provided by the \grcc algorithm from~\cite{tang23arxiv-uncertainty}), inner approximation (provided by the proposed \nameshort algorithm) and the ground truth \megb. Blue dots are the boundary pose samples in \purse, which are used to compute the inner approximation through the miniball algorithm~\cite{gartner1999fast}.}
	\label{fig:inner-outer}}
	\vspace{-6mm}
\end{figure}

{\bf Contributions}. We contribute an algorithm that quantifies the uncertainty of \purse in real time for Examples~\ref{ex:2D3D}-\ref{ex:poseregression}. The key perspective is that, \emph{algebraically}, {\purse} is defined by unstructured nonconvex constraints, but \emph{geometrically}, {\purse} corresponds to precisely the feasible sets of certain constrained dynamical systems {in Examples \ref{ex:2D3D}-\ref{ex:3D3D}, and the intersection of many geodesic balls in Example \ref{ex:poseregression}}. This observation leads to a natural algorithm: we first generate random samples in~{\purse} as initial states, and then perturb the states properly until it reaches the \emph{boundary} of the feasible set. Effectively, this algorithm aims to densely sample the {boundary} of \purse by \emph{strategic random walks}, which lends itself to fast parallization. Using the samples on the \purse boundary, we run the fast \emph{miniball} algorithm by Gärtner~\cite{gartner1999fast} to return a geodesic ball enclosing the samples. We name our algorithm \emph{enClosing baLl frOm purSe boUndaRy samplEs} (\nameshort). This geodesic ball in turn becomes an \emph{inner} approximation of the \megb. By comparing the \emph{relative ratio} between the size of the inner approximation computed by \nameshort and the size of the outer approximation computed by~\cite{tang23arxiv-uncertainty}, we obtain a \emph{certificate of tightness} for the approximation, {as shown in Fig.~\ref{fig:inner-outer}.} We test \nameshort on the \lmo dataset~\cite{brachmann14eccv-lmo} for Example~\ref{ex:2D3D}, the \threedmatch dataset~\cite{zeng17cvpr-3dmatch} for Example~\ref{ex:3D3D}, {and the \lm dataset~\cite{hinterstoisser2013model} for Example~\ref{ex:poseregression}}, and demonstrate that (i) with an average runtime of $0.1879$ second, \nameshort computes inner approximations with average relative ratio $92.8\%$ for Example~\ref{ex:2D3D}, (ii) with an average runtime of $0.1774$ second, \nameshort computes inner approximations with average relative ratio $91.4\%$ for Example~\ref{ex:3D3D} {and (iii) with an average runtime of $0.2768$ second, \nameshort computes inner approximations with average relative ratio $96.6\%$ for Example~\ref{ex:poseregression}}; all using a single NVIDIA RTX 3090 GPU. This means \nameshort is $398\times$, $833\times$, {and $23.6\times$} faster than the outer approximation algorithm in~\cite{tang23arxiv-uncertainty}, respectively, making it feasible in real-time perception systems. 

{\bf Paper Organization}. We formalize the notion of \megb and discuss high-level outer and inner approximation strategies in Section~\ref{sec:megb}. We then focus on the inner approximation: in Section~\ref{sec:dynamical-system} we show {the geometric structure of \purse} and in Section~\ref{sec:algorithm} we detail the \nameshort algorithm. We provide experimental results in Section~\ref{sec:experiments} and conclude in Section~\ref{sec:conclusion}.



\section{Minimum Enclosing Geodesic Ball}
\label{sec:megb}


\begin{figure}[t]
	\begin{center}
    \includegraphics[width=0.55\columnwidth]{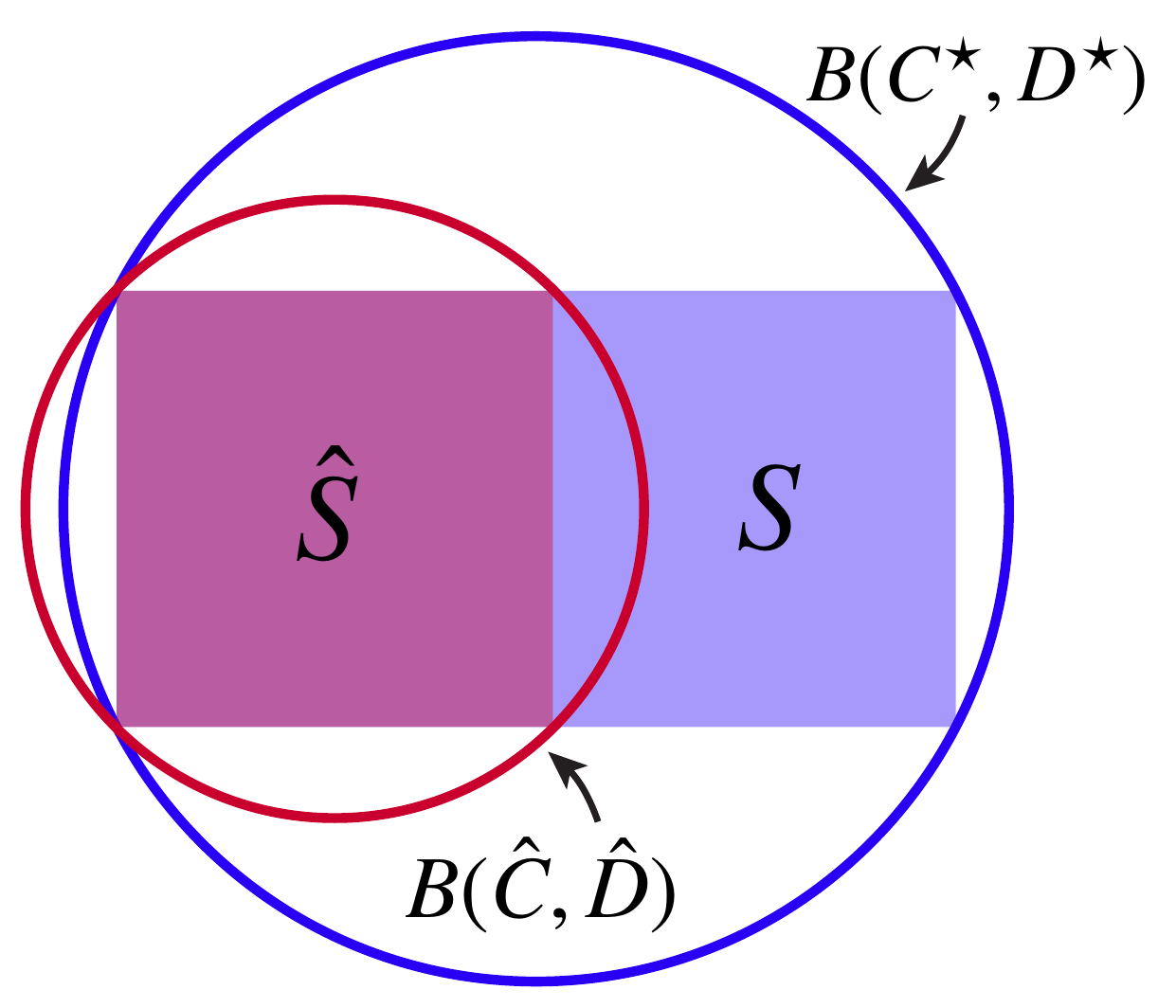}\\
    \end{center}
    \vspace{-5mm}
	\caption{\megb $B(C^\star,D^\star)$ of a simple rectangle $S$ and \megb $B(\hatC,\hatD)$ of the subset $\hatS \subset S$. Note that $\hatD < D^\star$ but $B(\hatC,\hatD) \not\subset B(C^\star,D^\star)$.
	\label{fig:megb-toy}}
	\vspace{-6mm}
\end{figure}

Given the uncertainty set $S$ defined in {\purse}, denote $S\rotsub \subseteq \SOthree$, $S\transub \subseteq \Real{3}$ as the projection of $S$ onto $\SOthree$ and $\Real{3}$, respectively. Consider the optimization problems 
\begin{equation}\label{eq:megb-rot}
    \min_{C \in \SOthree} \max_{R \in S\rotsub} \ \rotdist (C,R)
\end{equation}
and 
\begin{equation}\label{eq:megb-tran}
    \min_{c \in \Real{3}} \max_{t \in S\transub} \ \trandist (c,t)
\end{equation}
where $\rotdist(C,R) := \arccos((\trace{C\tran R} - 1)/2)$ and $\trandist(c,t) := \Vert c - t \Vert$ denote the geodesic distance metrics in $\SOthree$ and $\Real{3}$, respectively. Let $C^\star$ (resp. $D^\star$) and $c^\star$ (resp. $d^\star$) be the optimizer (resp. optimum) of~\eqref{eq:megb-rot} and~\eqref{eq:megb-tran}, then the minimum enclosing geodesic ball (\megb) of $S\rotsub$ is 
\bea\label{eq:megb-rot-ball}
\ballrot(C^\star, D^\star) = \cbrace{R \in \SOthree\! \mid\! \rotdist(R,C^\star)\! \leq\! D^\star},\!\!
\eea
and the \megb of $S\transub$ is
\bea\label{eq:megb-tran-ball}
\balltran(c^\star, d^\star) = \cbrace{t \in \Real{3} \mid \trandist(t,c^\star) \leq d^\star}.
\eea
By the minimax nature of problems~\eqref{eq:megb-rot}-\eqref{eq:megb-tran}, we have
$$
S\rotsub \subseteq \ballrot(C^\star, D^\star), \quad S\transub \subseteq \balltran(c^\star,d^\star),
$$
\ie the geodesic balls enclose the original uncertainty sets. See {Fig.~\ref{fig:inner-outer} for an illustration of the \megb for a nonconvex \purse.} 
The center $(C^\star,c^\star) \in \SEthree$ is the best point estimate achieving \emph{minimum} worst-case error bounds of $(D^\star,d^\star)$. We choose to seperate the \megb for $\SOthree$ and $\Real{3}$ because their distance metrics have different units, \ie degrees and meters. A classical result in Riemannian geometry~\cite{afsari11ams-riemannian} states the \megb exists and is unique under mild conditions. However, solving the minimax optimizations~\eqref{eq:megb-rot}\eqref{eq:megb-tran} is known to be intractable~\cite{arnaudon13cg-approximating}. We review several algorithmic choices for solving~\eqref{eq:megb-rot}\eqref{eq:megb-tran}.

{\bf Geodesic Gradient Descent}. A general recipe leveraging the \emph{geodesic convexity} of $f(C):=\max_{R \in S\rotsub} \rotdist(C,R)$ in $C$ can be constructed as follows: start with an initial rotation $C^{(0)}$; then at each iteration (a) find the rotation $R$ in $S\rotsub$ that attains the maximum distance to the current iterate $C^{(k)}$, and (b) move along the geodesic from $C^{(k)}$ to $R$ by a portion of $\gamma \in (0,1)$, which is equivalent to performing {(sub)gradient} descent on $f(C)$. Due to geodesic convexity of $f(C)$, this algorithm is guaranteed to converge with well-known complexity analysis~\cite{zhang16colt-first}. However, computing the maximum distance from $S\rotsub$ to $C^{(k)}$ in step (a) boils down to solving a nonconvex optimization and is in general not implementable. In \supp~we attempt this algorithm with step (a) implemented by the semidefinite relaxation proposed in~\cite{yang23cvpr-object} and show that it does converge but is impractical due to the excessive runtime (near one hour).

Since exactly solving~\eqref{eq:megb-rot}\eqref{eq:megb-tran} using gradient descent is impractical, we turn to approximation algorithms.

{\bf Outer Approximation}. \cite{tang23arxiv-uncertainty} proposed an algorithm that produces a hierarchy of enclosing geodesic balls that asymptotically converge to the \megb from above. This algorithm, however, has two drawbacks. First, the convergence is not detectable and therefore it is unclear how conservative is the computed enclosing geodesic ball compared to the true \megb. Second, this algorithm is also built upon semidefinite relaxations and it is too slow to be practical (see Section \ref{sec:experiments}). 

We focus on inner approximations in this paper.

{\bf Inner Approximation}. If in addition to the outer approximation provided by~\cite{tang23arxiv-uncertainty}, one can produce an \emph{inner} approximation of the \megb, then by comparing the relative ratio between the sizes of the inner and outer approximations, a certficate of approximation tightness can be obtained. The next result provides a straightforward way to compute inner approximations.

\begin{proposition}[Inner Approximation of \megb] \label{prop:inner}
    Let $\hatS\rotsub \subseteq S\rotsub$ and $\hatS\transub \subseteq S\transub$ be nonempty subsets. Consider the optimizations 
\bea\label{eq:inner-megb-rot}
\min_{C \in \SOthree} \max_{R \in \hatS\rotsub} \ \rotdist(C,R)
\eea
and
\bea\label{eq:inner-megb-tran}
\min_{t \in \Real{3}} \max_{t \in \hatS\transub} \ \trandist(c, t),
\eea 
and denote their optimizers (resp. optima) to be $\hatC$ (resp. $\hatD$) and $\hatc$ (resp. $\hatd$). Then $\ballrot(\hatC,\hatD)$ is no greater than the \megb~\eqref{eq:megb-rot-ball} and $\balltran(\hatc,\hatd)$ is no greater than the \megb~\eqref{eq:megb-tran-ball}, \ie 
\bea
\hatD \leq D^\star, \quad \hatd \leq d^\star.
\eea  
\end{proposition}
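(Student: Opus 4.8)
The plan is to exploit the elementary monotonicity of a min--max value under shrinking the inner maximization domain; no geometry of $\SOthree$ beyond boundedness of $\rotdist$ is actually needed. First I would fix an arbitrary center $C \in \SOthree$ and observe that, because $\hatS\rotsub \subseteq S\rotsub$, the supremum of $\rotdist(C,\cdot)$ over $\hatS\rotsub$ is no larger than the supremum over $S\rotsub$; both quantities are finite since $\rotdist$ is bounded by $\pi$ on $\SOthree$, and both are attained as maxima when the respective sets are compact (in any case nonemptiness of $\hatS\rotsub$ guarantees the inner supremum is well defined). Writing $\hat f(C) := \max_{R \in \hatS\rotsub}\rotdist(C,R)$ and $f(C) := \max_{R \in S\rotsub}\rotdist(C,R)$, this yields the pointwise bound $\hat f(C) \le f(C)$ valid for every $C \in \SOthree$.

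Second I would take the infimum over $C \in \SOthree$ on both sides of this pointwise inequality. Since it holds for each individual $C$, it is preserved by the $\min$, giving $\hatD = \min_{C \in \SOthree} \hat f(C) \le \min_{C \in \SOthree} f(C) = D^\star$, which is the first claimed inequality. Existence (and uniqueness) of the minimizers $\hatC$ and $C^\star$ is not needed for the argument, but follows from the classical Riemannian result cited after~\eqref{eq:megb-tran-ball}.

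Third I would repeat the identical argument verbatim for the translational problem, with $\trandist(c,t) = \norm{c-t}$ in place of $\rotdist$ and $\Real{3}$ in place of $\SOthree$. Here the only extra point to note is finiteness of the inner supremum: $S\transub$ is the projection of the compact set $S$, hence bounded, so $\max_{t \in \hatS\transub}\trandist(c,t) \le \max_{t \in S\transub}\trandist(c,t) < \infty$ for every $c$; taking $\inf$ over $c \in \Real{3}$ then gives $\hatd \le d^\star$.

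I do not anticipate a real obstacle: the entire content is monotonicity of $\sup$ and $\inf$, and the only care required is in checking that the inner maxima are well defined and finite (guaranteed by nonemptiness of $\hatS\rotsub,\hatS\transub$ together with boundedness of $\rotdist$ and of $S\transub$). The one thing I would explicitly flag — because it is the reason the proposition is stated for radii and not for sets — is that this value inequality does \emph{not} upgrade to a containment $\ballrot(\hatC,\hatD) \subseteq \ballrot(C^\star,D^\star)$: as illustrated in Fig.~\ref{fig:megb-toy}, the smaller-radius inner ball may be centered differently and poke outside the outer ball. Hence the statement is confined to the inequalities $\hatD \le D^\star$ and $\hatd \le d^\star$.
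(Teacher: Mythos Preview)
Your proposal is correct and follows essentially the same argument as the paper: define $\hat f(C)$ and $f(C)$ as the inner maxima over $\hatS\rotsub$ and $S\rotsub$, use the set inclusion to get the pointwise inequality $\hat f(C)\le f(C)$, and then minimize over $C$. The paper's version is terser (it omits your remarks on finiteness and attainment and defers the non-containment caveat to the surrounding text), but the core logic is identical.
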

\begin{proof}
    In~\eqref{eq:megb-rot} and~\eqref{eq:inner-megb-rot}, for any $C \in \SOthree$ denote
    \bea 
    f(C):=& \max_{R \in S\rotsub} \rotdist(C,R), \nonumber \\
    \hat{f}(C):=& \max_{R \in \hatS\rotsub} \rotdist(C,R). \nonumber
    \eea
    Since $\hatS\rotsub \subseteq S\rotsub$, we have
    $$
    f(C) \geq \hat{f}(C), \quad \forall C \in \SOthree.
    $$
    Therefore,
    $$
     \min_{C\in \SOthree} f(C)= D^\star \geq \hatD = \min_{C \in \SOthree} \hat{f}(C).
    $$
    The proof for $\hatc \leq c^\star$ is similar.
\end{proof}

Proposition~\ref{prop:inner} states that, given any nonempty subsets of the original \purse $S\rotsub$ and $S\transub$, solving the minimax problems~\eqref{eq:inner-megb-rot}-\eqref{eq:inner-megb-tran} leads to valid inner approximations. We remark that here the word ``inner'' is abused in the sense that we can only guarantee the \megb of $\hatS\rotsub$ (resp. $\hatS\transub$) is \emph{smaller} than, but \emph{not enclosed by}, the \megb of the original \purse $S\rotsub$ (resp. $S\transub$). Fig.~\ref{fig:megb-toy} gives such an counterexample. Nevertheless, we chose to use ``inner'' approximation to parallel the ``outer'' approximation in previous works. To make Proposition~\ref{prop:inner} useful, we need to strategically choose the subsets $\hatS\rotsub$ and $\hatS\transub$ such that (i) the minimax problems are easier to solve, and (ii) the inner approximations are as tight (large) as possible. Requirement (i) is easy to satisfy if we choose the subsets as \emph{discrete samples}, \ie we approximate the original \purse as a point cloud of poses. In this case, the minimax problems can be solved exactly and efficiently~\cite{gartner1999fast}, which we will further explain in Section~\ref{sec:algorithm}. To satisfy requirement (ii), we leverage the intuition that the \megb must touch the \purse at a finite number of \emph{contact points}~\cite{xie16thesis-inner,lasserre15mp-generalization,henk12dm-lowner} that lie on the boundary of the \purse $\partial S\rotsub$ and $\partial S\transub$. Therefore, if we can densely sample $\partial S\rotsub$ and $\partial S\transub$, then there is high probability that the contact points will be included and the inner approximation will be close to the true \megb. For example, the \megb of the rectangle in Fig.~\ref{fig:megb-toy} is exactly the same as the \megb of the discrete point cloud containing four corners of the rectangle. Therefore, finding good samples on the boundary of the \purse is crucial to compute tight inner approximations of the \megb.

{We summarize the outer approximation and the inner approximation in Fig.~\ref{fig:inner-outer} for better understanding.}

However, the {\purse} is defined by abstract constraints and there exists no general algorithms that can sample its boundary, \ie points at which some of the inequalities become equalities. In the next section, we will show that, despite being algebraically unstructured, the \purse has simple geometrical structures that can be exploited to sample its boundary.


\section{{Geometric Structures}}
\label{sec:dynamical-system}


\begin{figure}[t]
	\begin{center}
		\begin{tabular}{c}
			\begin{minipage}{0.8\columnwidth}
				\centering
				\includegraphics[width=\textwidth]{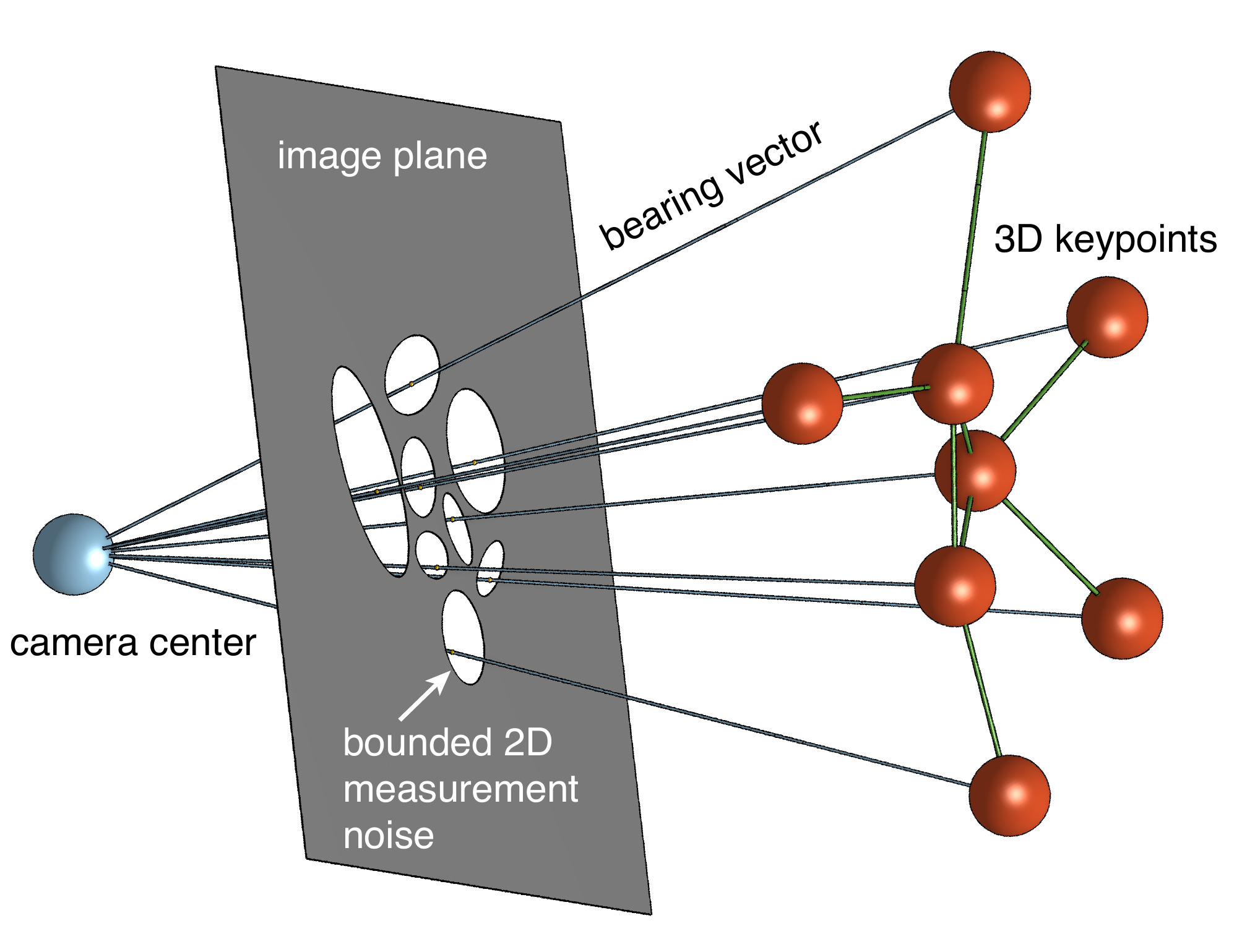}\\
                {\smaller (a) Constrained dynamical system for~\eqref{eq:PURSE2D3D}.}
			\end{minipage}
			\\
			\begin{minipage}{0.8\columnwidth}
				\centering
				\includegraphics[width=\textwidth]{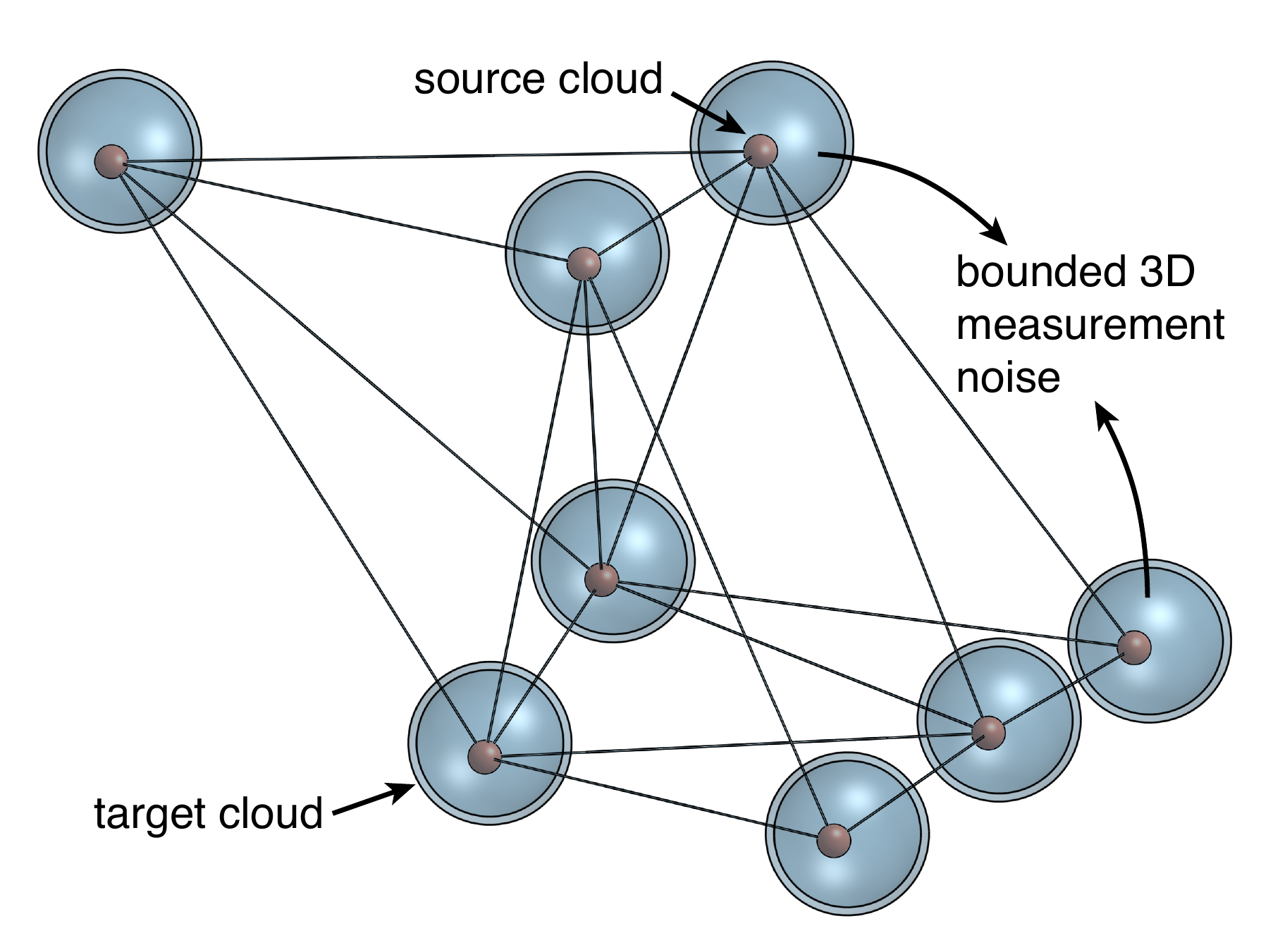}\\
                {\smaller (b) Constrained dynamical system for~\eqref{eq:PURSE3D3D}.}
			\end{minipage}
		\end{tabular}
	\end{center}
	\vspace{-2mm}
	\caption{Constrained dynamical systems whose feasible sets correspond to (a) \eqref{eq:PURSE2D3D} for Example~\ref{ex:2D3D} and (b) \eqref{eq:PURSE3D3D} for Example~\ref{ex:3D3D}.
	\label{fig:purse}}
\end{figure}

Plugging the $g$ functions in~\eqref{eq:g-2D3D}, \eqref{eq:g-3D3D}, and \eqref{eq:g-poseregression} to the definition of {\purse}, we explicitly write down the \purse for Example~\ref{ex:2D3D}
\begin{equation}\label{eq:PURSE2D3D}
    \hspace{-4mm} S = \cbrace{(R,t) \mid \Vert z_i - \Pi(R Z_i + t) \Vert_{\Lambda_i} \leq \beta_i,\forall i}, \tag{\pursetwo}
\end{equation}
for Example~\ref{ex:3D3D}
\begin{equation}\label{eq:PURSE3D3D}
    \hspace{-6mm} S = \cbrace{(R,t) \mid \Vert b_i - Ra_i - t \Vert_{\Lambda_i} \leq \beta_i,\forall i}. \tag{\pursethree}
\end{equation}
{and for Example~\ref{ex:poseregression}
\begin{equation}\label{eq:PURSEregression}
    \hspace{-6mm} S = \cbrace{(R,t) \mid \Vert \vectorize{R} - \vectorize{R_i} \Vert_{\Lambda_i^R} \leq \beta_{i}^R, \Vert t - t_i \Vert_{\Lambda_i^t} \leq \beta_{i}^t,\forall i}. \tag{\purseregression}
\end{equation}
Note that in \eqref{eq:PURSEregression} we separate the constraints in $R$ and $t$ since $\beta_i^R$ and $\beta_i^t$ have different units.}

Although the constraints are complicated in $(R,t)$, they actually define quite simple geometric sets. 

{\bf \pursetwo}. Each constraint in~\eqref{eq:PURSE2D3D} asks $\Pi(RZ_i + t)$ --the reprojection of $Z_i$-- to lie inside an ellipse on the image plane. Fig.~\ref{fig:purse}(a) depicts the geometric constraints: the line from the camera center to each 3D keypoint, \ie the bearing vector, needs to pass through the ellipse of bounded measurement noise. This is precisely a constrained mechanical system where there is (i) a prismatic joint between each 3D keypoint and each bearing vector, and (ii) a spherical joint between each bearing vector and the camera center. The $N$ 3D keypoints form a rigid body (as shown by the connected links in Fig.~\ref{fig:purse}(a)) that can move in 3D space subject to the bearing vectors not passing the 2D ellipses.

{\bf \pursethree}. Each constraint in~\eqref{eq:PURSE3D3D} enforces $R a_i + b_i$ --the rigid transformation of $a_i$-- to lie inside an ellipsoid centered at $b_i$. Fig.~\ref{fig:purse}(b) depicts the geometric constraints: each of the ellipsoid forms a shell fixed in 3D space, and the rigid body formed by 3D points $\{a_i\}_{i=1}^N$ (again showned by the connected links) can freely move in 3D space subject to not escaping the ellipsoidal shells. 

{
{\bf \purseregression}. Each constraint in~\eqref{eq:PURSEregression} enforces $R$ (\resp $t$) to lie inside a ball centered at $R_i$ (\resp $t_i$) with radius $\beta_i^R$ (\resp $\beta_i^t$). Therefore, \purseregression is simply the intersection of $N$ geodesic balls centered at $R_i$ and $t_i,i=1,\dots,N$.
}

This geometric perspective {inspires a natural algorithm to sample the boundary of \purse, \ie we can start with random samples inside the \purse and then add random perturbations to ``walk'' the samples on the $\SEthree$ manifold until they ``hit'' the boundary of the \purse. This is the \nameshort algorithm to be engineered in the next section.} 



\section{The \nameshort Algorithm}
\label{sec:algorithm}

The \nameshort algorithm contains three steps, which is overviewed in Algorithm~\ref{alg:overview} and Fig.~\ref{fig:overview}.


\begin{figure}[t]
	\begin{center}
    \begin{tabular}{cc}
        \hspace{-10mm}	
        \begin{minipage}{0.2\textwidth}
            \centering
            \includegraphics[width=\textwidth]{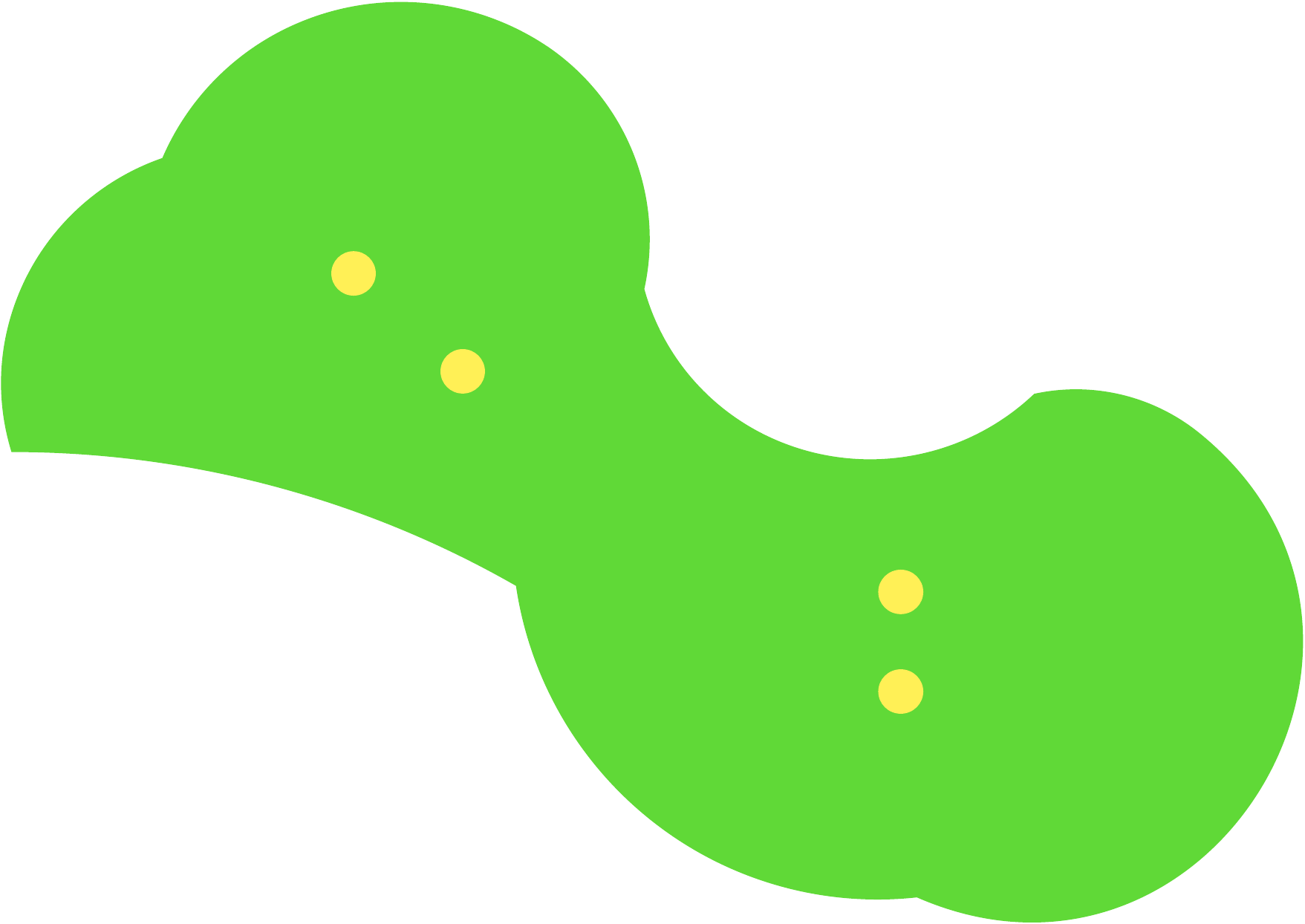}
            {{\smaller (a) Initialize poses $S_0$ through \ransag.}}
        \end{minipage} 
        \begin{minipage}{0.2\textwidth}
            \centering
            \includegraphics[width=\textwidth]{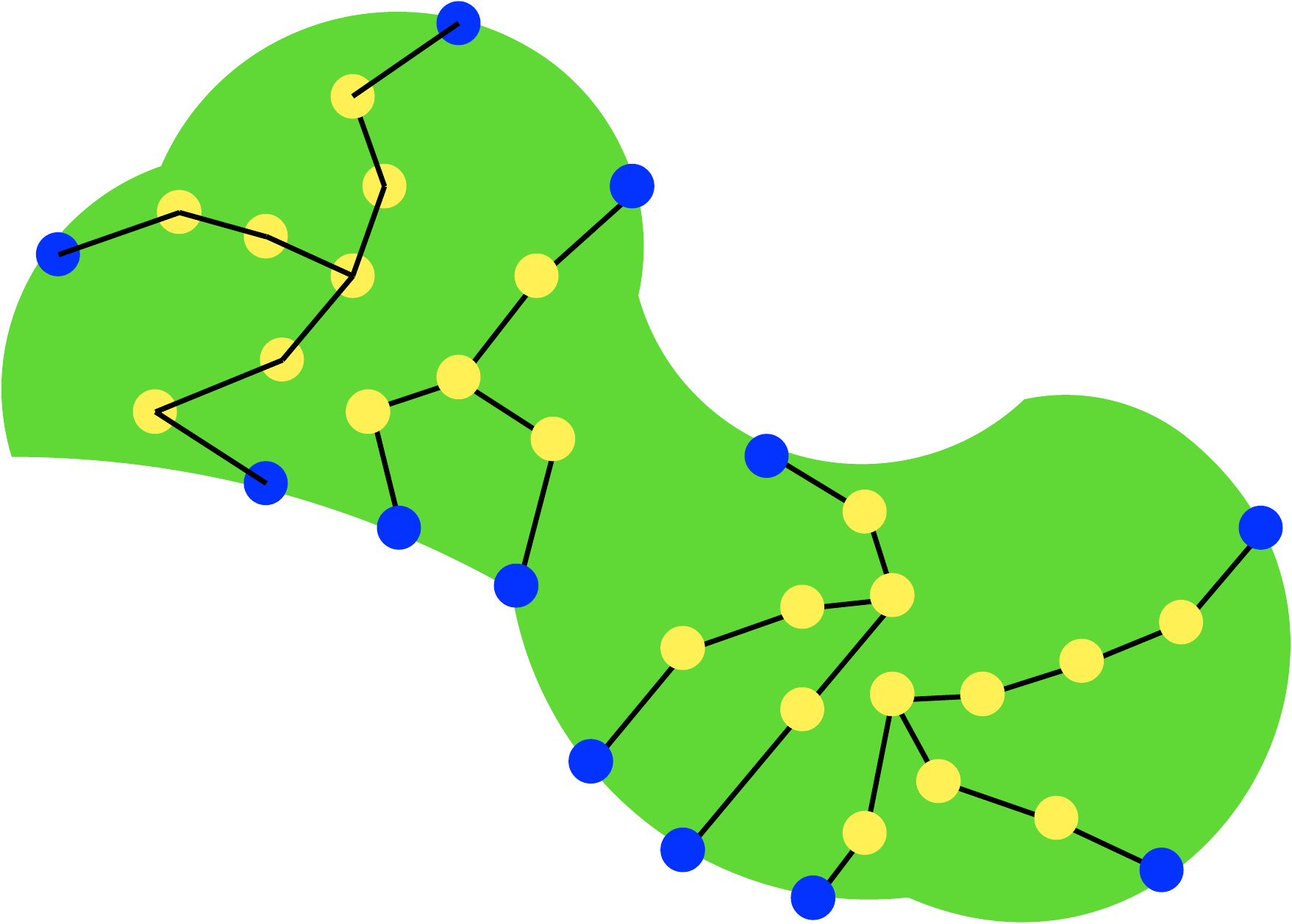}
            {{\smaller (b) Sample boundaries $\hat{\partial S_R}$ and $\hat{\partial S_t}$ for \purse $S$.}}
        \end{minipage} 
    \end{tabular}
    
    \begin{minipage}{0.2\textwidth}
        \vspace{1mm}
        \centering
        \includegraphics[width=\textwidth]{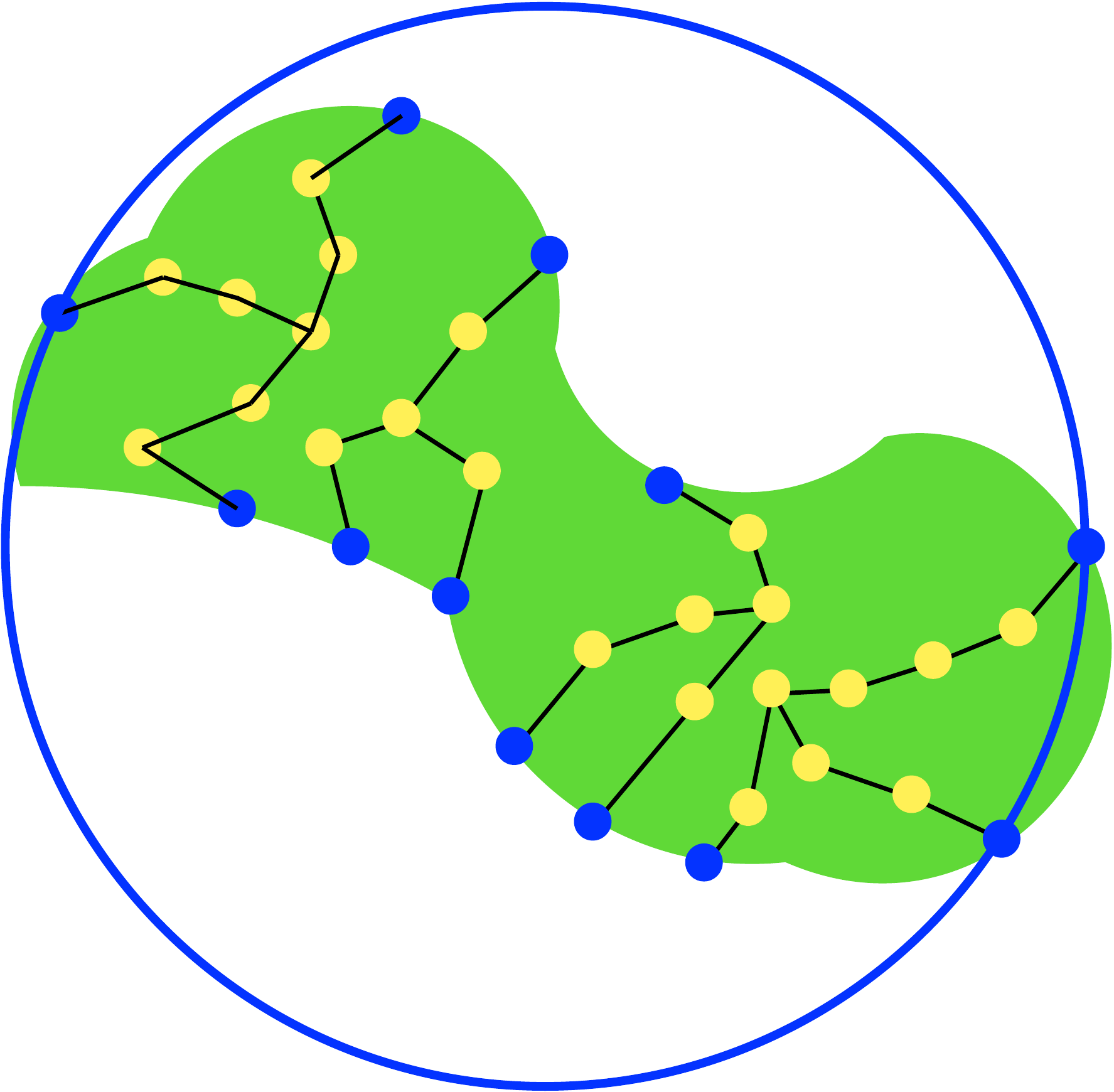}
        {{\smaller (c) Apply miniball algorithm to calculate inner approximation.}}
    \end{minipage}\\

    \end{center}
	\caption{{Overview of the \nameshort algorithm in 2D.}
	\label{fig:overview}}
\end{figure}

\setlength{\textfloatsep}{0pt}%
\begin{algorithm}[t]
\SetAlgoLined
    {\bf Input: } a pose uncertainty set {\purse} $S$; {init sample trial number $N_{\text{sample}}$}; simulation parameters $\mathcal{P}$; \\
    {\bf Output: } miniball center pose $s^*\in \SEthree$ ; radius of rotation miniball $\hatD$; radius of translation miniball $\hatd$\\

    $S_0 \gets$ init_sample($S$, $N_{\text{sample}}$);\\
    $\hat{\partial S_R} \gets$ sample_rotation_boundary($S_0$, $S$, $\mathcal{P}$);\\
    $\hat{\partial S_t} \gets$ sample_translation_boundary($S_0$, $S$, $\mathcal{P}$);\\
    $R^*, \hatD \gets$ minimum_enclosing_geodesic_ball($\hat{\partial S_R}$);\\
    $t^*, \hatd \gets$ minimum_enclosing_ball($\hat{\partial S_t}$);\\
    {\bf return:} $s^* \gets (R^*,t^*)$, $\hatD$, $\hatd$\\

    \caption{\nameshort Overview \label{alg:overview}}
\end{algorithm}
\setlength{\textfloatsep}{0pt}%
\begin{algorithm}[t]
\SetAlgoLined
    {\bf Input: } initial poses $S_0\subset \SEthree$; \purse $S$; base angular velocity magnitude $\omega_0$; time step decay factor $\gamma$; translation perturbation scale $t_p$; random walk trial number $N_W$; iteration number $N_I$; perturbation number $N_P$; optimal perturbation number $N_P^*$; time step scaling number $N_T$;\\
    {\bf Output: } sampled boundary poses $\hat{\partial S_R} \subset \SOthree$ as an inner approximation of $\partial S_R$;\\

    $\bar{R} \gets \text{proj}_{\SOthree}( \sum_{(R_j,*) \in S_0} R_j )$; \label{line:Ravg}\\
    $\bar{t} \gets \frac{1}{\vert S_0 \vert } \sum_{(*,t_j) \in S_0} t_j $; \label{line:tavg} \\
    $\hat{\partial S_R} \gets \emptyset$; \\
    \For{ $(R_0, t_0) \in S_0$}{
        \For { $n \gets 1$ to $N_W$} {
            $\omega \gets$ init_angular_velocity$(R_0, \bar{R}, \omega_0)$; \\
            $R^*\gets R_0, t^* \gets t_0$; \\
            \% Iterate $N_I$ times so that the evolved pose gets close to $\partial S_R$\\
            \For { $i \gets 1$ to $N_I$} { \label{line:iterate}
                \% Randomize $N_P$ translation perturbations
                \For {$j \gets 1$ to $N_P$} {
                    $t_j \gets t^* + $ perturbation($t_p$); \\
                    $d_j \gets \text{dist}(R^*, t_j, \partial S)$; \\
                }
                \% Pick out $N_P^*$ perturbations that drag the pose away from $\partial S$\\
                $\{j_k\}_{k=1}^{N_P^*} \gets$ top_k_indices($\{d_j\}_{j=1}^{N_P}$, $N_P^*$); \\
                \For {$k \gets 1$ to $N_p^*$} {
                    \For {$ m \gets 1$ to $N_T$} {
                        $\Delta T \gets \gamma^{m-1}$ \\
                        $R_{km} \gets $ update_rotation($R^*$, $\omega$, $\Delta T$); \\
                        $I_{km} \gets $ in_purse($R_{km}, t_{j_k}$); \\
                    }
                }
                \% Find the optimal pose that is still in $S$ and has the maximum rotation movement\\
                $m_0 \gets \min\cbrace{m \mid \exists k \text{\ s.t.\ } I_{km} = 1}$; \\
                $k_0 \gets \cbrace{k \mid I_{km_0} = 1}$; \\
                $R^* \gets R_{k_0m_0}$; \\
                $t^* \gets t_{k_0m_0}$; \\
            }
            
            $\hat{\partial S_R} \gets \hat{\partial S_R} \cup R^*$; \\
        }
    }
    {\bf return:} $\hat{\partial S_R}$\\

    \caption{$\partial S_R$ sampler~\label{alg:sampleR}}
\end{algorithm}

\begin{algorithm}[t]
    \SetAlgoLined
    {\bf Input: } initial poses $S_0\subset \SEthree$; \purse $S$; base angular velocity magnitude $\omega_0$; time step decay factor $\gamma$; translation perturbation scale $t_p$; random walk trial number $N_W$; iteration number $N_I$; perturbation number $N_P$; optimal perturbation number $N_P^*$; time step scaling number $N_T$;\\
    {\bf Output: } sampled boundary poses $\hat{\partial S_R} \subset \SOthree$ as an inner approximation of $\partial S_R$;\\
    $\omega \gets$ init_angular_velocity($S_0, \omega_0, N_W$); \\
    $v \gets$ init_center_velocity($S_0, v_0, N_W$); \\
    $(R^*, t^*) \gets$ repeat($S_0, N_W$); \\
    $\Delta T \gets (1, \beta, \beta^2, \cdots, \beta^{N_T-1})$; \\

    \For { $i\gets 1$ to $N_I$} {
        $R \gets$ repeat($R^*, N_P$); \\
        $t \gets$ repeat($t^*, N_P$) + perturbation($t_p, |S_0|N_WN_P$); \\
        $d \gets$ dist($R, t, \partial S$); \\ \label{line:dist_gpu}
        $j \gets$ top_k_indices($d, N_P^*$); \\
        $\tilde{t} \gets$ repeat($t_{[j]}, N_T$); \\
        $\tilde{R}\gets $ update_rotation(repeat($R^*, N_P^*$)$, \omega, \Delta T$); \\
        $I \gets$ in_purse($\tilde{R}, \tilde{t}$); \\ \label{line:in_purse_gpu}
        $R^*, t^* \gets$ find_farthest_rotation($\tilde{R}, \tilde{t}, I$); \\
    }
    {\bf return:} $\hat{\partial S_R}\gets R^*$\\
    \caption{$\partial S_R$ parallel sampler~\label{alg:sampleR_parallel}}
\end{algorithm}

{\bf Step I: Initialize Starting Poses}.
Given a {\purse} $S$, we first sample a set of initial poses $S_0$ from $S$. {For Example~\ref{ex:2D3D} and Example~\ref{ex:3D3D}, we use} an algorithm proposed in~\cite{yang23cvpr-object} called \emph{random sample averaging} (\ransag). The basic idea of \ransag is to leverage the well-established \emph{minimal solvers}~\cite{kukelova08eccv-automatic} to quickly generate candidate poses and check if they belong to \purse. {For Example~\ref{ex:poseregression}, we design a convex-combinational sampler.}
\begin{itemize}
    \item Example~\ref{ex:2D3D}: In each \ransag trial, we randomly select 3 constraints (\ie 3 ellipses in Fig.~\ref{fig:purse}(a)) and find a 2D point in each constraint. We then apply the perspective-3-point (P3P) minimal solver~\cite{gao03pami-p3p} to obtain a candidate pose $s$. We add $s$ into the set $S_0$ if $s$ satisfies \eqref{eq:PURSE2D3D}.
    \item Example~\ref{ex:3D3D}: In each \ransag trial, we randomly select 3 pairs of corresponding points, and apply Arun's Method~\cite{arun1987least} to compute a candidate pose $s$. We put $s$ into $S_0$ if $s$ satisfies~\eqref{eq:PURSE3D3D}.
    {\item Example~\ref{ex:poseregression}: In each trial, we randomly generate a convex combination of $10$ pose hypotheses to compute a candidate pose $s$. We put $s$ into $S_0$ if $s$ satisfies~\eqref{eq:PURSEregression}.}
\end{itemize}

{\bf Step II: Strategic Random Walk}.
In this step, we sample poses that are close to the boundary of the \purse, $\partial S$, starting from the initial samples $S_0 \subset S$ by Step I. The basic idea is to add random perturbations to every pose in $S_0$ until they hit the boundary of the \purse and violate the defining constraints. We engineer several techniques to make this idea more efficient. 

\begin{itemize}
    \item {\bf Walk away from the center}. We first find the average rotation $\bar{R}$ and average translation $\bar{t}$ of $S_0$ (line~\ref{line:Ravg}-\ref{line:tavg}). For each pose $(R_0, t_0)$ in $S_0$, we initialize $N_W$ walks with randomized angular or center velocities while pointing outwards $\bar{R}$ and $\bar{t}$, which explicitly encourages the samples to move away from the center and explore the boundary. To be exact, when sampling the rotation boundary $\partial S_R$, suppose $(u, \theta)$ is the axis-angle representation of the relative rotation between $R_0$ and $\bar{R}$, we initialize the angular velocity $\omega$ as $\omega = \omega_0(u+\tilde{u})$, where $\omega_0$ is angular velocity magnitude and $\tilde{u}\in \R^3$ is a random unit vector. Similarly, when sampling the translation boundary $\partial S_t$, we initialize the center velocity $v$ as $v = v_0\left((t_0-\bar{t})/\Vert t_0-\bar{t} \Vert +\tilde{v}\right)$, where $\tilde{v}\in \R^3$ is a random unit vector. However, in Example~\ref{ex:2D3D}, $S_t$ in the camera projection direction appears significantly longer than the other two directions as shown in Fig. \ref{fig:sample_comp}(a). To balance the geometrical singularity, we apply additional normalization to the translation velocity according to the PCA analysis of $S_0$.
    \item {\bf $\bm{\partial S \neq \partial S\rotsub \times \partial S\transub}$}. A vanilla walking strategy is to perform $N_I$ iterations of rigid body movement with a variety of perturbations and step sizes (reflected in the time step lengths) so that the evolved poses can get close enough to $\partial S$. However, recall that our starting goal is to sample $\partial S\rotsub$ and $\partial S\transub$, not $\partial S$. One may think that $\partial S = \partial S_R \times \partial S_t$, but this is in general \emph{not true}. In fact, $\partial S_R =\partial(S|_\SOthree) \subset(\partial S)|_\SOthree$ and $\partial S_t =\partial(S|_{\R^3})\subset(\partial S)|_{\R^3}$. In words, the boundary of $S\rotsub$ (resp. $S\transub$) is a subset of the boundary of $S$ projected onto $\SOthree$ (resp. $\Real{3}$). This means even if $(R, t)\in \partial S$, it is not guaranteed that $R\in \partial S_R$ and $t\in \partial S_t$. Please refer to \supp~for an intuitive example. With this observation in mind, we sample $\partial S_R$ and $\partial S_t$ separately. The $\partial S_R$ sampler is shown in Algorithm \ref{alg:sampleR}, and the $\partial S_t$ sampler is similar and presented in \supp~for brevity.
    
    When sampling $\partial S_R$ (resp. $\partial S_t$), we fix the angular velocity $\omega$ (resp. center velocity $v$) and add random perturbations to center translation (resp. rotation). In each iteration step, we first find top $N_p^*$ perturbations out of $N_p$ that \emph{drags the pose away from $\partial S$}, \ie maximizing the distance to $\partial S$. The distance between a pose $(R, t)\in \SEthree$ and $\partial S$ is defined heuristically as 
    \begin {equation}\label{eq:partial-s-dist}
        \text{dist}(R, t, \partial S) = \min_i \left( \beta_i - \Vert g_i((R, t), y_i) \Vert_{\Lambda_i}\right),
    \end{equation}
    where $g_i$ is the estimation error defined in \eqref{eq:g-2D3D}-\eqref{eq:g-poseregression}.
    Then we apply different step scales $\gamma^{m-1}, m=1, \cdots, N_T$ to the pose after each perturbation. Among all the $N_p^* N_T$ poses, we find the one that is still in $S$ and has the maximum rotation (resp. translation) movement to update the optimal pose $(R^*, t^*)$. Finally, after $N_I$ iterations, we add the optimal pose to $\hat{\partial S_R}$. In this way, we can sample $|S_0|N_W$ poses that are close enough to $\partial S_R$ (resp. $\partial S_t$) from different approaching directions. Algorithm~\ref{alg:sampleR} summarizes the strategic random walk that samples $\partial S\rotsub$.
    \item {\bf Parallelization}. To improve the sampling speed, we implement Algorithm~\ref{alg:sampleR} with parallel programming using CuPy,\footnote{\url{https://cupy.dev/}, a python interface for NVIDIA CUDA library.} presented in Algorithm~\ref{alg:sampleR_parallel}. We notice that $(R^*, t^*)$ is the only variable that is updated consecutively in each iteration, while all the other loops are independent and can be executed in parallel. Specifically, there are $|S_0|N_W$ walks in total with different initial velocities. In each iteration, the $N_P$ perturbations can be applied to the optimal poses in parallel, from which we can find $N_P^*$ optimal perturbations that maximize the distance to $\partial S$ according to~\eqref{eq:partial-s-dist}. Next, we apply all $N_T$ time step scales in a roll and broadcast to $N_P^*$ optimal perturbations for object movement simulation. At the end of each iteration, we update the optimal poses $(R^*, t^*)$ with the farthest movements. With in-depth analysis of the sampling algorithm, the steps with heaviest computation are~\ref{line:dist_gpu} and~\ref{line:in_purse_gpu}. In these steps, $|S_0|N_W N_P$ and $|S_0|N_W N_P^*N_T$ poses\footnote{In a typical case, $|S_0|\approx 100, N_W=2, N_P=150, N_P^*=10, N_T=15$, thus $|S_0|N_WN_P=|S_0|N_WN_P^*N_T\approx 3\times 10^4$.} are checked, while checking each pose requires to calculate \eqref{eq:PURSE2D3D}-\eqref{eq:PURSEregression}. Executing these steps in parallel significantly reduces the time cost compared with CPU based implementation.
\end{itemize}

{\bf Step III: Miniball}.
In this step, after acquiring a set of rotations $\hat{\partial S_R}$ and translations $\hat{\partial S_t}$ close to $\partial S_R$ and $\partial S_t$ respectively, we calculate the \megb of $\hat{\partial S_R}$ and $\hat{\partial S_t}$. The \emph{miniball} algorithm\footnote{\url{https://people.inf.ethz.ch/gaertner/subdir/software/miniball.html}} introduced in~\cite{gartner1999fast} provides a fast implementation that can exactly compute the \megb of a point cloud in Euclidean space. This means we can directly apply it to solve~\eqref{eq:inner-megb-tran} and compute $\balltran(\hatc,\hatd)$ because $\hat{\partial S_t} \subset \Real{3}$ lives in an Euclidean space. However, we cannot directly apply miniball to $\hat{\partial S_R}$ because it lives in $\SOthree$. Fortunately, \cite[Proposition A12, A13]{tang23arxiv-uncertainty} points out, under mild conditions, computing the \megb of a set on $\SOthree$ is equivalent to computing the \megb of the corresponding set on the space of unit quaternions, which in turn is equivalent to simply treating the unit quaternions as points in $\Real{4}$. Therefore, we can apply the same algorithm after embedding the rotations in $\Real{4}$ and solve~\eqref{eq:inner-megb-rot} to compute $\ballrot(\hatC,\hatD)$. Thanks to the efficiency of the miniball algorithm, the runtime of this step is negligible compared to \ransag and the strategic random walk.

In summary, the \nameshort algorithm takes the \purse as input, and outputs $\ballrot(\hatC,\hatD)$ and $\balltran(\hatc,\hatd)$, serving as inner approximations of the true \megb. By taking advantage of efficient GPU parallel computing, \nameshort can be executed around 0.2 seconds with satisfying performance and is suitable for real-time applications.


\section{Experiments}
\label{sec:experiments}

We test \nameshort on {three} real datasets, the \lmo dataset~\cite{brachmann14eccv-lmo} for object pose estimation (Example~\ref{ex:2D3D}), the \threedmatch dataset~\cite{zeng17cvpr-3dmatch} for point cloud registration (Example~\ref{ex:3D3D}) {and the \lm dataset~\cite{hinterstoisser2013model} for pose regression (Example~\ref{ex:poseregression})}. Since \nameshort takes in the {\purse} description, we briefly describe how we process the datasets to obtain \purse descriptions.

\begin{itemize}
    \item \lmo dataset. The \lmo test dataset contains 1214 images each capturing 8 different objects on a table and the goal here to estimate the 6D pose of each object while quantifying uncertainty. We leverage the pretrained semantic keypoint detector~\cite{pavlakos17icra-6D} to detect 2D semantic keypoints $\{z_i\}_{i=1}^N$ of each object that are matched to the manually labeled 3D keypoints $\{Z_i\}_{i=1}^N$ ($N$ is around 10). To calibrate the uncertainty of the 2D keypoint detections, we follow the conformal prediction approach in~\cite{yang23cvpr-object}, which produces descriptions of pose uncertainty in the form of~\eqref{eq:PURSE2D3D}. 
    \item \threedmatch dataset. The \threedmatch test dataset includes 1623 pairs of point clouds and the goal here is to estimate the 6D rigid transformation between each pair of clouds while quantifying uncertainty. We leverage the pretrained \dgr network~\cite{choy2020deep} that detects salient keypoint matches $\{a_i,b_i \}_{i=1}^K$ between each pair of clouds. To calibrate the uncertainty of the keypoint matches, we design a similar conformal prediction procedure as~\cite{yang23cvpr-object} with $400$ pairs in the calibration dataset. The details of the conformal prediction design are presented in \supp. After conformal prediction, pose uncertainty is given in the form of~\eqref{eq:PURSE3D3D} with $N=50$.
    \item {\lm dataset. We use 15787 images from 13 objects in the \lm dataset. The goal here is to estimate the 6D pose of each object while quantifying the uncertainty. We leverage the pretrained network FoundationPose~\cite{wen24cvpr-foundationpose} to directly output pose hypotheses. To calibrate the uncertainty of the output poses, we design a conformal prediction procedure leveraging the top 10 pose hypotheses and their scores. After conformal prediction, the pose uncertainty set is given in the form of~\eqref{eq:PURSEregression}.}
\end{itemize}

\subsection{Effectiveness of the Boundary Sampler}

A key idea in \nameshort is that it tries to apply strategic random walks to sample the boundary of the \purse. Here we investigate how effective is this boundary sampling strategy on two examples. To do so, we compare \nameshort with pure \ransag, \ie sampling inside \purse without any motivation to sample the boundary.
We choose one example in the \lmo dataset and one example in \threedmatch dataset. We compare {\nameshort} with pure \ransag for 100000 trials, {where \nameshort runs approixmately $0.2$ second and \ransag runs 1 second.}
Fig.~\ref{fig:sample_comp} plots the sampling results of the two algorithms. We use the stereographic projection to visualize the rotation samples (\ie we convert rotations to unit quaternions and use stereographic projection to represent unit quaternions on a 3D sphere). {Note that although \nameshort generates fewer samples,} we can clearly see that the samples from {\nameshort} are more spread-out than those from \ransag only. For this reason, the geodesic balls enclosing the {\nameshort} samples are larger than the geodesic balls enclosing the \ransag samples. 

\begin{figure}[h]
	\begin{center}
		\begin{tabular}{cc}
            \hspace{-10mm}	
			\begin{minipage}{0.2\textwidth}
				\centering
				\includegraphics[width=0.9\textwidth]{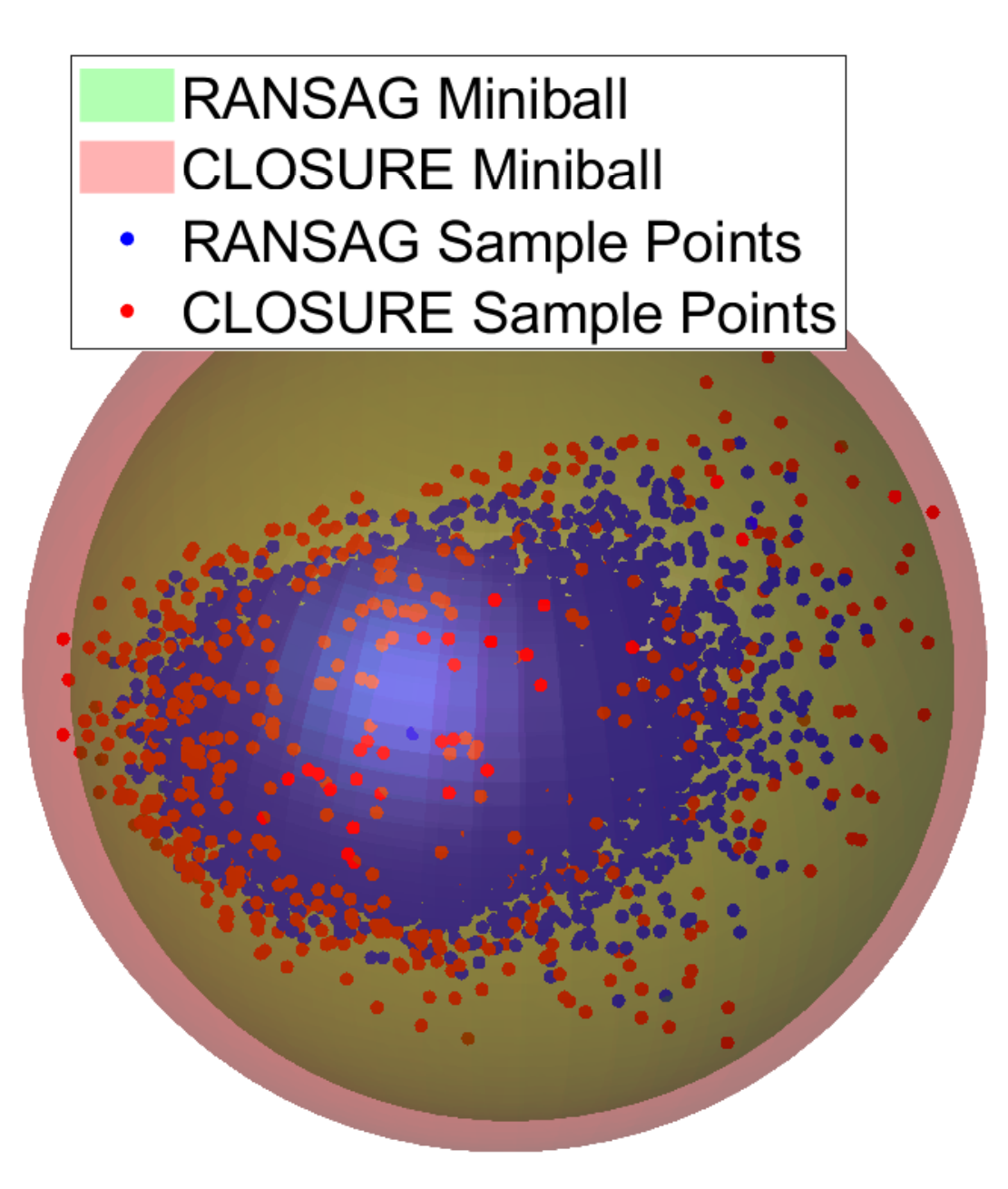}
			\end{minipage}
			&\hspace{-6mm}
			\begin{minipage}{0.2\textwidth}
				\centering
				\includegraphics[width=\textwidth]{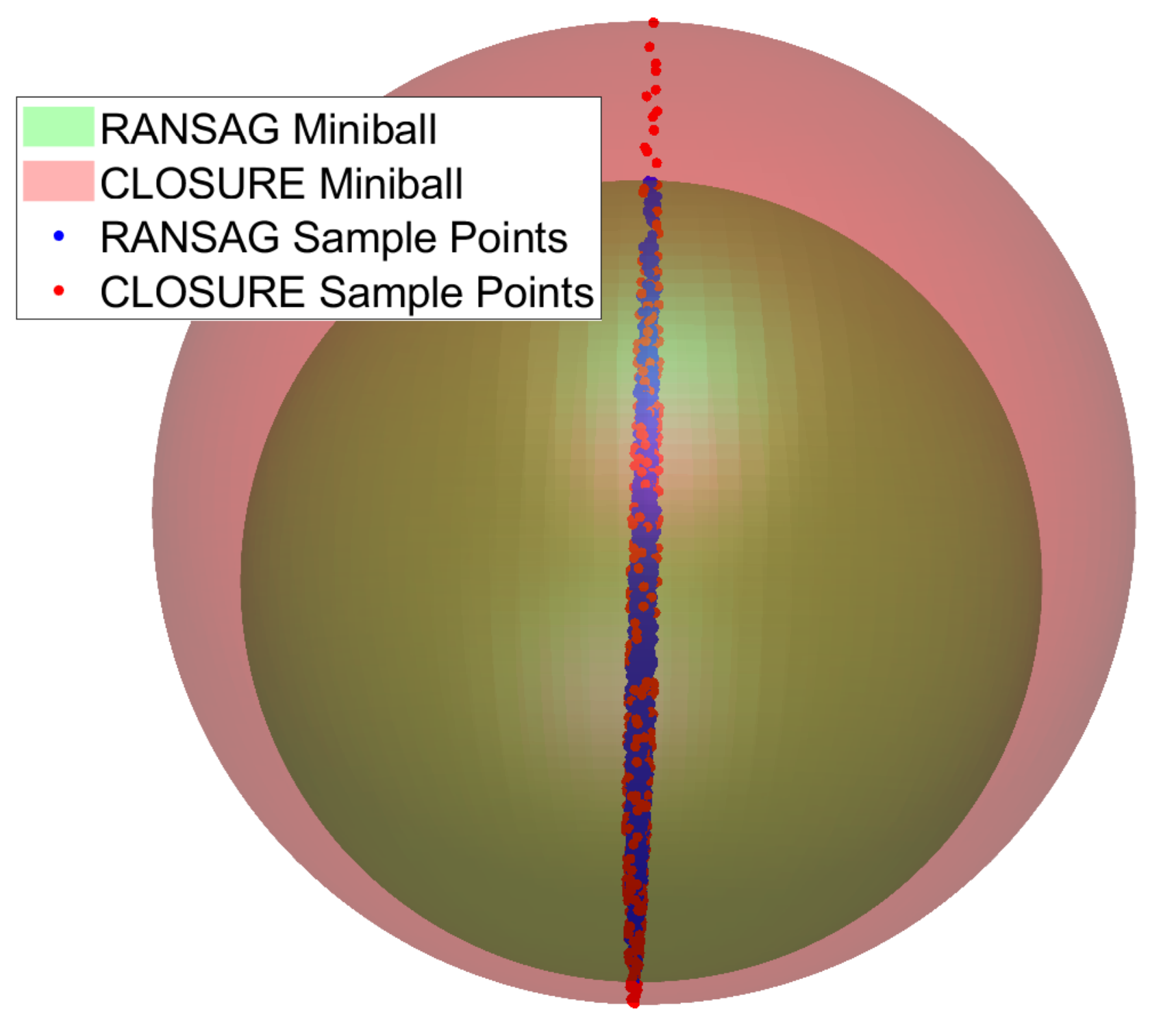}
			\end{minipage}\\
			\multicolumn{2}{c}{{\small (a) An example from \lmo.}}\\
			\hspace{-10mm}	
			\begin{minipage}{0.2\textwidth}
				\centering
				\includegraphics[width=\textwidth]{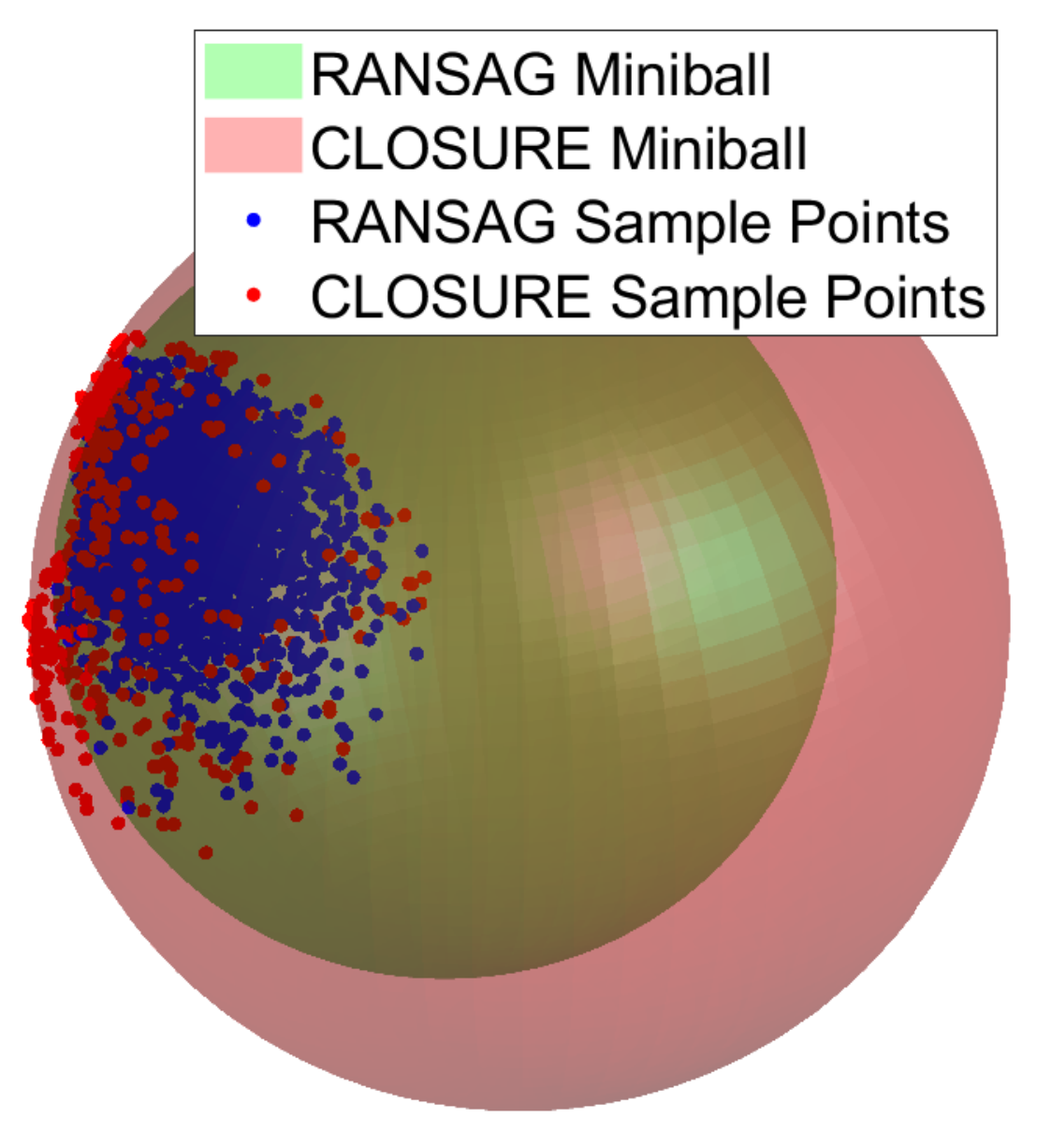}
			\end{minipage}
			&\hspace{-6mm}
			\begin{minipage}{0.2\textwidth}
				\centering
				\includegraphics[width=\textwidth]{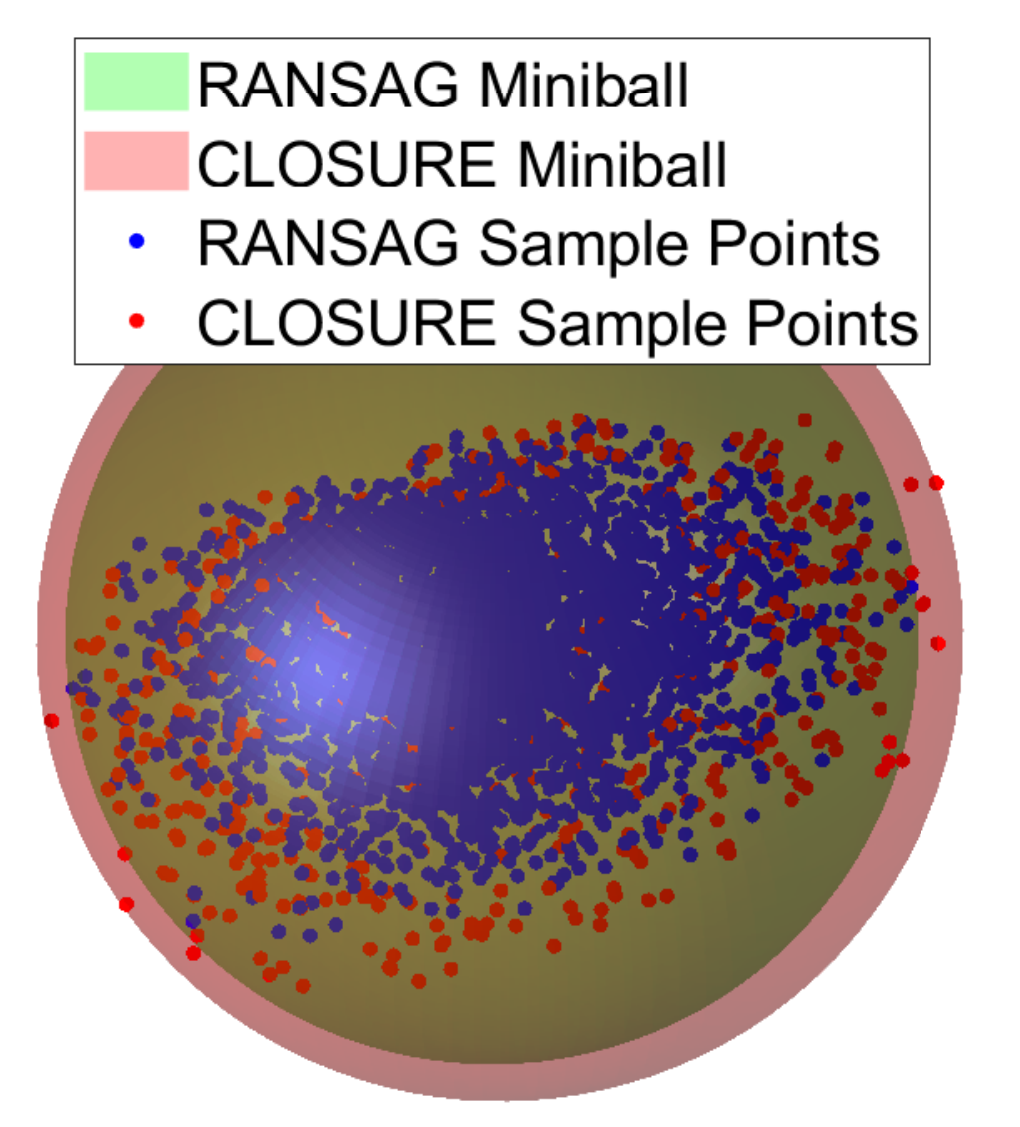}
			\end{minipage}\\
			\multicolumn{2}{c}{{\small (b) An example from \threedmatch.}}
        \end{tabular}
	\end{center}
	\vspace{-2mm}
	\caption{{Effectiveness of \purse boundary sampler on (a) an example from \lmo, and (b) an example from \threedmatch. Left: rotation, Right: translation.} \label{fig:sample_comp}}
\end{figure}

\subsection{Certificate of Approximation Tightness}

We then perform a large-scale experiment of uncertainty quantification on all the test samples of \lmo, \threedmatch, and \lm (excluding the calibration samples), where we compare the performance of \nameshort with other baseline algorithms. Our algorithm \nameshort runs on a single CPU with a RTX 3090 GPU, while the other baselines run on a workstations with 128 x AMD Ryzen Threadripper PRO 5995WX 64-Cores CPUs. Specifically, \ransag and \ransagfmincon run with MATLAB Parallel Computing Toolbox.

\subsubsection{\lmo} 
We implement two versions of \nameshort
\begin{itemize}
    \item \nameshort: the default \nameshort algorithm with parameters {$N_{\text{sample}} = 1500$}, $\omega_0 = 1$, $v_0 = 2$, $\gamma = 0.5$, $R_p = 0.2$, $t_p = 0.1$, $N_W = 2$, $N_I = 5$, $N_p=150$, $N_T = 15$.
    \item \nameacc: the more accurate algorithm with parameters {$N_{\text{sample}} = 1500$}, $\omega_0 = 1$, $v_0 = 2$, $\gamma = 0.5$, $R_p = 0.2$, $t_p = 0.1$, $N_W = 10$, $N_I = 10$, $N_p=150$, $N_T = 15$.
\end{itemize}
We implement the following baselines
\begin{itemize}
    \item \ransag: pure \ransag sampling with the miniball algorithm. We choose the number of pure \ransag trials so that its runtime is roughly one second on the workstation with parallel workers.
    \item \ransagfmincon: to encourage the \ransag initial samples to walk to the boundary of \purse, we explicitly solve an optimization problem
    \bea \label{eq:fmincon-problem}
    \text{Problem}(i) = \max_{(R,t)\in S} \|z_i - \Pi(RZ_i + t)\|_{\Lambda_i}
    \eea  
    where $z_i$ is the detected 2D key point, $Z_i$ is the 3D key point, and maximizing the objective explicitly pushes the rigid body system in Fig.~\ref{fig:purse}(a) to hit the boundary of the $i$-th ellipse. The algorithm proceeds as in Algorithm~\ref{alg:fmincon-miniball} and we solve the optimization problems using Matlab function \texttt{fmincon} with parallel workers.
    \item \grcc: the algorithm from~\cite{tang23arxiv-uncertainty} that computes an outer approximation of the \megb of \purse with relaxation order $\kappa = 2$ for translation and $\kappa = 3$ for rotation. It provides an outer approximation of the \megb (\ie $\bar{D}$ and $\bar{d}$), which is used to calculate $\eta_R$ and $\eta_t$ for other methods.
\end{itemize}

We investigate the runtime of these five algorithms, and more importantly, study how tight the inner and outer approximations of the \megb are. Specifically, \grcc is the only algorithm that produces upper bounds on the size of the true \megb. Let $\bar{D}$ and $\bar{d}$ be the radii of the enclosing geodesic balls produced by \grcc on $\SOthree$ and $\Real{3}$, respectively. The other four algorithms all produce lower bounds on the size of the true \megb. Let $\hatD$ and $\hatd$ be the radii of the geodesic balls produced by these inner approximation algorithms. Define the \emph{relative ratios} 
\bea\label{eq:relative-ratio}
\eta\rotsub := \frac{\hatD}{\bar{D}}\leq 1, \quad \eta\transub := \frac{\hatd}{\bar{d}}\leq 1.
\eea
Clearly, $\eta\rotsub = 1$ (resp. $\eta\transub = 1$) certifies that the true \megb has been found, and the closer $\eta\rotsub, \eta\transub$ are w.r.t $1$ the tighter the inner approximations are. Therefore, $\eta\rotsub,\eta\transub$ produce certificates of approximation tightness.


\begin{algorithm}[t]
\SetAlgoLined
    {\bf Input: } Calibration dataset $\calD$, a pose uncertainty set {\purse} $S$; \ransag trial number {$N_{\text{sample}}$}; key point search index set $\calI$; \\
    {\bf Output: } sampled boundary poses $\hat{\partial S} \subset \SEthree$ in $\purse$; center pose $s^*\in \SEthree$; radius of minimum enclosing geodesic ball $\hatD$; radius of minimum enclosing ball $\hatd$\\

    $S_0 \gets$ \ransag($S$, {$N_{\text{sample}}$});\\
    $\hat{\partial S} \gets S_0$;\\
    \For{i $\in \calI$}{
        \For{$(R_0, t_0) \in S_0$}{
            $(R,t)\gets$ fmincon(Problem($i$), init:$(R_0, t_0)$);\\
            \If{$(R,t)\in S$}{
                $\hat{\partial S} \gets \hat{\partial S} \cup \{(R,t)\}$;
            }
        }
    }
    $R^*, \hatD \gets$ minimum_enclosing_geodesic_ball($(\hat{\partial S})|_{\SOthree}$);\\
    $t^*, \hatd \gets$ minimum_enclosing_ball($(\hat{\partial S})|_{\R^3}$);\\
    {\bf return:} $\hat{\partial S}$, $s^* \gets (R^*,t^*)$, $\hatD$, $\hatd$\\

    \caption{\ransagfmincon \label{alg:fmincon-miniball}}
\end{algorithm}

{\bf Results}. Table~\ref{table:2D3D-performance} summarizes the average runtime and the relative rotation and translation ratios. We observe that (i) \nameshort runs below $0.2$ seconds on average and attains $\eta\rotsub > 0.92, \eta\transub > 0.97$, making it the method of choice for real-time uncertainty quantification; (ii) with more runtime budget, \nameacc runs below 1 second but boosts the relative ratios to almost $1$; (iii) \ransag, without the strategic random walks to sample the boundary, runs slower with worse relative ratios, again showing the value of the \purse boundary sampler. (iv) \ransagfmincon attains the best $\eta\rotsub$ but the worst $\eta\transub$, {The reason is two-fold. First, from the set shape perspective, using the keypoint-based algorithm leads to information loss along the $z$-direction (depth direction), which causes extremely large uncertainty of the set $S_t$ along the $z$-direction. As depicted in Fig.~\ref{fig:sample_comp}, $S_t$ shapes like an extremely thin ellipsoid, which makes sampling the translation points near its two ends very difficult. 
Second, from the algorithm perspective, the \ransagfmincon algorithm is designed to push one of the keypoints to the boundary using any possible means. (a) In this sense, we can only guarantee that the pose $(R,t)$ is on the boundary of the \purse set. However, $\partial S \neq \partial S_R \times \partial S_t$. And this intuition isn't encoded in the \ransagfmincon algorithm. (b) Because the \ransagfmincon algorithm only seeks one way to push the key point to the boundary. So in general it will find the simplest way to push the key point to the boundary. Thus, it's easier to do more rotation than to use the translation at two ends of $S_t$. This makes the algorithm perform the worst in translation compared to the other algorithms.}
(v) The \grcc algorithm, despite being the only algorithm that can produce outer approximations, is too slow for real-time applications. We believe the results in Table~\ref{table:2D3D-performance} show that \nameshort and \nameacc can be real-time alternatives of the \grcc because the amount of underestimation of the uncertainty is very minor.

\begin{table}[h]
    \centering
    \caption{Performance of 5 methods on \lmo (Example~\ref{ex:2D3D}) \label{table:2D3D-performance}}
    \vspace{-2mm}
    \begin{tabular}{|c|c|c|c|}
        \hline
                    & avg runtime (seconds) & avg $\eta\rotsub$  & avg $\eta\transub$ \\
        \hline
        \nameshort & $\bm{0.1879}$ & $0.9280$ & $0.9781$ \\
        \hline
        \nameacc & $0.7563$ & $0.9660$ & $\bm{0.9918}$ \\
        \hline
        \ransag & $1.8368$ & $0.8916$ & $0.8791$ \\
        \hline
        \ransagfmincon & $6.1095$ & $\bm{0.9726}$ & $0.8119$ \\
        \hline
        \grcc & $74.9724$ & $\backslash$ & $\backslash$ \\
        \hline
    \end{tabular}
\end{table}

\subsubsection{\threedmatch}

We implement two versions of \nameshort
\begin{itemize}
    \item \nameshort: {$N_{\text{sample}} = 1500$}, $\omega_0 = 0.5$, $v_0 = 0$, $\gamma = 0.5$, $R_p = 0$, $t_p = 0.1$, $N_W = 2$, $N_I = 5$, $N_p=150$, $N_T = 15$.
    \item \nameacc: {$N_{\text{sample}} = 1500$}, $\omega_0 = 0.5$, $v_0 = 0$, $\gamma = 0.5$, $R_p = 0$, $t_p = 0.1$, $N_W = 20$, $N_I = 10$, $N_p=150$, $N_T = 15$
\end{itemize}

We implement the same \ransag and \grcc baselines as in the \lmo case. {In the new implementation of the \ransagfmincon algorithm under \pursethree settings, we leverage the constraint pruning technique in \cite{tang23arxiv-uncertainty} to speed up the computation of the algorithm. 

Since the number of constraints in \pursethree is much larger than that in \pursetwo, we limit the $|\calI|$ to $50$ (Setting I), and $80$ (Setting II), which will lead to faster computation but worse ratio than using all of the \ransag results.} We then investigate the runtime of these algorithms as well as the certifcates of approximation tightness as defined in~\eqref{eq:relative-ratio}.

{\bf Results}. Table~\ref{table:3D3D-performance} summarizes the average runtime and the relative rotation and translation ratios. We observe that (i) \nameshort runs below $0.2$ seconds on average and attains $\eta\rotsub > 0.91, \eta\transub > 0.93$, which benefits real-time uncertainty quantification. (ii) \nameacc consumes more time but achieves the most accurate relative ratios. (iii) \ransag runs slow and has the worst relative ratios. (iv) \grcc is too slow for real-time applications. (v) {\ransagfmincon algorithm fails to achieve tight approximation ratios in both rotation and translation even with extremely high runtime. We suspect there are two reasons for this. (a) \scenario{fmincon} runs slow when the number of constraints is large, which makes the result worse when runtime is limited. (b) \ransagfmincon pushes one of the keypoints to $\partial S$, but it doesn't guarantee to touch $S_R$ and $S_t$. Under \pursethree settings, the number of keypoints is so large that it's more difficult to find the one that touches $\partial S_R$ and $\partial S_t$.}

\begin{table}[h]
    \centering
    \caption{Performance of {6} methods on \threedmatch (Example~\ref{ex:3D3D}) \label{table:3D3D-performance}}
    \vspace{-2mm}
    \begin{tabular}{|c|c|c|c|}
        \hline
                    & avg runtime (seconds) & avg $\eta\rotsub$ & avg $\eta\transub$ \\
        \hline
        \nameshort & $\bm {0.1774}$ & $0.9140$ & $0.9364$ \\
        \hline
        \nameacc & $1.3362$ & $\bm {0.9318}$ & $\bm {0.9563}$ \\
        \hline
        \ransag & $1.2820$ & $0.8480$ & $0.8689$ \\
        \hline
        {\ransagfmincon I} & {$32.0642$} & {$0.7254$} & {$0.7518$} \\
        \hline
        {\ransagfmincon II} & {$43.2750$} & {$0.8323$} & {$0.8600$} \\
        \hline
        \grcc & $147.8408$ & $\backslash$ & $\backslash$ \\
        \hline
    \end{tabular}
\end{table}

{
    \subsubsection{\lm}

    We implement \nameshort using the parameters listed as follows:
    \begin{itemize}
        \item \nameshort: {$N_{\text{sample}} = 200$}, $\omega_0 = 0.5$, $v_0 = 0$, $\gamma = 0.5$, $R_p = 0$, $t_p = 0.1$, $N_W = 20$, $N_I = 5$, $N_p=150$, $N_T = 10$.
    \end{itemize}

    We implement \grcc as in the \lmo and \threedmatch cases.  We then investigate the runtime of the \nameshort algorithm as well as the certifcates of approximation tightness as defined in~\eqref{eq:relative-ratio}. The random sampling baseline (like \ransag) is not included here because we only sample from the ``convex hull'' defined by pose hypotheses, so it is unlikely the random samples can possibly filled the whole space.

    {\bf Results}. Table~\ref{table:foundation-performance} summarizes the average runtime and the relative rotation and translation ratios. We observe that (i) \nameshort runs below $0.3$ seconds on average and attains $\eta_R > 0.96, \eta_t > 0.99$. (ii) \grcc is relatively faster than previous examples (Example~\ref{ex:2D3D} and Example~\ref{ex:3D3D}) because: (a) we separately compute the bound for rotation and translation, which reduces the size of the optimization problem in \grcc, (b) the constraints are simpler. However, it's still too slow for real-time applications.

    \begin{table}[h]
        \centering
        \caption{{Performance of 2 methods on \lm (Example~\ref{ex:poseregression})} \label{table:foundation-performance}}
        \vspace{-2mm}
        \begin{tabular}{|c|c|c|c|}
            \hline
                        & avg runtime (seconds) & avg $\eta\rotsub$  & avg $\eta\transub$ \\
            \hline
            \nameshort & $\bm{0.2768}$ & $0.9659$ & $0.9909$ \\
            \hline
            \grcc & $6.5405$ & $\backslash$ & $\backslash$ \\
            \hline
        \end{tabular}
    \end{table}

}

\subsection{Time Decomposition}

We implement two versions of \nameshort
\begin{itemize}
    \item \nameshort: {$N_{\text{sample}} = 1500$}, $\omega_0 = 0.5$, $v_0 = 0$, $\gamma = 0.5$, $R_p = 0$, $t_p = 0.1$, $N_W = 2$, $N_I = 5$, $N_p=150$, $N_T = 15$.
    \item \nameacc: {$N_{\text{sample}} = 1500$}, $\omega_0 = 0.5$, $v_0 = 0$, $\gamma = 0.5$, $R_p = 0$, $t_p = 0.1$, $N_W = 20$, $N_I = 10$, $N_p=150$, $N_T = 15$
\end{itemize}

We analyze the time decomposition of \nameshort. Our algorithm consists of three steps as described in Section~\ref{sec:algorithm}: (1) \ransag sampling, (2) strategic random walk, (3) miniball. The result of the time decomposition in Example~\ref{ex:2D3D} and Example~\ref{ex:3D3D} are shown in Table~\ref{table:time-decomposition}. We see that for \nameshort, the time of the \ransag sampling is comparable to that of strategic random walk, and the miniball time is negligible.

\begin{table}[h]
    \centering
    \caption{Time decomposition of \nameshort and \nameacc.
    \label{table:time-decomposition}
    }
    \vspace{-3mm}
    \begin{tabular}{|c|c|c|c|}
        \hline
        & \ransag & Random walk & Miniball  \\
        \hline
        Example~\ref{ex:2D3D} \nameshort & $0.0900$ & $0.0915$ & $0.0064$ \\
        \hline
        Example~\ref{ex:2D3D} \nameacc & $0.0869$ & $0.6235$ & $0.0459$ \\
        \hline
        Example~\ref{ex:3D3D} \nameshort & $0.0883$ & $0.0838$ & $0.0053$ \\
        \hline
        Example~\ref{ex:3D3D} \nameacc & $0.0874$ & $1.1997$ & $0.0491$ \\
        \hline

    \end{tabular}
\end{table}

\subsection{Ablation: Sensitivity to Parameters}

We study how the parameters of \nameshort impact its efficiency, \ie how the performance (in terms of relative ratios) and runtime changes with the input parameter set. We present the results for sampling the rotation boundary $\partial S_R$ of~\eqref{eq:PURSE3D3D} and refer the interested reader to \supp~for more results.

Specifically, though a large parameter set is available to tune, the parameters that have the largest impact on the efficiency of \nameshort are iteration step number $N_I$ and number of the parallel workers $N_W, N_T, N_P, N_P^*$. We initialize the parameter set with $N_I=5, N_W=20, N_T=15, N_P=150, N_P^*=10, \omega=0.5, \beta=0.5, t_p=0.1$. When setting new values to each parameter, the rest of the parameters are kept the same as the initial values. During the study, we fix the \ransag output of each experiment and compare the performance of different parameter values. The results are shown in Fig.~\ref{fig:param-sweep}, which demonstrate that \nameshort's performance is quite robust to parameter tuning. For all the parameter choices, \nameshort achieves relative ratio $\eta\rotsub > 90\%$ and its runstime is around 1 second.


\begin{figure}[t]
	\begin{center}
		\begin{tabular}{c}
			\begin{minipage}{0.8\columnwidth}
				\centering
				\includegraphics[width=\textwidth]{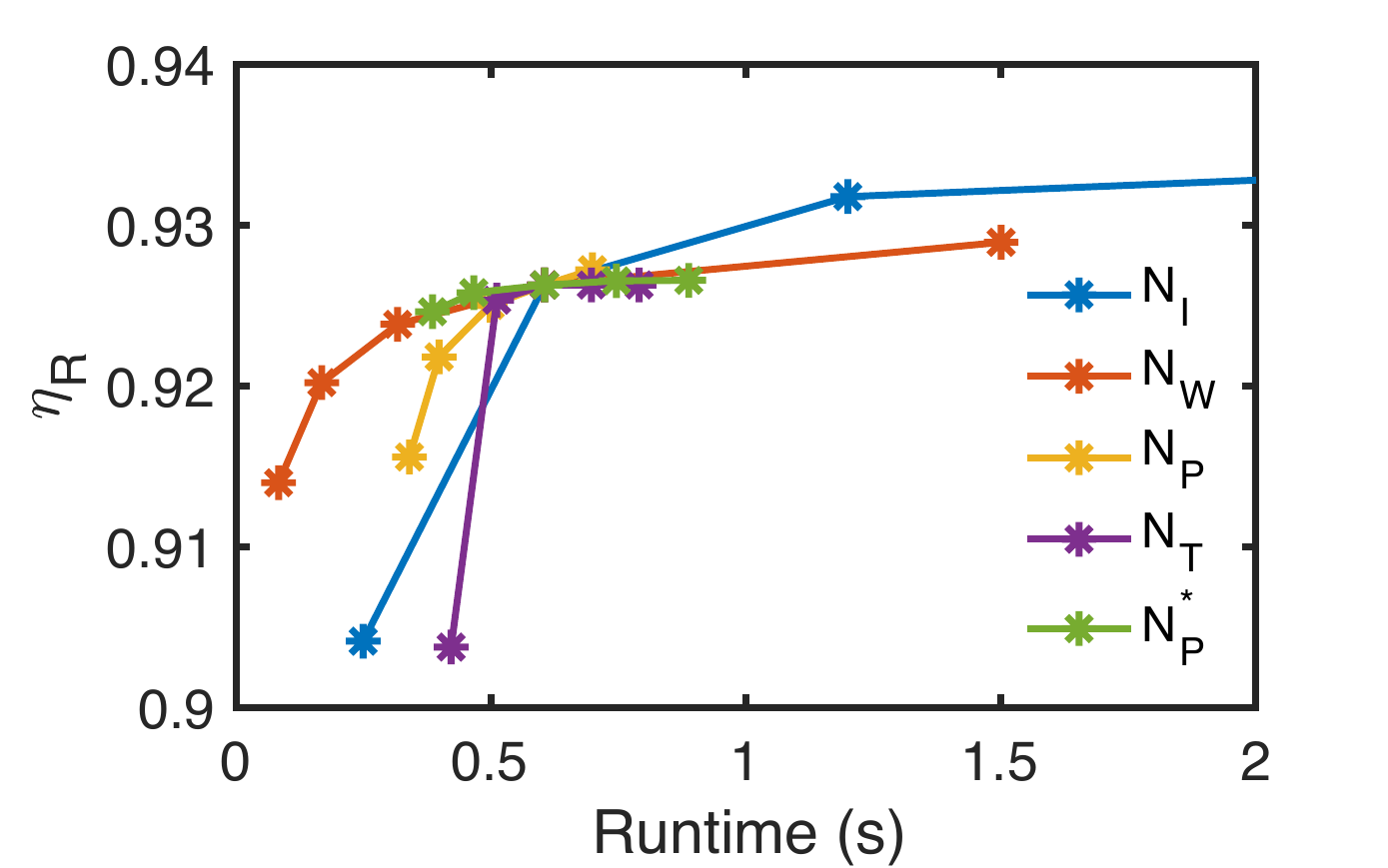}\\
			\end{minipage}
		\end{tabular}
	\end{center}
	\vspace{-2mm}
	\caption{The ratio-runtime curve of different parameters. $N_I\in\{2, 5, 10, 20\}$, $N_W\in\{2, 5, 10, 20, 50\}$, $N_P\in\{20, 50, 100, 150, 200\}$, $N_T\in\{5, 10, 15, 20, 25\}$, $N_P^*\in\{2, 5, 10, 15, 20\}$
	\label{fig:param-sweep}}
\end{figure}

It is clear that both runtime and $\eta_R$ grows as the all numbers increase, especially when the numbers start from a small value. However, as $N_I$ and $N_W$ grow higher, $\eta_R$ saturates around $0.93$. This might be a result that the outer approximation $\bar{D}$ provided by \grcc is not tight enough. Therefore, even if the sampler fully explores $\partial S_R$, the relative ratio will not reach $1$. In the mean while, the sensitivity is not significant as the $\eta_R$ stays above $0.9$ for a wide range of parameter values.

\subsection{How Large is the Pose Uncertainty?}

The previous results focused on showing the relative ratios that are informative for the approximation performance of \nameshort. In Fig.~\ref{fig:cdf_plot}, we show the cumulative distribution of absolute sizes of the inner approximations of \megb, just so the reader is aware of how much uncertainty is induced from the calibrated noise bounds on the learned measurements generated by modern neural networks. {We can draw two observations from Fig.~\ref{fig:cdf_plot}. (i) Comparing the two keypoint-based methods, \ie \cite{pavlakos17icra-6D} for \lmo and \dgr~\cite{choy2020deep} for \threedmatch, \dgr has lower uncertainty than \cite{pavlakos17icra-6D}. There are two possible reasons for this: (a) 3D-3D keypoint matches better constrain the pose hypothesis space than 2D-3D keypoint matches; (b) \dgr was proposed three years after \cite{pavlakos17icra-6D} and it is plausible that \dgr is better trained. (ii) Comparing the direct pose regression paradigm with the keypoint-based paradigm, we see that direct pose regression, in particular the state-of-the-art FoundationPose model, has much smaller uncertainty. The rotation uncertainty in FoundationPose is around 4 degrees and the translation uncertainty in FoundationPose is around 8mm. These are almost one order of magnitude better than \dgr~\cite{choy2020deep} and \cite{pavlakos17icra-6D}. As far as we know, this is the first time such small calibrated pose uncertainty is reported in the literature.}   


\begin{figure}[h]
	\begin{center}
		\begin{tabular}{c}
            \hspace{-10mm}	
			\begin{minipage}{0.40\textwidth}
				\centering
				\includegraphics[width=\textwidth]{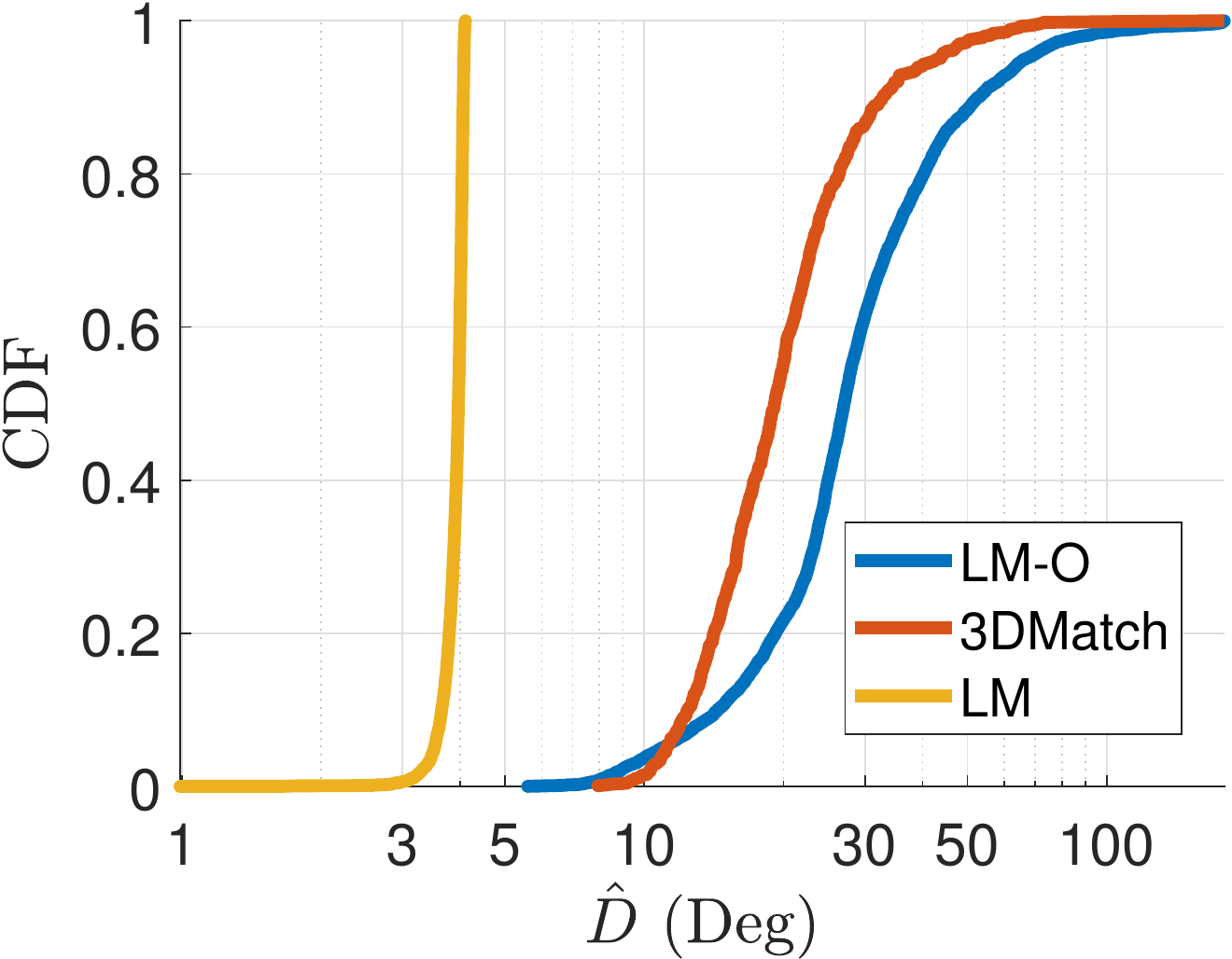}
			\end{minipage}
			\\
            \hspace{-10mm}
			\begin{minipage}{0.40\textwidth}
				\centering
				\includegraphics[width=\textwidth]{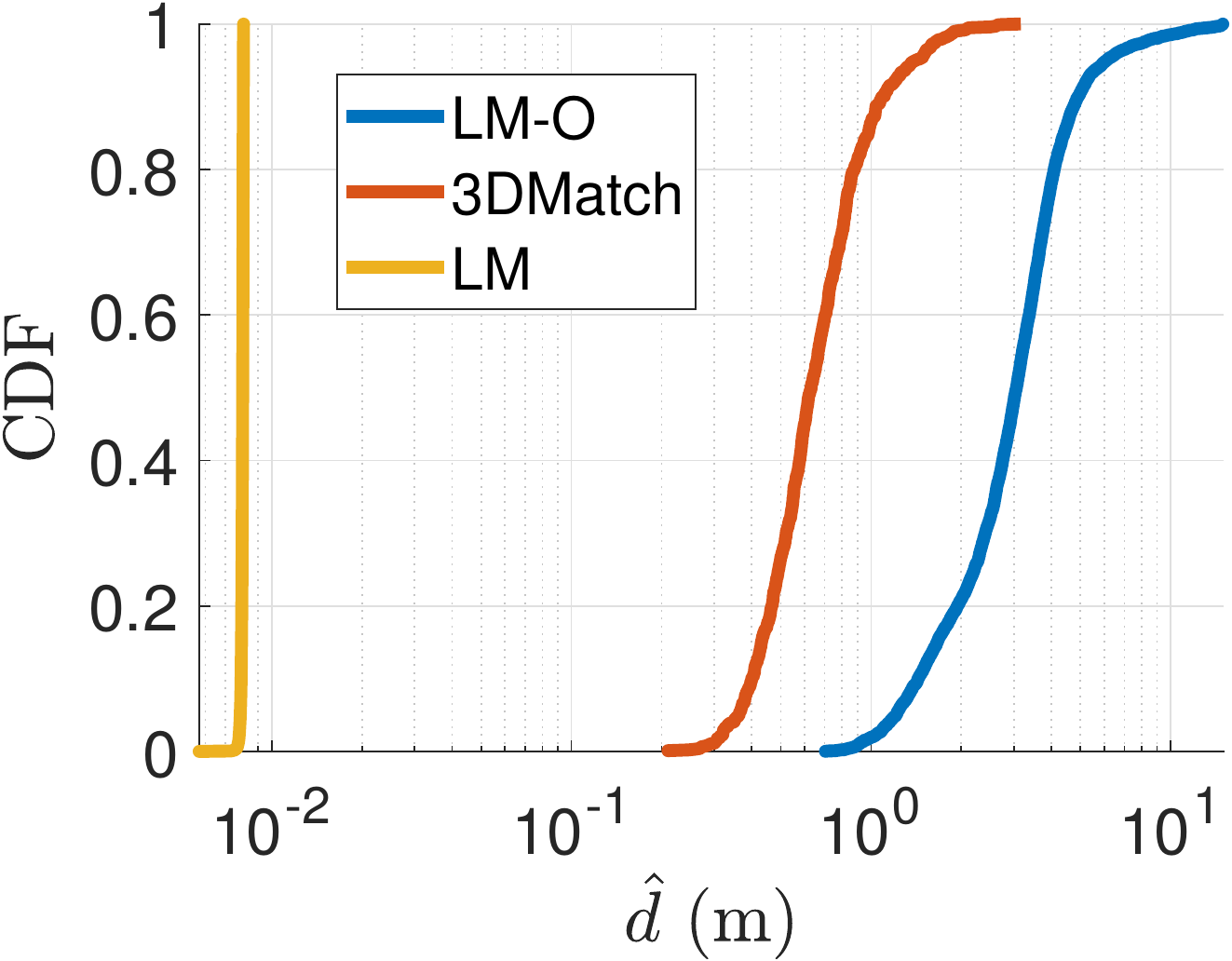}
			\end{minipage}
        \end{tabular}
	\end{center}
	\caption{{Cumulative distributions of the absolute radii of the enclosing geodesic balls produced by \nameshort for Examples~\ref{ex:2D3D}-\ref{ex:poseregression}.
    Top: rotation, bottom: translation. Note the $x$-axis is plotted in $\log$ scale. \label{fig:cdf_plot}}}
\end{figure}

\section{Conclusion} 
\label{sec:conclusion}

We introduced \nameshort, a GPU-accelerated fast algorithm that can quantify pose uncertainty in real time from learned {noisy measurements such as keypoints and pose hypotheses}. The key perspective that led to the design of \nameshort is that the pose uncertainty set (\purse), despite being algebraically unstructured, has nice geometric interpretations. The key algorithmic component of \nameshort is a strategy to generate random walks towards the boundary of the \purse, and the key enabler for the efficiency of \nameshort is parallel computing in GPUs. With {three} real-world datasets, we demonstrate the efficiency and effectiveness of \nameshort in producing tight uncertainty estimation of 6D poses.

\section*{Acknowledgments}
We would like to thank William Zhang for drawing the 3D visualizations in Fig.~\ref{fig:purse}.

\clearpage
\onecolumn
\appendix

\renewcommand{\theequation}{A\arabic{equation}}
\renewcommand{\theproposition}{A\arabic{proposition}}
\renewcommand{\thetheorem}{A\arabic{theorem}}
\renewcommand{\thefigure}{A\arabic{figure}}
\renewcommand{\thetable}{A\arabic{table}}
\renewcommand{\thealgocf}{A\arabic{algocf}}


\subsection{Non-Gaussian Measurement Noise in Example~\ref{ex:3D3D}}
\label{sec:non-gaussian-3d3d}

In this section, we provide numerical evidence that the noise in measurements generated by either handcrafted features or learned features in point cloud registration (Example \ref{ex:3D3D}) do not follow a Gaussian distribution.

Our strategy is to compute the noise vector $\epsilon_i$ from \eqref{eq:generative-model} as 
$$ \epsilon_i =  b_i - (R a_i + t), i=1,\dots,N, $$
where $y_i = (a_i, b_i) \in \Real{3} \times \Real{3}$ is a pair of matched 3D keypoints in the source point cloud and the target point cloud respectively, typically available through matching (handcrafted or learned) features of the point clouds. We evaluate the noises on the popular \threedmatch dataset~\cite{zeng17cvpr-3dmatch}
using two feature matching algorithms: (i)
a handcrafted feature \fpfh~\cite{rusu09icra-fpfh}, and (ii)
a learned feature \fcgf~\cite{choy19iccv-fcgf}.

This procedure produces a set of $N = 1,431,186$ and $N = 8,410,386$ pairs of matching points $y_i = (a_i, b_i)$, respective for \fpfh and \fcgf. We want to test if the noise $\left\{\epsilon_i\right\}_{i=1}^N $  are drawn from a multivariate Gaussian (normal) distribution. To do so, we use the R package MVN \cite{korkmaz2014mvn} that provides a suite of popular multivariate normality tests well established in statistics. To consider potential outliers in the noise vectors $\left\{\epsilon_i\right\}_{i=1}^N$, we run MVN on $\alpha \% $ of the noise vectors with smallest norms, and we sweep $\alpha$ from 1 (\ie keep only the $1 \%$ smallest noise vectors) up to 100 (\ie keep all noise vectors). Then we provide perspective plots and the Chi-square quantile-quantile (Q-Q) plots of the noise vectors under different inlier thresholds.

\begin{table}[h]
	\centering
	\adjustbox{max width=\columnwidth}{%
	\begin{tabular}{ c|c|c|c|c|c } 
		\midrule
		Percentage  & Mardia &  Henze-Zirkler & Royston &  Doornik-Hansen & Energy\\ 
		\hline
		$1 \%$ & NO &NO & NO&NO &NO \\
		$5 \%$& NO &NO & NO&NO &NO \\
		$10 \%$ & NO &NO & NO&NO &NO \\
		$20 \%$ & NO &NO & NO&NO &NO \\
		$40 \%$ & NO &NO & NO&NO &NO \\
		$100 \%$ & NO &NO & NO&NO &NO \\
		\hline
		
	\end{tabular}}
	\caption{MVN test for \fpfh method on $\alpha \%$ percentage of the noise vectors with smallest norms. (MVN test for \fcgf method has the same result.)}
	 \label{table:noise-table-3d}
\end{table}

Table \ref{table:noise-table-3d} shows all the MVN tests indicate that the noise residuals do not follow a Gaussian distribution even after filtering potential outliers and no matter whether the matching points are calculated with handcrafted or deep learning based methods. Fig.~\ref{figure:noise-graph-3d-fpfh} and Fig.~\ref{figure:noise-graph-3d-fcgf} show the perspective plots (top) and the Chi-square quantile-quantile (Q-Q) plots (bottom) of the empirical density functions under different inlier thresholds, in comparison to that of a Gaussian distribution. We can see that the empirical density functions deviate far away from a Gaussian distribution, and are difficult to characterize. This motivates the set membership estimation framework in Section~\ref{sec:intro}. 

\begin{figure*}[!htbp]

	\centering
	{\includegraphics[width=0.15\textwidth]{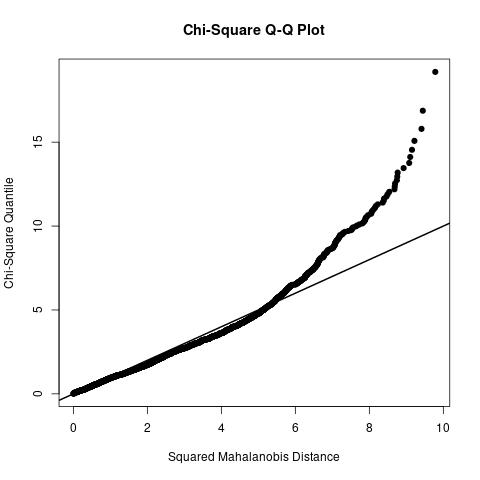}} 
	{\includegraphics[width=0.15\textwidth]{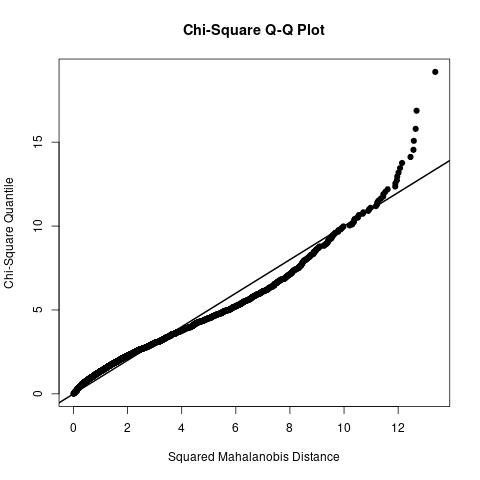}} 
	{\includegraphics[width=0.15\textwidth]{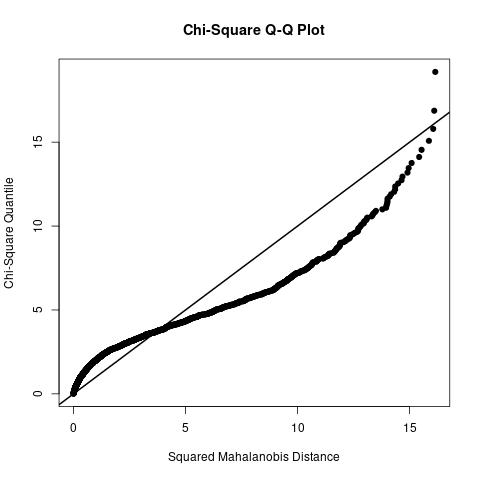}} 
	{\includegraphics[width=0.15\textwidth]{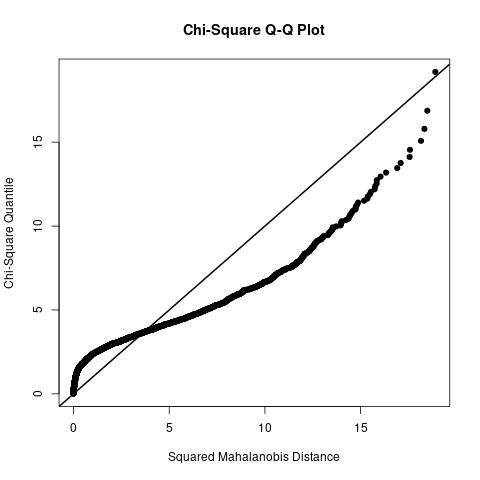}} 
	{\includegraphics[width=0.15\textwidth]{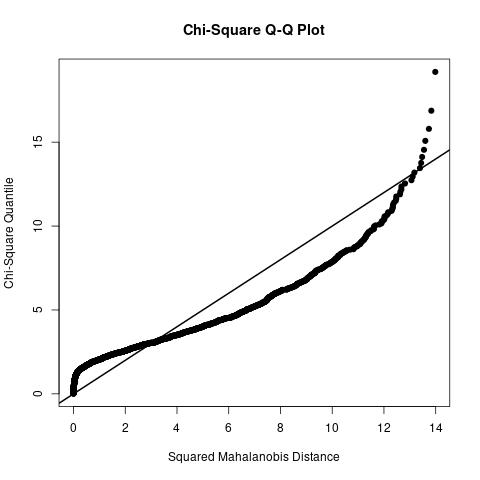}} 
	{\includegraphics[width=0.15\textwidth]{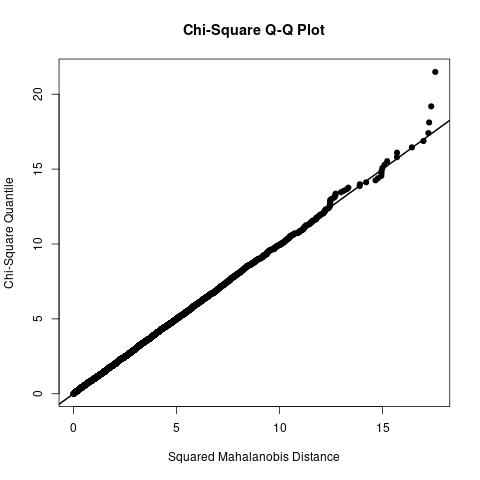}} 
	
	\centering
	{\includegraphics[width=0.15\textwidth]{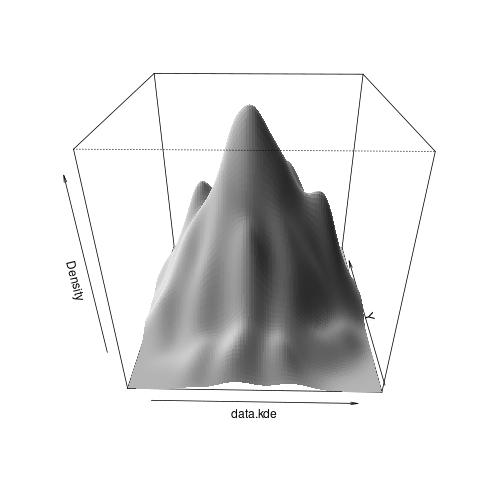}} 
	{\includegraphics[width=0.15\textwidth]{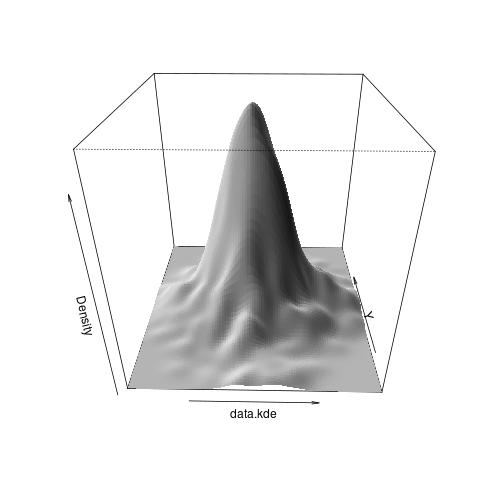}} 
	{\includegraphics[width=0.15\textwidth]{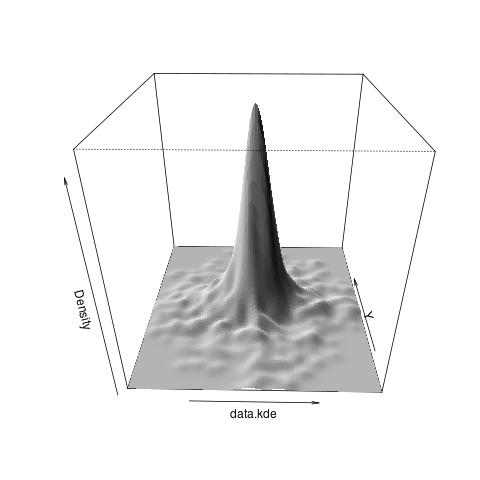}} 
	{\includegraphics[width=0.15\textwidth]{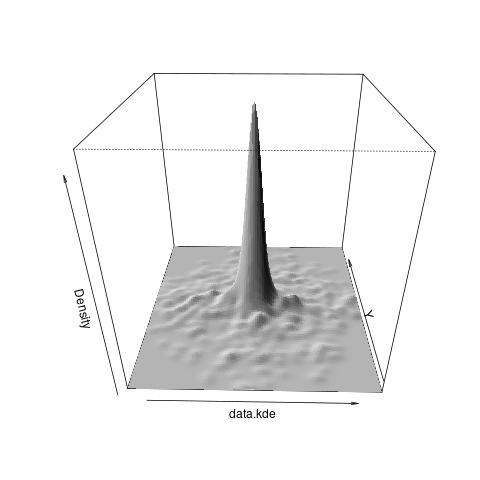}} 
	{\includegraphics[width=0.15\textwidth]{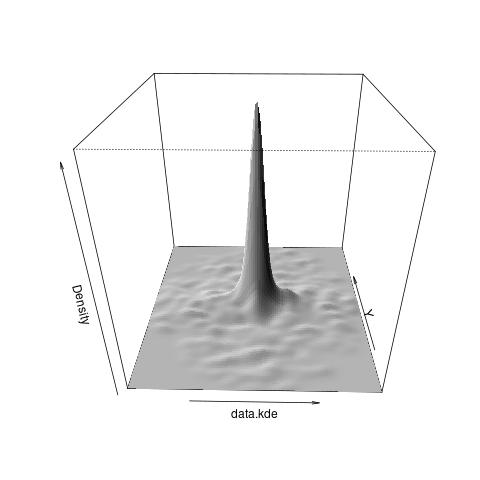}} 
	{\includegraphics[width=0.15\textwidth]{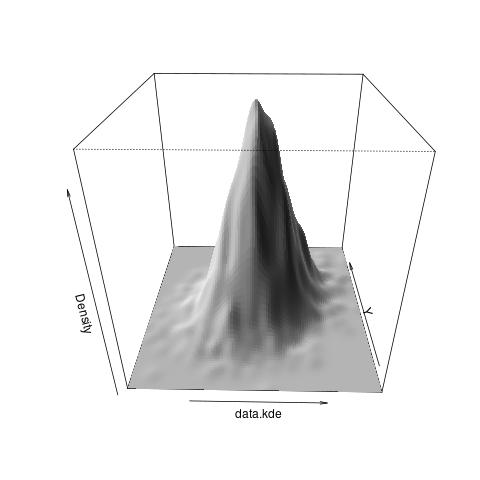}} 
	
	\caption{Perspective plots (top) and Chi-square Q-Q plots (bottom) of the noise vectors generated with {\bf handcreafed features (\fpfh)} from the \threedmatch dataset. From left to right, they are respectively for noise residual range 5cm, 10cm, 20cm, 50cm, 100cm. The right-most graph shows the perspective plot for a Gaussian distribution that can be used as comparison.}
	\label{figure:noise-graph-3d-fpfh}
\end{figure*}

\begin{figure*}[!htbp]

	\centering
	{\includegraphics[width=0.15\textwidth]{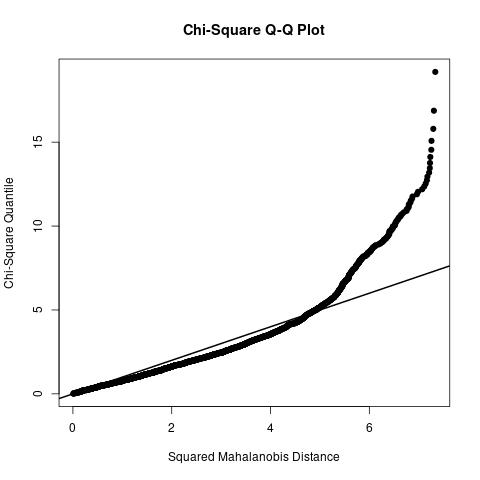}} 
	{\includegraphics[width=0.15\textwidth]{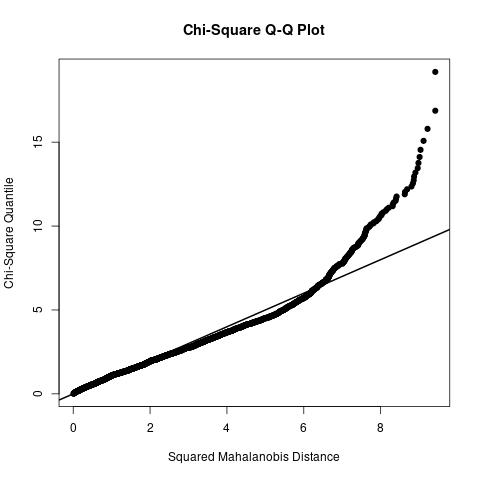}} 
	{\includegraphics[width=0.15\textwidth]{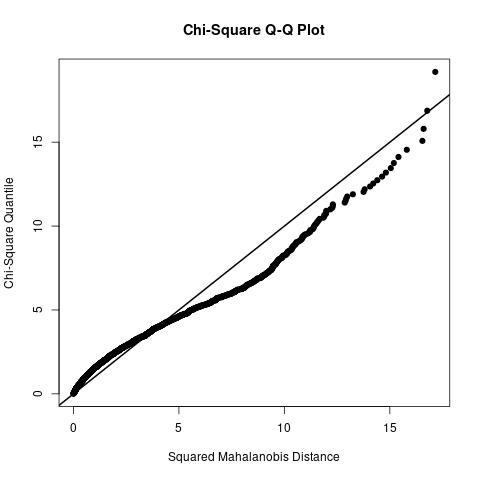}} 
	{\includegraphics[width=0.15\textwidth]{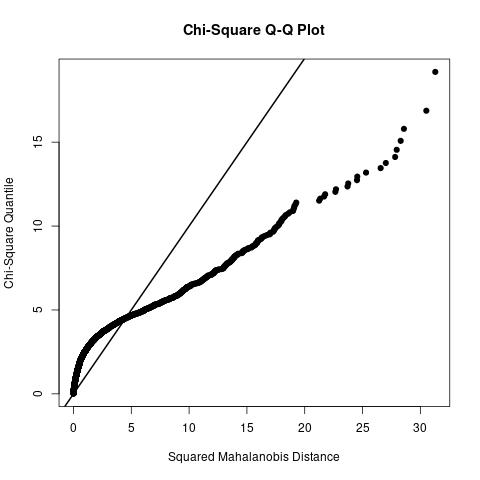}} 
	{\includegraphics[width=0.15\textwidth]{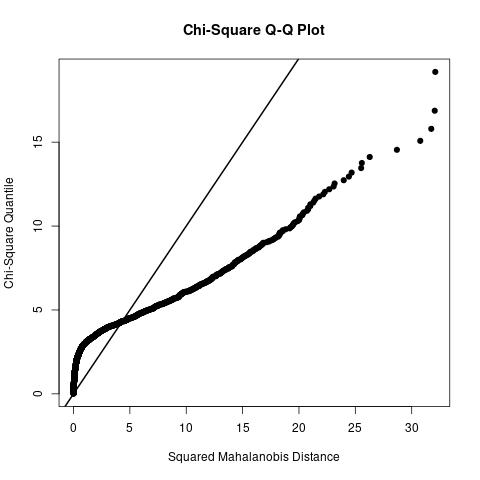}} 
	{\includegraphics[width=0.15\textwidth]{figures/fpfh/Chi-Square_Q-Q_normal.jpg}}

	\centering
	{\includegraphics[width=0.15\textwidth]{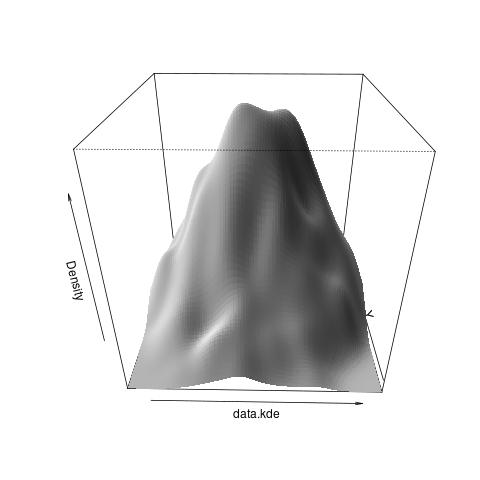}} 
	{\includegraphics[width=0.15\textwidth]{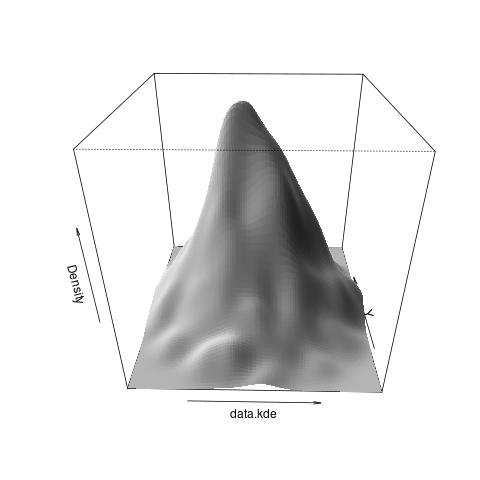}} 
	{\includegraphics[width=0.15\textwidth]{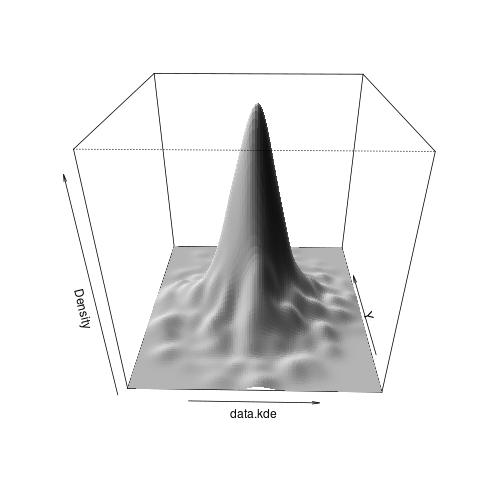}} 
	{\includegraphics[width=0.15\textwidth]{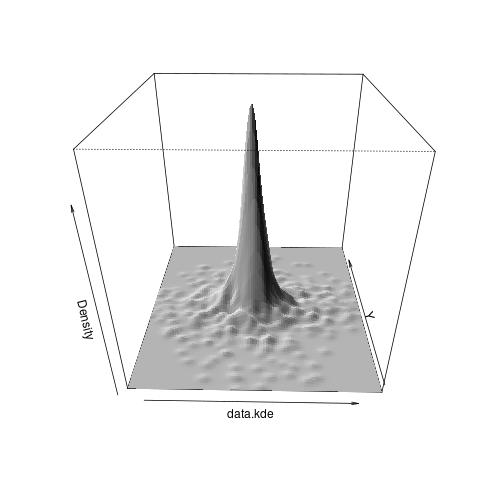}} 
	{\includegraphics[width=0.15\textwidth]{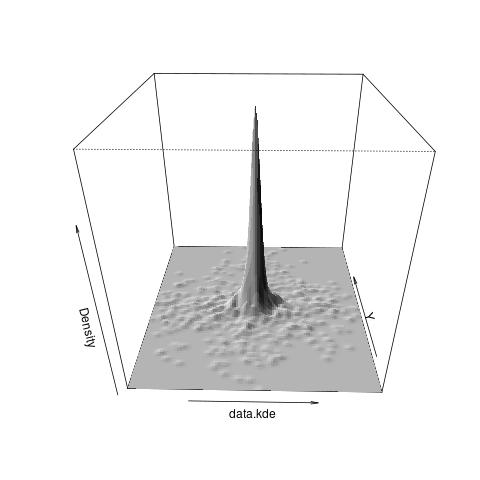}} 
	{\includegraphics[width=0.15\textwidth]{figures/fpfh/Persp1_normal.jpg}}

	\caption{Perspective plots (top) and Chi-square Q-Q plots (bottom) of the noise vectors generated with {\bf learned features (\fcgf)} from the \threedmatch dataset. From left to right, they are respectively for noise residual range 5cm, 10cm, 20cm, 50cm, 100cm. The right-most graph shows the perspective plot for a Gaussian distribution that can be used as comparison.}
	\label{figure:noise-graph-3d-fcgf}
\end{figure*}


\subsection{Uncertainty Calibration for Example~\ref{ex:3D3D}}
\label{sec:uncertainty-calibration}
\begin{algorithm}[t]
    \SetAlgoLined
        {\bf Input: } Calibration Point Cloud pairs with matched keypoints $\calD = \{\{a_j,b_j\}_{j=1}^{N_i}\}_{i = 1}^{N_{\calnum}}$; error rate $\epsilon$ \\
        {\bf Output: } Calibration threshold $\alpha$\\

        \For{$i \in \{1,\cdots,N_{\calnum}\}$}{
            $N_i \gets$  number of correspondences in the $i$-th point cloud;\\
            $(R_i,t_i) \gets$ ground truth pose of the $i$-th point cloud;\\
            $\{a_j,b_j\}_{j=1}^{N_i} \gets$ correspondences of the $i$-th pair of point clouds;\\
            $\{w_j\}_{j=1}^{N_i} \gets$ weights of the correspondences;\\
            $\alpha_i \gets \max_{j=1,...,N_i} (\|R_ia_j + t_i - b_j\|^2\times w_j)$;\\
        }
        Sort $\alpha_{\pi(1)} \geq \alpha_{\pi(2)} \geq \cdots \geq \alpha_{\pi(N_{\calnum})}$;\\
            $\alpha \gets \alpha_{\pi(N_{\calnum}\times(1-\epsilon))}$;\\
        {\bf return:} $\alpha$\\
        \caption{\threedmatch conformal calibration \label{alg:3d-calibration}}
    \end{algorithm}

The sketch of the calibration process for the Example~\ref{ex:3D3D} is shown in Algorithm~\ref{alg:3d-calibration}.

{\bf Basic Setup}.
Given a pair of point clouds, the \dgr network~\cite{choy2020deep} outputs a set of matched keypoints $\{ (a_j,b_j) \}$ together with weights $w_j$, with $w_j$ indicating the confidence of the $j$-th match being valid. Thus, we leverage the weights of the features output by the \dgr network as the confidence function in the calibration and test process. Also, we normalize the 2-norm of the weights of each point clouds to keep weights across different point clouds to have the same scale. To (1) reduce the calculation burden of the \purse set, (2) prune the outliers to reduce the uncertainty of the \purse set, we only select the top $50$ correspondences with the largest weights for the calibration and the test process.

{\bf Calibration process}.
In the calibration process, we follow the inductive conformal prediction procedure~\cite{angelopoulos21arxiv-gentle} and randomly choose 400 pairs of point clouds for calibration from the \threedmatch dataset with ground truth poses, and use the \dgr network to output the features and the weights of the features. The calibration process of the \purse set in Example~\ref{ex:3D3D} is carried out as follows: for the $i$-th point cloud, denote the ground truth pose as $(R_i,t_i)$, the correspondences as $\{a_j,b_j\}_{j=1}^{N_i}$ and the weights as $\{w_j\}_{j=1}^{N_i}$. We then calculate $\|R_ia_j + t_i - b_j\|^2\times w_j$. We denote $\alpha_i = \max_{j=1,...,N_i} (\|Ra_j + t - b_j\|^2\times w_j)$. Then we sort all the $\alpha_i$ as $\alpha_{\pi(1)} \geq \alpha_{\pi(2)} \geq \cdots \geq \alpha_{\pi(400)}$ and for a given error rate $\epsilon = 20\%$, we select $\alpha = \alpha_{\pi(400\times80\%)} = \alpha_{\pi(320)}$.

{\bf Test process}.
Then in the test process, for the $i$-th point cloud, we use the \dgr network to output the features and the weights of the features. We denote the correspondences as $\{a_j,b_j\}_{j=1}^{N_i}$ and the weights as $\{w_j\}_{j=1}^{N_i}$. We formulate the \purse set as:
$$
\cbrace{ (R,t)| \|Ra_j + t - b_j\|^2 \leq \frac{\alpha}{w_j},\ j = 1,\cdots, N_i }
$$
{
\subsection{Uncertainty Calibration for Example~\ref{ex:poseregression}}
\label{sec:uncertainty-calibration-foundation}

\begin{algorithm}[t]
    \SetAlgoLined
        {\bf Input: } Calibration 2D images $P_1, ..., P_{N_\calnum}$ with ground truth poses; error rate $\epsilon$ \\
        {\bf Output: } Calibration threshold $\alpha$, Rotation scale $k_R$ and translation scale $k_t$\\

        \For{$i \in \{1,\cdots,N_{\calnum}\}$}{
            $(R_i,t_i) \gets$ ground truth pose of the $i$-th image;\\
            $\{(R_{i,1},t_{i,1}),\cdots, (R_{i,10},t_{i,10})\} \gets$ top 10 pose hypotheses of the $i$-th image;\\
            $\{p_{i,1},\cdots,p_{i,10}\} \gets$ scores of top 10 pose hypotheses, normalized with their sum;\\
            R_scores($i$) $\gets \max_{j = 1,\cdots, 10}(p_j\times\|\vectorize{R_{i,j} - R_i}\|)$ \\
            t_scores($i$) $\gets \max_{j = 1,\cdots, 10}(p_j\times\|t_{i,j} - t_i\|)$ \\
        }

        Sort R_scores in descending order as R_scores$(\pi_R(1)),\cdots,$ R_scores$(\pi_R(N_{\calnum}))$;\\
        $k_R \gets $R_scores$(\pi_R(N_{\calnum}\times(1-\epsilon)))$;\\
        Sort t_scores in descending order as t_scores$(\pi_t(1)),\cdots,$ t_scores$(\pi_t(N_{\calnum}))$;\\
        $k_t \gets $t_scores$(\pi_t(N_{\calnum}\times(1-\epsilon)))$;\\

        \For{$i \in \{1,\cdots,N_{\calnum}\}$}{
            $(R_i,t_i) \gets$ ground truth pose of the $i$-th image;\\
            $\{(R_{i,1},t_{i,1}),\cdots, (R_{i,10},t_{i,10})\} \gets$ top 10 pose hypotheses of the $i$-th image;\\
            $\{p_{i,1},\cdots,p_{i,10}\} \gets$ scores of top 10 pose hypotheses, normalized with their sum;\\
            R_scores($i$) $\gets \max_{j = 1,\cdots, 10}(p_j\times\|\vectorize{R_{i,j} - R_i}\|)$ \\
            t_scores($i$) $\gets \max_{j = 1,\cdots, 10}(p_j\times\|t_{i,j} - t_i\|)$ \\
            $\alpha_i \gets \max ( \text{R_scores}($i$) /k_R,  \text{t_scores}($i$) /k_t )$;\\
        }
        Sort $\alpha_{\pi(1)} \geq \alpha_{\pi(2)} \geq \cdots \geq \alpha_{\pi(N_{\calnum})}$;\\
            $\alpha \gets \alpha_{\pi(N_{\calnum}\times(1-\epsilon))}$;\\
        {\bf return:} $\alpha,k_R,k_t$\\
        \caption{\lm conformal calibration \label{alg:foundation-calibration}}
    \end{algorithm}

The sketch of the calibration process for Example~\ref{ex:poseregression} is shown in Algorithm~\ref{alg:foundation-calibration}.

{\bf Basic Setup}.
Given an image (as well as its model), the FoundationPose network~\cite{wen24cvpr-foundationpose} can output multiple pose hypotheses and scores. The original paper only outputs the pose with the highest score. In this work, we leverage poses hypotheses with top 10 scores to quantify the uncertainty of this pose estimation paradigm. We use the normalized scores as the confidence function in the calibration and test process. For the experiments carried out in the main article, we choose error rate $\epsilon = 10\%$.

{\bf Calibration process}.
In the calibration process, we follow the inductive conformal prediction procedure~\cite{angelopoulos21arxiv-gentle} and randomly choose 200 images for calibration process from the \lm dataset with ground truth poses. The calibration process is carried out as follows: 
\begin{enumerate}
    \item First, we need to ``normalize'' the scale between rotation and translation. Thus, we first carry out the calibration process as in Section~\ref{sec:uncertainty-calibration}. Denote the 10 pose hypotheses and their normalized scores as $\{(R_{i,1},t_{i,1}),\cdots, (R_{i,10},t_{i,10})\}$ and $\{p_{i,1},\cdots,p_{i,10}\}$. We get the quantile for rotation and translation individually as $k_R,k_t$ by collecting $\max_{j = 1,\cdots, 10}(p_j\times\|\vectorize{R_{i,j} - R_i}\|)$ and $\max_{j = 1,\cdots, 10}(p_j\times\|t_{i,j} - t_i\|)$ for every image in the calibration set.
    \item Then we carry out the calibration process again, but this time we use the quantile $k_R$ and $k_t$ to normalize the rotation and translation counterparts. Thus, we get the calibration threshold $\alpha_i$ for each image $i$ as $\max ( \text{R_scores}(i) /k_R,  \text{t_scores}(i) /k_t )$. And finally we get the quantile for the calibration threshold as $\alpha_{\pi(N_{\calnum}\times(1-\epsilon))}$.

\end{enumerate}

{\bf Testing process}.
    Then in the test process, first we use the FoundationPose network to output the 10 pose hypotheses $\{R_1,\cdots,R_{10}\}$ and their normalized scores $\{p_1,\cdots, p_{10}\}$. We formulate the \purse set as:
    $$
    \cbrace{ (R,t)\mid \|R - R_i\| \leq \frac{\alpha}{p_i\times k_R}, \|t - t_i\| \leq \frac{\alpha}{p_i\times k_t},\ j = 1,\cdots, 10 }
    $$

}

\subsection{Geodesic Gradient Descent}
\label{sec:geodesic-gradient-descent}

In this section, we focus on finding the \megb on $\SOthree$ for rotation through geodesic gradient descent, as the geometry for $\Real{3}$ is simpler and already studied in prior work \cite{badoiu03-smaller}. We specify the algorithm framework introduced in \cite{arnaudon2013approximating} to $\SOthree$ and prove that it is equal to geodesic gradient descent. We then provide convergence analysis and numerical experiments to verify the results.

{\bf Inner maximization problem}. We first focus on the inner maximization problem. Let $\calS_R \subset B(O,\rho)$ be a nonempty compact subset of $\SOthree$ contained in the open ball $B(O,\rho)$ at center $O$ with radius $\rho \leq \halfpi$, consider the function
\bea 
f: B(O,\rho)  \rightarrow & \Real{} \nonumber \\
c  \mapsto & f(c) = \displaystyle \max_{s \in \calS_R} \rotdist^2 (c,s). \label{eq:defellinf}
\eea

If $\rho \leq \halfpi$, we can guarantee the continuity and convexity of $f$. Specifically, we have the following proposition:
\begin{proposition}[Convexity and (Sub)gradient] The following properties hold true for $f$:
    \begin{enumerate}[label=(\roman*)]
        \item $f$ is Lipschitz continuous with $L=4\rho$, \ie $\abs{f(c_1) - f(c_2)} \leq L \rotdist(c_1,c_2),\forall c_1,c_2 \in B(O,\rho)$;
        \item $f$ is strictly convex when $\rho = \halfpi$;
        \item $f$ is $\mu$-strongly convex when $\rho < \halfpi$, with $\mu = 2\rho \cot{\rho}$.
    \end{enumerate}
\end{proposition}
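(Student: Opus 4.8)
\emph{Proof plan.} The plan is to reduce all three claims to pointwise estimates on the single-source functions $h_s(c):=\rotdist^2(c,s)$, one for each $s\in\calS_R$, and then transfer them to $f=\max_{s\in\calS_R}h_s$ using that Lipschitz bounds, strong convexity, and strict convexity are all preserved under taking a supremum over a compact index set (the supremum is attained at every $c$ since $\calS_R$ is compact and $\rotdist$ is continuous). Throughout I would use two facts about the ambient geometry: $(\SOthree,\rotdist)$ is a space form of constant sectional curvature $\kappa=\tfrac14$ --- it is the round $\mathbb{RP}^3$ with all distances scaled by $2$ --- with injectivity radius $\pi$; and for $\rho\le\halfpi$, which is exactly the convexity radius of $\SOthree$, the open ball $B(O,\rho)$ is geodesically convex, so any two of its points are joined inside it by a unique minimizing geodesic. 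The geodesic convexity of $B(O,\halfpi)$ at the critical radius is the only slightly delicate ambient point and can be taken from standard facts about the convexity radius.

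First I would handle the single-source Lipschitz bound: for $c_1,c_2,s\in B(O,\rho)$ the triangle inequality gives $\abs{\rotdist(c_1,s)-\rotdist(c_2,s)}\le\rotdist(c_1,c_2)$, while $\rotdist(c_i,s)\le\rotdist(c_i,O)+\rotdist(O,s)<2\rho$, so multiplying, $\abs{h_s(c_1)-h_s(c_2)}<4\rho\,\rotdist(c_1,c_2)$; taking the supremum over $s\in\calS_R$ yields claim (i) with $L=4\rho$. Next I would pin down the Hessian of $h_s$. By the Hessian comparison theorem in a space form of curvature $\kappa$ (equivalently, by a direct Jacobi-field computation in geodesic normal coordinates), at any $c$ with $r:=\rotdist(c,s)$ within the injectivity radius and in an orthonormal frame whose first vector is the unit radial direction from $c$ toward $s$,
\[
\mathrm{Hess}\big(\rotdist^2(\cdot,s)\big)(c)=\mathrm{diag}\big(2,\ 2\sqrt\kappa\,r\cot{\sqrt\kappa\,r},\ 2\sqrt\kappa\,r\cot{\sqrt\kappa\,r}\big),
\]
understood as $2I$ at $c=s$. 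Since $c,s\in B(O,\rho)$ force $r<2\rho\le\pi$, and since $u\mapsto2u\cot{u}$ is positive and strictly decreasing on $(0,\halfpi)$ with limit $2$ at $0$, the smallest eigenvalue $2\sqrt\kappa\,r\cot{\sqrt\kappa\,r}=r\cot{r/2}=2(r/2)\cot{r/2}$ satisfies $2(r/2)\cot{r/2}>2\rho\cot{\rho}\ge0$ because $r/2<\rho\le\halfpi$. Hence $\mathrm{Hess}(\rotdist^2(\cdot,s))(c)\succeq2\rho\cot{\rho}\,I$ and is in fact positive definite, for all $c,s\in B(O,\rho)$.

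Finally I would assemble $f$. For (iii), when $\rho<\halfpi$ the modulus $\mu:=2\rho\cot{\rho}$ is strictly positive; for every unit-speed geodesic $\gamma$ inside $B(O,\rho)$ the Hessian bound shows $t\mapsto h_s(\gamma(t))-\tfrac\mu2 t^2$ is convex for each $s$, so its supremum over $s\in\calS_R$, which equals $f(\gamma(t))-\tfrac\mu2 t^2$, is convex; thus $f$ is geodesically $\mu$-strongly convex. For (ii), when $\rho=\halfpi$ the Hessian bound degenerates to $\succeq0$, but each $h_s$ still has positive-definite Hessian on $B(O,\halfpi)$, hence each $h_s\circ\gamma$ is strictly convex on its interval; then for $t_0\ne t_1$, $\lambda\in(0,1)$ and $t_\lambda=\lambda t_0+(1-\lambda)t_1$, choosing $s^\star\in\calS_R$ attaining $f(\gamma(t_\lambda))$ gives $f(\gamma(t_\lambda))=h_{s^\star}(\gamma(t_\lambda))<\lambda h_{s^\star}(\gamma(t_0))+(1-\lambda)h_{s^\star}(\gamma(t_1))\le\lambda f(\gamma(t_0))+(1-\lambda)f(\gamma(t_1))$, so $f$ is strictly geodesically convex. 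The main obstacle is the Hessian computation for $\rotdist^2(\cdot,s)$ and the identification of the modulus as exactly $2\rho\cot{\rho}$: this is where the curvature value $\kappa=\tfrac14$ of $(\SOthree,\rotdist)$ and the diameter bound $\rotdist(c,s)<2\rho$ enter, and it can be done either by invoking the standard space-form Hessian comparison or by the Jacobi-field computation sketched above; the rest is bookkeeping.
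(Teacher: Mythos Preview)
Your proposal is correct and follows the same overall strategy as the paper: establish the Lipschitz and Hessian bounds for each single-source function $h_s(c)=\rotdist^2(c,s)$, then transfer to $f=\max_s h_s$ via the supremum. Two execution differences are worth noting. First, your Lipschitz argument is cleaner than the paper's: you bound $\abs{h_s(c_1)-h_s(c_2)}$ uniformly in $s$ and use that a supremum of $L$-Lipschitz functions is $L$-Lipschitz, whereas the paper picks maximizers $s_1,s_2$ at $c_1,c_2$ and does a case split to control $\abs{\rotdist(c_1,s_1)-\rotdist(c_2,s_2)}$. Second, for the Hessian of $\rotdist^2(\cdot,s)$ the paper computes directly from the cosine rule for geodesic triangles in $\SOthree$ and a Taylor expansion, obtaining eigenvalues $2$ and $\gamma\cot{\gamma/2}$; you instead identify $(\SOthree,\rotdist)$ as a space form of curvature $\tfrac14$ and invoke the standard space-form Hessian formula, which yields the same eigenvalues $2$ and $r\cot{r/2}$. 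Your route is more conceptual and generalizes immediately to other space forms, while the paper's is self-contained and avoids appealing to the curvature identification. The assembly of strict and strong convexity for $f$ is essentially identical in both.
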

\begin{proof}
{\bf Lipschitz continuity of $f$}. Pick any two points $c_1,c_2 \in B(O,\rho)$, since $\calS_R$ is a nonempty compact set, the ``$\max$'' in~\eqref{eq:defellinf} is attained, say at $s_1$ and $s_2$, respectively ($s_1$ and $s_2$ are not necessarily unique). Consequently, we can write
\bea 
& \abs{f(c_1) - f(c_2)} \nonumber\\
= &\abs{\rotdist^2(c_1,s_1) - \rotdist^2(c_2,s_2)} \nonumber \\
= & \parentheses{\rotdist(c_1,s_1) + \rotdist(c_2,s_2)} \abs{\rotdist(c_1,s_1) - \rotdist(c_2,s_2)}  \nonumber \\
\leq & 4\rho \abs{\rotdist(c_1,s_1) - \rotdist(c_2,s_2)} \nonumber
\eea 
where the last inequality holds because of the triangle inequality
\bea 
\rotdist(c_i,s_i) \leq \rotdist(c_i,O) + \rotdist(O,s_i) < 2\rho, \quad i=1,2. \nonumber
\eea 
It remains to show $\abs{\rotdist(c_1,s_1) - \rotdist(c_2,s_2)} \leq \rotdist(c_1,c_2)$. First, the inequality trivially holds when $\rotdist(c_1,s_1) = \rotdist(c_2,s_2)$. Second, when $\rotdist(c_1,s_1) > \rotdist(c_2,s_2)$, we have 
\bea 
\rotdist(c_1,s_1) > \rotdist(c_2,s_2) \geq \rotdist(c_2,s_1), \nonumber
\eea 
with the last inequality due to $s_2$ attains the maximum (squared) distance to $c_2$. Hence,
\bea 
& \rotdist(c_1,s_1) - \rotdist(c_2,s_2) \nonumber\\
& \leq \rotdist(c_1,s_1) - \rotdist(c_2,s_1) \leq \rotdist(c_1,c_2), \nonumber
\eea 
where the last inequality follows again from the triangle inequality. Third, when $\rotdist(c_1,s_1) < \rotdist(c_2,s_2)$, we similarly have 
\bea 
& \rotdist(c_2,s_2) - \rotdist(c_1,s_1) \nonumber \\
& \leq \rotdist(c_2,s_2) - \rotdist(c_1,s_2) \leq \rotdist(c_1,c_2). \nonumber
\eea 
Therefore, $f$ is Lipschitz continuous with $L=4\rho$.

To show $f$ is convex, we first need the domain $B(O,\rho)$ to be convex. This is evident in~\cite[Lemma 6]{hartley13ijcv-rotation} for $\rho \leq \halfpi$. We then show $f$ has a positive (semi-)definite Hessian in $B(O,\rho)$.

{\bf Convexity of $\rotdist^2(c,s)$}.
We first analyze the gradient and Hessian of $\rotdist^2(c,s)$ at $c \in B(O,\rho)$ with $s \in \calS_R \subset B(O,\rho)$. By definition, the gradient and Hessian of $\rotdist^2(c,s)$ are the gradient and Hessian of the following function \wrt $x \in \Real{3}$, a tangent vector at $c$:
\bea 
\rotdist^2(c\Exp{x},s) = \rotdist^2(\Exp{x},c\tran s), \nonumber
\eea 
where $\Exp{x} = \exp\parentheses{x^\wedge}$ is the composition of the ``$^\wedge$'' map and the exponential map.
Let $c\tran s \in \SOthree$ be a rotation about the unit axis $w$ with angle $\gamma$ (\ie the axis-angle representation for $c\tran s$ is $(w,\gamma)$). We then recall the following cosine rule in $\SOthree$ about geodesic triangles.
\begin{lemma}[Cosine rule in $\SOthree$~{\cite[Proposition 2]{hartley13ijcv-rotation}}] \label{lemma:cosineruleSOthree}
    Let $A_1,A_2,A_3 \in \SOthree$ be three points forming a triangle and let $a_1,a_2,a_3$ be the lengths of the three geodesic line segments. If $a_3$ is the length of the smaller geodesic arc between $A_1$ and $A_2$, then 
\bea \label{eq:cosineruleSOthree}
\cos{\frac{a_3}{2}} = \cos{\frac{a_1}{2}}\cos{\frac{a_2}{2}} + \sin{\frac{a_1}{2}} \sin{\frac{a_2}{2}} \cos{\angle A_1 A_3 A_2 },
\eea 
where $\angle A_1 A_3 A_2$ is the angle formed by the geodesic segments $A_1 A_3$ and $A_2 A_3$ at point $A_3$.
\end{lemma}
Using Lemma~\ref{lemma:cosineruleSOthree} and letting $A_1 = \Exp{x}$, $A_2 = c\tran s$, and $A_3 = \eye_3$, we have
\bea 
& a_1 = \gamma, \quad a_2 = \norm{x}, \quad a_3 = \rotdist(\Exp{x},c\tran s),\nonumber \\ & \cos{ \angle A_1 A_3 A_2 } = \frac{x\tran w}{\norm{x}}. \nonumber
\eea 
Invoking the cosine rule~\eqref{eq:cosineruleSOthree}, we obtain
\bea \label{eq:rawcosinerule}
& \cos{\frac{a_3}{2}} = \cos{\frac{\gamma}{2}} \cos{\frac{\norm{x}}{2}} + \sin{ \frac{\gamma}{2} } \sin{ \frac{\norm{x}}{2} } \frac{x\tran w}{\norm{x}} \Longrightarrow \nonumber \\
& \rotdist(\Exp{x},c\tran s) = a_3 \nonumber \\
& = 2 \acos{ \cos{\frac{\gamma}{2}} \cos{\frac{\norm{x}}{2}} + \sin{ \frac{\gamma}{2} } \sin{ \frac{\norm{x}}{2} } \frac{x\tran w}{\norm{x}} }. 
\eea 
We want derivatives up to second order, so we perform a second-order Taylor expansion for~\eqref{eq:rawcosinerule}:
\bea 
a_3 = 2\acos{ \cos{\frac{\gamma}{2}} \parentheses{1 - \frac{\norm{x}^2}{8}} + \half \sin{\frac{\gamma}{2}} x\tran w + o(\norm{x}^3) } \nonumber
\eea
Finally, we have the gradient
\bea \label{eq:gradientdsquare}
\nabla \rotdist^2 (c,s) = \nabla_x \rotdist^2 (\Exp{x},c\tran s ) \vert_{x=0} = - \gamma w\tran, 
\eea 
where $(w,\gamma)$ is the axis-angle representation of $c\tran s$. Equation~\eqref{eq:gradientdsquare} states that the gradient of $\rotdist^2(c,s)$ at $c$ points in the negative direction of the geodesic from $c$ to $s$ (\ie along the direction from $s$ to $c$), and has magnitude equal to the length of the geodesic. When $c=s$, $\gamma=0$ and the gradient is equal to zero. Similarly, for the Hessian we have 
\bea
\nabla^2 \rotdist^2 (c,s) =& \nabla_{xx} \rotdist^2 (\Exp{x},c\tran s ) \vert_{x=0} \nonumber \\
= & \nabla_x (2a_3 \nabla_x a_3)\vert_{x=0} \nonumber \\
= & 2 (\nabla_x a_3) \tran \nabla_x a_3\vert_{x=0} + 2a_3 \nabla_{xx} a_3\vert_{x=0} \nonumber\\
= & 2 w w\tran + 2\gamma \nabla_{xx} a_3 \vert_{x=0} \nonumber \\
=& \displaystyle 2ww\tran + \gamma \cot{\frac{\gamma}{2}} \parentheses{\eye_3 - ww\tran}.
\eea 
Note that 
\bea 
\lim_{\gamma \rightarrow 0} \gamma \cot{\frac{\gamma}{2}} = \lim_{\gamma \rightarrow 0} \frac{ (\gamma \cos{\frac{\gamma}{2}})' }{ (\sin{\frac{\gamma}{2}})' } = \lim_{\gamma\rightarrow 0} \frac{ \cos{\frac{\gamma}{2}} + \frac{\gamma}{2} \sin{\frac{\gamma}{2}} }{ \half \cos{\frac{\gamma}{2}} } = 2,
\eea
which implies the Hessian at $c$ is equal to $2\eye_3$ when $c=s$ and $\gamma = 0$. Let $(w,u,v)$ be a set of orthonormal basis in $\Real{3}$ (\ie choose $u,v$ as two orthogonal unit vectors in the plane perpendicular to $w$), we have 
\begin{subequations}
    \bea 
\nabla^2 \rotdist^2(c,s) w = & 2w, \\
\nabla^2 \rotdist^2(c,s) u = & \gamma \cot{\frac{\gamma}{2}} u, \\
\nabla^2 \rotdist^2(c,s) v = & \gamma \cot{\frac{\gamma}{2}} v,
\eea 
\end{subequations}
which states the Hessian has one eigenvalue equal to $2$ and two eigenvalues equal to $\gamma \cot{\frac{\gamma}{2}}$. Since $\gamma \cot{\frac{\gamma}{2}} \in (2\rho \cot{\rho},2]$ when $\gamma 
\in [0,2\rho)$,\footnote{Note that since $c,s \in B(O,\rho)$, we have $\gamma = \rotdist (c,s) \leq \rotdist (c,O) + \rotdist(O,s) < 2\rho$.} we have
\bea 
\nabla^2 \rotdist^2(c,s) \succ  0, \quad  \forall c,s \in B\parentheses{O,\rho} \text{ with }\rho = \halfpi,
\eea 
and $\rotdist^2(c,s)$ is strictly convex on $B(O,\rho)$ with $\rho = \halfpi$;
\bea \label{eq:strongconvexitydsquaresmallrho}
\nabla^2 \rotdist^2(c,s) \succeq 2\rho\cot{\rho} \eye_3, \quad \forall c,s \in B(O,\rho) \text{ with } \rho < \halfpi,
\eea 
and $\rotdist^2(c,s)$ is strongly convex on $B(O,\rho)$ with $\rho < \halfpi$.

{\bf Convexity of $f$}. We then proceed to show the convexity of $f$. Let $c_1,c_2 \in B(O,\rho)$, and let $c_\alpha$ be the $\alpha$-midpoint along the geodesic from $c_1$ to $c_2$ with $\alpha \in (0,1)$. Because $\calS_R$ is compact, we can write
\bea \label{eq:fcalpha}
f(c_\alpha) = \max_{s \in \calS_R} \rotdist^2(c_\alpha,s) = \rotdist^2 (c_\alpha,s_\alpha)
\eea 
with $s_\alpha \in \calS_R$ a point that attains the maximum distance to $c_\alpha$ (note the point $s_\alpha$ needs not be unique). By the strict convexity of $\rotdist^2 (c_\alpha,s_\alpha)$ when $\rho = \halfpi$, one obtains 
\bea \label{eq:convexdsquare}
\rotdist^2(c_\alpha,s_\alpha) < (1-\alpha) \rotdist^2(c_1,s_\alpha) + \alpha \rotdist^2(c_2,s_\alpha).
\eea 
One now notices that by definition of $f$,
\bea \label{eq:fci}
f(c_i) = \max_{s \in \calS_R} \rotdist^2 (c_i,s) \geq \rotdist^2 (c_i,s_\alpha), i=1,2.
\eea 
Hence, combining~\eqref{eq:fcalpha}-\eqref{eq:fci}, we conclude with the strict convexity of $f$ when $\rho = \halfpi$:
\bea 
f(c_\alpha) < (1-\alpha) f(c_1) + \alpha f(c_2).
\eea 
When $\rho < \halfpi$, from~\eqref{eq:strongconvexitydsquaresmallrho} we know $\rotdist^2(c,s)$ is $\mu$-strongly convex with $\mu = 2\rho\cot{\rho}$. Consequently, \eqref{eq:convexdsquare} can be modified according to the definition of geodesic strong convexity~\cite[Definition 11.5]{boumal23book-introduction} 
\bea 
& \rotdist^2(c_\alpha,s_\alpha) \nonumber\\
\leq & \displaystyle (1-\alpha)\rotdist^2(c_1,s_\alpha) + \alpha \rotdist^2 (c_2,s_\alpha) - \\
& \frac{\mu}{2}\alpha (1-\alpha) \rotdist^2(c_1,c_2) \nonumber\\
\leq & \displaystyle (1-\alpha) f(c_1) + \alpha f(c_2) - \frac{\mu}{2}\alpha (1-\alpha) \rotdist^2(c_1,c_2), \label{eq:dsquarestrongconvexity}
\eea
where the second inequality is again by the definition of $f$ in~\eqref{eq:fci}. Combining~\eqref{eq:fcalpha} and~\eqref{eq:dsquarestrongconvexity} we obtain the geodesic $\mu$-strong convexity of $f$:
\bea 
f(c_\alpha) \leq (1-\alpha)f(c_1) + \alpha f(c_2) - \frac{\mu}{2}\alpha (1-\alpha) \rotdist^2(c_1,c_2). \nonumber
\eea
\end{proof}

{\bf Subdifferential of supreme functions}.
To derive our final theorem on solving subdifferential of supreme functions on Riemannian manifolds (\eg $\SOthree$), we first need to introduce some definitions and lemmas.

\begin{definition}[Directional derivative~{\cite[Section 3, Definition 4.1]{udriste1994convex}}]\label{def:directderivative}
    Let $f: M \rightarrow \R$ be a function on a Riemannian manifold $M$. The directional derivative of $f$ at $x \in M$ in the direction $v \in T_x M$ is defined as
   \bea
       f'(x;v) = \lim_{t \rightarrow 0^+} \frac{f(\gamma(t)) - f(x)}{t}.
   \eea
   where $\gamma: \in [-\delta, \delta]\rightarrow M$ is a geodesic segment on $M$ with $\gamma(0) = x$ and $\gamma'(0) = v$.
\end{definition}

For a geodesically convex function $f$, for a given $x$ and $\gamma$, $f\circ \gamma: [-\delta, \delta] \rightarrow \R$ is convex. Thus $\frac{f\circ\gamma(t)-f\circ\gamma(0)}{t}$ is non-decreasing on $t>0$, and we have \cite[Section 3, Theorem 4.2]{udriste1994convex}
\bea
 f'(x;v) = \inf_{t\in(0, \delta]} \frac{f\circ\gamma(t)-f\circ\gamma(0)}{t}. \label{eq:nondecreasing}
\eea

\begin{definition}[Subdifferential on Riemannian manifold]
    Let $f: M \rightarrow \R$ be a convex function on a Riemannian manifold $M$. The subdifferential of $f$ at $x$ is defined as 
    \bea
        \partial f(x) = &\{v \in T_x M| f(\gamma(t)) \geq f(x) + t\langle v, \gamma'(0) \rangle, \nonumber\\
        & \quad \forall t>0,\forall \gamma: [-\delta, \delta]\rightarrow \R \quad s.t. \gamma(0) = x\}.
    \eea
\end{definition}

We can also use directional derivatives to define the subdifferential.
\begin{proposition}[Equivalent expression of subdifferential on Riemannian manifold~{\cite[Section 3, Theorem 4.8]{udriste1994convex}}] \label{prop:anothersubdiff}
    For every $x \in M$, $\partial f(x)$ is non-empty, convex and compact. Moreover, we have
    \bea
        \partial f(x) = \{v \in T_x M| f'(x;w) \geq \langle v, w \rangle, \quad \forall w \in T_xM\}.
    \eea
\end{proposition}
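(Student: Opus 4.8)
The plan is to fix $x \in M$, write $p(w) := f'(x;w)$ for $w \in T_x M$, let $\gamma_w$ denote a geodesic with $\gamma_w(0)=x$ and $\gamma_w'(0)=w$, and denote by $\tilde\partial f(x) := \cbrace{v \in T_x M \mid p(w) \ge \langle v,w\rangle \ \forall w \in T_x M}$ the set on the right-hand side of the claim; I then need to show $\partial f(x) = \tilde\partial f(x)$ and that this set is non-empty, convex and compact. The workhorse throughout is the monotone difference-quotient formula~\eqref{eq:nondecreasing}, which already writes $p(w)$ as an infimum of difference quotients (a consequence of geodesic convexity of $t \mapsto f(\gamma_w(t))$), together with the standard fact that a geodesically convex function is locally Lipschitz.

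First I would prove the set identity. For $\partial f(x) \subseteq \tilde\partial f(x)$: if $v \in \partial f(x)$ then $f(\gamma_w(t)) \ge f(x) + t\langle v,w\rangle$ for all $t>0$ and all $w$, so dividing by $t$ and sending $t \to 0^+$ gives $p(w) \ge \langle v,w\rangle$. For the reverse inclusion: if $v \in \tilde\partial f(x)$ then by~\eqref{eq:nondecreasing} every difference quotient satisfies $\tfrac{f(\gamma_w(t)) - f(x)}{t} \ge p(w) \ge \langle v,w\rangle$, i.e. $f(\gamma_w(t)) \ge f(x) + t\langle v,w\rangle$ for all $t>0$; since this holds along every geodesic emanating from $x$, $v \in \partial f(x)$. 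This part is routine.

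The real content is the non-emptiness, and the plan is to reduce it to the classical duality in $T_x M \cong \Real{n}$ between finite sublinear functions and non-empty compact convex sets. Convexity and closedness of $\tilde\partial f(x)$ are immediate: it is the intersection of the closed half-spaces $\cbrace{v \mid \langle v,w\rangle \le p(w)}$ over $w \in T_x M$. Compactness then follows from boundedness: local Lipschitz continuity of $f$ near $x$, with constant $L$, yields $|p(w)| \le L\norm{w}$, and applying this to both $w$ and $-w$ forces $|\langle v,w\rangle| \le L\norm{w}$ for every $v \in \tilde\partial f(x)$, hence $\norm{v}\le L$. For non-emptiness I would check that $p$ is a finite, sublinear function on $T_x M$: finiteness follows from the local Lipschitz bound, positive homogeneity $p(\lambda w)=\lambda p(w)$ for $\lambda\ge 0$ from the reparametrization $\gamma_{\lambda w}(t)=\gamma_w(\lambda t)$, and then the classical theorem that a finite sublinear function on $\Real{n}$ is the support function of a unique non-empty compact convex set identifies that set with $\tilde\partial f(x)$, giving $\tilde\partial f(x)\ne\emptyset$ (and, incidentally, re-deriving its convexity and compactness).

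The hard part is the remaining half of sublinearity, subadditivity $p(w_1+w_2)\le p(w_1)+p(w_2)$, since this is where the Euclidean proof does not carry over verbatim: the straight segment between two tangent vectors is not a geodesic, so I cannot simply evaluate $f$ at a Euclidean midpoint. My plan is to compare, to first order in $t$, the point $\gamma_{w_1+w_2}(t)$ with the geodesic midpoint of $\gamma_{2w_1}(t)$ and $\gamma_{2w_2}(t)$ --- these coincide up to $o(t)$ because $\exp_x$ is a local diffeomorphism with differential the identity at $0$ --- and then use geodesic convexity of $f$ along the geodesic joining $\gamma_{2w_1}(t)$ to $\gamma_{2w_2}(t)$, absorbing the $o(t)$ discrepancy via the local Lipschitz estimate when passing to the limit $t\to 0^+$ (and using $p(2w_i)=2p(w_i)$). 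Alternatively, one may simply invoke the sublinearity of the directional derivative of a geodesically convex function as established in~\cite{udriste1994convex}. With sublinearity secured, the topological properties and the set identity assemble at once.
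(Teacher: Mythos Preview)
The paper does not supply its own proof of this proposition; it is stated with a citation to \cite[Section 3, Theorem 4.8]{udriste1994convex} and used as a black box in the subsequent Danskin-type theorem. So there is nothing to compare against on the paper's side.

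That said, your proposal is a correct self-contained reconstruction. The two inclusions for the set identity via the monotone difference-quotient formula~\eqref{eq:nondecreasing} are exactly right, and the reduction of non-emptiness to the support-function duality on the linear tangent space $T_xM$ is the natural route. Your flagged ``hard part'' --- subadditivity of $p$ --- does go through along the lines you sketch: in normal coordinates at $x$ the geodesic joining $\exp_x(2tw_1)$ and $\exp_x(2tw_2)$ deviates from the straight segment by $O(t^3)$ (since the Christoffel symbols vanish at the origin and the geodesic stays in an $O(t)$-ball), so its midpoint has coordinates $t(w_1+w_2)+O(t^3)$ and hence lies within $o(t)$ of $\gamma_{w_1+w_2}(t)$; the local Lipschitz bound then absorbs this when you divide by $t$ and pass to the limit. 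Your fallback of citing Udri\c{s}te for sublinearity of the directional derivative is in any case the cleaner option for a paper that already relies on that reference.
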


Then we can derive the following theorem on solving subdifferential of supreme functions on Riemannian manifolds.
\begin{theorem}[Danskin's Theorem on Riemannian manifolds] \label{thm:danskin}
    Let $\calS \subset M$ be a compact set on a Riemannian manifold $M$, and let $\phi: M \times \calS \rightarrow \R$ be continuous and such that $\phi(\cdot,s)$ is geodesically convex for all $s \in \calS$.
    \begin{enumerate}[label=(\roman*)]
        \item The function $f: M \rightarrow \R$ defined by $f(x) = \max_{s \in \calS} \phi(x,s)$ is geodesically convex and has directional derivative given by 
        \bea \label{eq:directioaldiffeq}
            f'(x;v) = \max_{\bar{s} \in \calS(x)} \phi'(x,\bar{s};v), \quad \forall x \in M, v \in T_x M.
        \eea
        where $\phi'(x,s;v)$ is the directional derivative of $\phi$ at $(x,\bar{s})$ in the direction $v$ and $\calS(x) = \{\bar{s} \in \calS: f(x) = \phi(x,\bar{s})\}$ is the set of maximizing points of $\phi(x, \cdot)$.
        \item If $\phi(\cdot, s)$ is differentiable at all $s \in \calS$ and $\nabla_x\phi(x, \cdot)$ is continuous on $\calS$ for each $x$, then 
        \bea \label{eq:subdiffeq}
            \partial f(x) = \conv{\{\nabla_x \phi(x,\bar{s}): \bar{s} \in \calS(x)\}}.
        \eea
    \end{enumerate}
\end{theorem}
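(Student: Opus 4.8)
The plan is to mirror the classical Euclidean proof of Danskin's theorem, performing every computation along a geodesic segment $\gamma:[-\delta,\delta]\to M$ with $\gamma(0)=x$ and $\gamma'(0)=v$, so that the manifold enters only through Definition~\ref{def:directderivative} and the monotonicity of difference quotients of geodesically convex functions recorded in~\eqref{eq:nondecreasing}.

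First I would dispatch the easy parts of (i). For any geodesic $\gamma$, each $t\mapsto\phi(\gamma(t),s)$ is convex, so $t\mapsto f(\gamma(t))=\max_{s\in\calS}\phi(\gamma(t),s)$ is a pointwise supremum of convex functions, hence convex; thus $f$ is geodesically convex. Since $\phi$ is continuous and $\calS$ is compact, $f$ is continuous and the maximizer set $\calS(x)$ is a nonempty closed, hence compact, subset of $\calS$. For the ``$\ge$'' half of~\eqref{eq:directioaldiffeq}: for $\bar s\in\calS(x)$ we have $f(\gamma(t))\ge\phi(\gamma(t),\bar s)$ for all $t$ and $f(\gamma(0))=\phi(\gamma(0),\bar s)$, so dividing by $t>0$ and letting $t\to0^+$ (the limit exists by~\eqref{eq:nondecreasing}) gives $f'(x;v)\ge\phi'(x,\bar s;v)$, and taking the max over $\bar s\in\calS(x)$ yields one inequality.

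The main obstacle is the reverse inequality. Here I would pick any sequence $t_k\downarrow 0$ and maximizers $s_k\in\calS(\gamma(t_k))$ (nonempty by compactness of $\calS$), extract a subsequence with $s_k\to\bar s\in\calS$, and use continuity of $\phi$ and of $f$ to conclude $\phi(x,\bar s)=\lim_k\phi(\gamma(t_k),s_k)=\lim_k f(\gamma(t_k))=f(x)$, i.e.\ $\bar s\in\calS(x)$. The delicate point is that one cannot pass $k\to\infty$ directly in the quotient $\tfrac{\phi(\gamma(t_k),s_k)-\phi(x,s_k)}{t_k}$, since the numerator and the step shrink while $s_k$ drifts; instead I would first freeze the step at a fixed $t\in(0,\delta]$: for $t_k\le t$, the monotonicity~\eqref{eq:nondecreasing} applied to $\tau\mapsto\phi(\gamma(\tau),s_k)$, together with $f(x)\ge\phi(x,s_k)$, gives
\[
\frac{f(\gamma(t_k))-f(x)}{t_k}\ \le\ \frac{\phi(\gamma(t_k),s_k)-\phi(x,s_k)}{t_k}\ \le\ \frac{\phi(\gamma(t),s_k)-\phi(x,s_k)}{t}.
\]
Letting $k\to\infty$ (continuity of $\phi$, $s_k\to\bar s$, and $\phi(x,\bar s)=f(x)$) and then $t\to0^+$ yields $f'(x;v)\le\phi'(x,\bar s;v)$ with $\bar s\in\calS(x)$, which closes the loop and shows the max in~\eqref{eq:directioaldiffeq} is attained.

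Finally, for (ii) I would combine (i) with the directional-derivative characterization of the subdifferential in Proposition~\ref{prop:anothersubdiff}. Differentiability of $\phi(\cdot,s)$ gives $\phi'(x,s;v)=\inprod{\nabla_x\phi(x,s)}{v}$, so (i) reads $f'(x;v)=\max_{\bar s\in\calS(x)}\inprod{\nabla_x\phi(x,\bar s)}{v}$. Setting $K=\conv{\{\nabla_x\phi(x,\bar s):\bar s\in\calS(x)\}}\subset T_xM$, the generating set is the continuous image of the compact set $\calS(x)$, hence $K$ is compact and convex, and since a linear functional attains its maximum over a convex hull on the generating set, the support function of $K$ equals $f'(x;v)$. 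Then Proposition~\ref{prop:anothersubdiff} gives $\partial f(x)=\{v\in T_xM:\inprod{v}{w}\le f'(x;w)\ \forall w\}=\{v:\inprod{v}{w}\le\sigma_K(w)\ \forall w\}$, and the last set equals $K$ by the recovery of a compact convex set from its support function in the finite-dimensional space $T_xM$; this is~\eqref{eq:subdiffeq}. I expect (ii) to be routine once (i) is established; the genuine work is the freeze-the-step limiting argument in (i).
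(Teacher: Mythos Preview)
Your proposal is correct and follows the same overall architecture as the paper's proof: mirror the Euclidean Danskin argument along a fixed geodesic, first establishing~\eqref{eq:directioaldiffeq} by a two-sided inequality and then deducing~\eqref{eq:subdiffeq} from the directional-derivative description of the subdifferential. The differences are purely in the technical execution of two steps, and in both cases your route is somewhat cleaner.

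For the reverse inequality in (i), the paper also picks $t_k\downarrow 0$, $s_k\in\calS(\gamma(t_k))$, and extracts a limit $\bar s\in\calS(x)$, but then bounds the quotient via the chain $\tfrac{f(\gamma(t_k))-f(x)}{t_k}\le -\phi'(\gamma(t_k),s_k;-v_k)\le \phi'(\gamma(t_k),s_k;v_k)$ and invokes an auxiliary lemma (a one-dimensional upper-semicontinuity result for right derivatives of convex functions, \cite[Proposition~B.23]{bertsekas97-nonlinear}) to pass to $\limsup_k \phi'(\gamma(t_k),s_k;v_k)\le \phi'(x,\bar s;v)$. Your ``freeze-the-step'' argument---bounding the $t_k$-quotient by the $t$-quotient at a fixed larger $t$ via~\eqref{eq:nondecreasing}, then letting $k\to\infty$ before $t\to 0^+$---achieves the same conclusion using only continuity of $\phi$ and avoids the extra lemma entirely.

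For (ii), the paper verifies $\conv{\{\nabla_x\phi(x,\bar s)\}}\subset\partial f(x)$ by checking the subgradient inequality for single gradients and explicit convex combinations, and obtains the reverse inclusion by a separating-hyperplane argument in $T_xM$. Your support-function formulation is the same content packaged more compactly: once (i) gives $f'(x;\cdot)=\sigma_K$, Proposition~\ref{prop:anothersubdiff} and the bipolar/support-function recovery of a compact convex set yield $\partial f(x)=K$ in one line. Both arguments rely on $K$ being compact convex in the finite-dimensional $T_xM$, which you correctly justify via continuity of $\nabla_x\phi(x,\cdot)$ on the compact $\calS(x)$.
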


\begin{proof}
    The structure of the proof follows the Euclidean version in \cite[Proposition B.25]{bertsekas97-nonlinear} but with different definitions according to the Riemannian manifold.

    {\bf Convexity of $f$}. Similar to the proof of Proposition 1. The conclusion holds for an arbitrary geodesic convex function $\phi(c, s)$.

    {\bf Directional derivative of $f$}. According to the definition of $f$, for any $\bar{s}\in \calS(x)$, we have $f(x) = \phi(x, \bar{s})$ and $f(\gamma(t)) \geq \phi(\gamma(t), \bar{s})$ for all $t \in [-\delta, \delta]$. Thus we have
    \bea
        \frac{f(\gamma(t)) - f(x)}{t} \geq \frac{\phi(\gamma(t), \bar{s}) - \phi(x, \bar{s})}{t}.
    \eea
    We take the limit $t \rightarrow 0^+$ on both sides and obtain $f'(x;v) \geq \phi'(x,\bar{s};v)$ for all $\bar{s} \in \calS(x)$. Thus we have
    \bea \label{eq:directioaldiffgeq}
        f'(x;v) \geq \max_{\bar{s}\in \calS(x)}\phi'(x,\bar{s};v)
    \eea
     
    To prove the reverse inequality, we consider $\{t_k\}\rightarrow 0^+$ and let $x_k = \gamma(t_k)$ where velocity $\gamma'(t_k)=v_k$. For each $k$, we find $s_k\in \calS(x_k)$. Since $\calS$ is compact, there exists a subsequence $\{s_{k_j}\}$ converging to some $s_0 \in \calS$. Without loss of generality, we assume $\{s_k\}$ converges to $s_0$. Since $\phi$ is continuous, we have $\phi(x_k, s_k) \rightarrow \phi(x, s_0)$. 
    Thus, for any $s\in \calS$, we have $\phi(x_k, s_k) \geq \phi(x_k, s)$. We let $k\rightarrow +\infty$ and we obtain $\phi(x, s_0) \geq \phi(x, s)$. Thus $s_0 \in \calS(x)$ and $f(x) = \phi(x, s_0)$. We now have
    \bea
        f'(x; v)   \leq & \displaystyle \frac{f(x_k)-f(x)}{t_k} \\
                   = & \displaystyle \frac{\phi(x_k, s_k) - \phi(x, s_0)}{t_k}\\
                   \leq & \displaystyle \frac{\phi(x_k, s_k) - \phi(x, s_k)}{t_k}\\
                   \leq & \displaystyle -\phi'(x_k, s_k; -v_k) \\ \label{eq:negativedirectdiff}
                   \leq & \displaystyle \phi'(x_k, s_k; v_k) 
    \eea

    The inequality in \eqref{eq:negativedirectdiff} is the result of \eqref{eq:nondecreasing} applied on $\phi'(x_k, s_k; -v_k)$. The last inequality is proved in \cite[Section 3, Theorem 4.2]{udriste1994convex}.
    We now consider function $g_k(\cdot) = \phi(\gamma(\cdot), s_k):\mathbb{R}\to\mathbb{R}$, and $g(\cdot) = \phi(\gamma(\cdot), s_0)$. From \cite[Section 3, Theorem 4.2]{udriste1994convex}, $g_k,g$ are convex functions. By the continuity of function $\phi,\gamma$, $\lim_{k\to\infty}g_k(t_k) = g(0)$. Then we apply the following lemma.

    \begin{lemma}[{\cite[Proposition B.23]{bertsekas97-nonlinear}}]
        Suppose $g:\mathbb{R}\to \mathbb{R}$ is a convex function and $\{t_k\}\to t$ is a convergent sequence. If $g_k:\mathbb{R}\to \mathbb{R}$ is a sequence of convex functions with property that $\lim_{k\to\infty}g_k(t_k) = g(t)$. Then we have$$\limsup_{k\to \infty}g_k^'(t_k)\leq g^'(t)$$
    \end{lemma}

    \begin{proof}
        In this proof, $g(\cdot)^'$ and $g_k(\cdot)^'$ denotes the right directional derivative.

        Suppose $\mu > g^'(t)$, then since $g$ is convex, we can choose a $\bar{t}$, such that $\forall \Delta t<\bar{t}$, we have:
        \bea
            \frac{g(t + \Delta t) - g(t)}{\Delta t} < \mu
        \eea
        Thus, for relatively large $k$, we have:
        \bea
            \frac{g_k(t_k + \Delta t) - g_k(t_k)}{\Delta t} < \mu
        \eea
        But letting $\Delta t \to 0^+$, this implies:
        \bea
        \limsup_{k\to \infty}g_k^'(t_k) < \mu
        \eea
        Since this inequality holds for any $\mu > g^'(t)$, so we can conclude that: 
        \bea
        \limsup_{k\to \infty}g_k^'(t_k)\leq g^'(t)
        \eea
    \end{proof}

    We can see that the directional derivative of $g_k,g$ is just the directional derivative we have defined for $\phi$ along the geodesic. So that we have
    \bea
        f'(x; v)\leq \limsup_{k\to\infty} \phi'(x_k, s_k; v_k) \leq  \phi'(x, s_0; v)
    \eea

    Since $s_0 \in \calS(x)$ is arbitrary, we have $f'(x;v) \leq \max_{\bar{s} \in \calS(x)} \phi'(x,\bar{s};v)$. This relation together with inequality \eqref{eq:directioaldiffgeq} proves the equality \eqref{eq:directioaldiffeq}.

    {\bf Subdifferential of $f$}. 
    Since we assumed $\phi(\cdot, s)$ is Riemannian differentiable for all $s\in \calS$, we have $\partial_x \phi(x, s) = \{\nabla_x \phi(x, s)\}$ and for any geodesic segment $\gamma$ with $\gamma(0)=x$, we have $\phi(\gamma(t), s)\geq \phi(\gamma(0), s) + t\langle \nabla_x \phi(x, s), \gamma'(0)\rangle$. Thus for all $\bar{s} \in \calS$ we have
    \bea
        f(\gamma(t)) =    & \displaystyle \max_{s\in \calS} \phi(\gamma(t), s) \nonumber \\
                     \geq & \displaystyle \phi(\gamma(t), \bar{s}) \nonumber \\
                     \geq & \displaystyle \phi(x, \bar{s}) + t\langle \nabla_x \phi(x, \bar{s}), \gamma'(0)\rangle \nonumber \\
                     =    & \displaystyle f(x) + t\langle \nabla_x \phi(x, \bar{s}), \gamma'(0)\rangle
    \eea
    Therefore $\nabla_x \phi(x, \bar{s}) \in \partial f(x)$. Consider a convex combination of two gradients
    \bea 
        v = \alpha\nabla_x\phi(x, \bar{s}_1) + (1-\alpha) \nabla_x\phi(x, \bar{s}_2), \quad \alpha \in [0,1],
    \eea
    we still have 
    \bea
        f(\gamma(t)) = & \displaystyle \max_{s\in \calS} \phi(\gamma(t), s)\nonumber \\
                    \geq & \displaystyle \alpha \phi(\gamma(t), \bar{s}_1) + (1-\alpha) \phi(\gamma(t), \bar{s}_2)\nonumber \nonumber\\
                    \geq & \displaystyle \alpha\left(\phi(x, \bar{s}_1) + t\langle \nabla_x \phi(x, \bar{s}_1), \gamma'(0)\rangle\right)\nonumber \\
                    &+ (1-\alpha)\left(\phi(x, \bar{s}_2) + t\langle \nabla_x \phi(x, \bar{s}_2), \gamma'(0)\rangle\right)\nonumber \\
                    = & \displaystyle \alpha \phi(x, \bar{s}_1) + (1-\alpha) \phi(x, \bar{s}_2) + t\langle \alpha \nabla_x\phi(x, \bar{s}_1)\nonumber\\
                    &+(1-\alpha) \nabla_x\phi(x, \bar{s}_2), \gamma'(0)\rangle \nonumber\\
                    =   & \displaystyle f(x) + t\langle v, \gamma'(0)\rangle
    \eea
    Thus $v \in \partial f(x)$. Therefore 
    \bea \label{eq:supset}
        \partial f(x) \supset \conv{\{\nabla_x \phi(x,\bar{s}): \bar{s} \in \calS(x)\}}.
    \eea

    To prove the reverse inclusion, we use the hyperplane separation theorem. By the continuity of $\nabla_x\phi(x, \cdot)$ and the compactness of $\calS$, we have $\calS(x)$ is compact and $\{\nabla_x\phi(x, \bar{s}) | s\in\calS\}$ is also compact. If $d\in \partial f(x) \subset T_xM$ while $d\notin \conv{\{\nabla_x \phi(x,\bar{s}): \bar{s} \in \calS(x)\}}$, according to the strict separating theorem: there exist $v\in T_xM$ and $\mu \in \R$ such that 
    \bea
        \langle d, v \rangle > \mu > \langle \nabla_x \phi(x, \bar{s}), v \rangle, \quad \forall \bar{s} \in \calS(x)
    \eea
    Thus $\langle d, v\rangle > \max_{\bar{s}\in \calS(x)} \langle \nabla_x \phi(x, \bar{s}), v \rangle = f'(x; v)$, which contradicts the equivalent definition of subdifferential in Proposition \ref{prop:anothersubdiff}. Therefore $\partial f\subset \conv{\{\nabla_x \phi(x,\bar{s}): \bar{s} \in \calS(x)\}}$ and together with \eqref{eq:supset} we obtain the equality \eqref{eq:subdiffeq}.
\end{proof}

The key takeaway of the theorem above is that, if $f$ satisfies some convexity and continuity conditions, the subdifferential of a $\min\max f$ is just the convex hull of the gradients of the function $f$ at the maximizer. For our problem, we have $f(c) = \max_{s\in \calS_R} \rotdist^2(c,s)$, the problem is equivalent to the $f(c) = \max_{s\in \calS_R} \|c - s\|_F^2$. Thus, we just need to find the gradients of the function $\|c - s\|_F^2$ at the maximizer $s$ to obtain the subdifferential of $f$, which is just the vector points to the maximum distance point.

{\bf SDP-based geodesic gradient descent}.

The original maximization problem is presented as follows:
\bea
    \max_{R,t} & \|R - R_0\|_F^2 \nonumber\\
    \subject & (R,t) \in \purse
\eea

The \purse set constraint in Example~\ref{ex:3D3D} contains multiple polynomial constraints. For how to formulate the \purse set constraint in Example~\ref{ex:2D3D} into polynomial constraints, we refer to~\cite{yang23cvpr-object}, and for how to solve this problem through relaxations, we refer to the original paper~\cite{lasserre2001global}, and two appendices in~\cite{yang23cvpr-object,tang23arxiv-uncertainty}.


\begin{algorithm}[t]
    \SetAlgoLined
    {\bf Input: } \purse $S$; initial rotation center $R_0$; overall iteration steps $N_I$;\\
    {\bf Output: } rotation center $R^*$; sampled boundary poses $\hat{\partial S_R}$; \\$\hat{\partial S_R}\gets \emptyset$;\\

    \For{$i \gets 1$ to $N_I$}{
        $(\hat{R}, \hat{t}) \gets$ sdp_relaxation($R_{i-1}$, $S$); \% Find the poses in $S$ with the maximum distance with $R_{i-1}$\\
        $R_i$ = SLERP($R^*$, $\hat{R}$, $1/i$); \% Apply spherical linear interpolation (SLERP) with ratio $1/i$ to find the next step\\
        $\hat{\partial S_R} \gets \hat{\partial S_R} \cup \{\hat{R}\}$;\\
    }
    \% calculate $R^*$ as the average rotation of the last $10$ $R_t$ \\
    $R^* \gets \text{proj}_{\SOthree} \sum_{t=T-9}^{T} R_t$;\\
    {\bf return:} $R^*$, $\hat{\partial S_R}$\\
    \caption{SDP-based geodesic gradient descent~\label{alg:sdp-iter}}
\end{algorithm}

Based on the analysis above, the key problem turns into finding the maximum distance point to the current point $R_0$ on $\calS_R$. Maximizing a convex function on a convex set is already difficult to solve, let alone our problem here is a non-convex basic semialgebraic set. We leverage a hierarchy of convex relaxations based on sums-of-squares (SOS) programming~\cite{lasserre2001global}.

With the properties discussed in Theorem~\ref{thm:danskin}, Algorithm~\ref{alg:sdp-iter} can be regarded as a subgradient method on $\SOthree$. We then apply the following theorem to guarantee the linear convergence of our subgradient method.
\begin{theorem}[Subgradient Descent Convergence Rate~{\cite[Theorem 11]{zhang2016first}}] \label{thm:subgradconv}
    If $f$ is geodesically $\mu$-strongly convex and $L_f$-Lipschitz, and the sectional curvature of the manifold is lower bounded by $\kappa\le 0$, then the subgradient method with $\eta_s = \frac{2}{\mu(s+1)}$  satisfies
    \[ f\left(\overline{x}_t \right) - f(x^*) \le \frac{2\zeta(\kappa,D)L_f^2}{\mu(t+1)},\]
     where $\overline{x}_1 = x_1$, and $\overline{x}_{s+1} = \exp_{\overline{x}_s}\left(\frac{2}{s+1}\exp_{\overline{x}_s}^{-1}(x_{s+1})\right)$.
\end{theorem}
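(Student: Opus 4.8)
The plan is to prove this as the Riemannian counterpart of the classical $O(1/t)$ rate for Euclidean projected subgradient descent on a strongly convex objective, following the template of~\cite{zhang2016first}, and to isolate the single place where curvature enters. That place is a ``law of cosines'' comparison inequality: in a manifold whose sectional curvature is bounded below by $\kappa\le 0$, a geodesic triangle with side lengths $a,b,c$ and angle $A$ between the sides of lengths $b$ and $c$ satisfies
\[
a^2\ \le\ \zeta(\kappa,c)\,b^2 + c^2 - 2bc\cos{A},\qquad \zeta(\kappa,c)=\frac{\sqrt{\abs{\kappa}}\,c}{\tanh\parentheses{\sqrt{\abs{\kappa}}\,c}},
\]
with $\zeta\equiv 1$ when $\kappa=0$. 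Since $\zeta(\kappa,\cdot)$ is nondecreasing and all iterates stay within distance $D$ (the diameter of the geodesically convex region of interest) of the minimizer $x^*$, every $\zeta(\kappa,c)$ below is replaced by the constant $\zeta(\kappa,D)$. I also record the two facts about $f$ that will be used: geodesic $\mu$-strong convexity gives $f(x^*)\ge f(x)+\inprod{g}{\exp_x^{-1}(x^*)}+\tfrac{\mu}{2}\,d(x,x^*)^2$ for every subgradient $g\in\partial f(x)$ (with $d$ the Riemannian distance; such $g$ exist by Proposition~\ref{prop:anothersubdiff}), and $L_f$-Lipschitzness gives $\norm{g}\le L_f$.

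The heart of the proof is a one-step estimate. Writing the update as $x_{s+1}=\exp_{x_s}(-\eta_s g_s)$ with $g_s\in\partial f(x_s)$, I apply the comparison inequality to the geodesic triangle $x_s,x_{s+1},x^*$: with $a=d(x_{s+1},x^*)$, $b=\eta_s\norm{g_s}$, $c=d(x_s,x^*)$, and noting that the edge from $x_s$ to $x_{s+1}$ has initial velocity $-\eta_s g_s$ so that $bc\cos{A}=-\eta_s\inprod{g_s}{\exp_{x_s}^{-1}(x^*)}$, this yields
\[
d(x_{s+1},x^*)^2\ \le\ d(x_s,x^*)^2 + \zeta(\kappa,D)\,\eta_s^2\norm{g_s}^2 + 2\eta_s\inprod{g_s}{\exp_{x_s}^{-1}(x^*)}.
\]
Inserting the strong-convexity lower bound for the inner product and then $\norm{g_s}\le L_f$, and rearranging, gives
\[
f(x_s)-f(x^*)\ \le\ \frac{\zeta(\kappa,D)L_f^2\,\eta_s}{2} + \frac{1-\mu\eta_s}{2\eta_s}\,d(x_s,x^*)^2 - \frac{1}{2\eta_s}\,d(x_{s+1},x^*)^2 .
\]

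Next I plug in $\eta_s=\frac{2}{\mu(s+1)}$, so that $\frac{1}{2\eta_s}=\frac{\mu(s+1)}{4}$ and $\frac{1-\mu\eta_s}{2\eta_s}=\frac{\mu(s-1)}{4}$, and I weight the $s$-th inequality by the weight $w_s$ that $x_s$ carries in the definition of $\overline{x}_t$ (so that $\overline{x}_t$ is the geodesic weighted average of $x_1,\dots,x_t$ with weights proportional to $1,2,\dots,t$, i.e.\ $\overline{x}_{s+1}$ is reached from $\overline{x}_s$ by travelling the fraction $\lambda_s=\frac{w_{s+1}}{w_1+\cdots+w_{s+1}}$ of the geodesic toward $x_{s+1}$). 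Summing $s=1,\dots,t$, the squared-distance terms telescope — the negative coefficient $-\frac{\mu s(s+1)}{4}$ on $d(x_{s+1},x^*)^2$ at step $s$ is exactly cancelled by the positive coefficient on the same quantity at step $s+1$ — the $s=1$ boundary term vanishes (its coefficient $\propto s-1$ is $0$), and the terminal term is $\le 0$; hence
\[
\sum_{s=1}^{t} w_s\bigl(f(x_s)-f(x^*)\bigr)\ \le\ \frac{\zeta(\kappa,D)L_f^2}{\mu}\sum_{s=1}^{t}\frac{s}{s+1}\ \le\ \frac{\zeta(\kappa,D)L_f^2\,t}{\mu}.
\]
Dividing by $\sum_{s=1}^t w_s=\frac{t(t+1)}{2}$ bounds the weighted average of the gaps by $\frac{2\zeta(\kappa,D)L_f^2}{\mu(t+1)}$. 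Finally a geodesic Jensen step transfers this to $\overline{x}_t$: since $\overline{x}_{s+1}$ lies on the geodesic from $\overline{x}_s$ to $x_{s+1}$, geodesic convexity of $f$ gives $f(\overline{x}_{s+1})\le(1-\lambda_s)f(\overline{x}_s)+\lambda_s f(x_{s+1})$, and an induction on $s$ yields $f(\overline{x}_t)\le\frac{1}{\sum_s w_s}\sum_{s=1}^t w_s f(x_s)$; combining with the previous display completes the proof.

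I expect the comparison inequality of the first step to be the real obstacle: it is where the curvature lower bound is paid for (via a Rauch/Toponogov-type argument), it dictates why the step size and averaging weights must be exactly as stated, and it tacitly requires the iterates to remain where geodesics are minimizing and $\exp^{-1}$ is single-valued — which is the purpose of the diameter $D$ and the hypothesis $\kappa\le 0$. Everything downstream is a routine transcription of the Euclidean subgradient argument with $\exp_x^{-1}$ in place of a vector difference; in the paper's use case the subgradient $g_s$ is furnished by Danskin's theorem on manifolds (Theorem~\ref{thm:danskin}), so the SLERP iterates of Algorithm~\ref{alg:sdp-iter} are genuine subgradient steps and the bound applies directly.
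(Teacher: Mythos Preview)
The paper does not prove this theorem at all: it is quoted verbatim as \cite[Theorem 11]{zhang2016first} and then immediately applied to Algorithm~\ref{alg:sdp-iter}, with no accompanying argument. So there is no ``paper's own proof'' to compare against; your proposal is a reconstruction of the argument from the cited reference, not of anything the present paper supplies.

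As a reconstruction of the Zhang--Sra proof, your outline is faithful and essentially correct: the curvature-corrected cosine inequality with the $\zeta(\kappa,D)$ factor, the one-step descent lemma, the telescoping sum with weights proportional to $s$ under the step size $\eta_s=\tfrac{2}{\mu(s+1)}$, and the geodesic Jensen step for the running average are exactly the ingredients of that result. One bookkeeping wrinkle: with $\overline{x}_1=x_1$ and the update fraction $\tfrac{2}{s+1}$ as stated in the theorem, the induced weights on $x_s$ are $w_s=s-1$ (so $\overline{x}_2=x_2$), not $w_s=s$ as you write; conversely, weights $w_s=s$ would call for fraction $\tfrac{2}{s+2}$. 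Your telescoping computation actually uses $w_s=s$, which does telescope cleanly against the coefficients $\tfrac{\mu(s-1)}{4}$ and $\tfrac{\mu(s+1)}{4}$, so the bound you derive is correct for that weighting; the mismatch is only between your stated weights and the averaging formula in the theorem statement. This is an off-by-one indexing issue inherited from how the theorem is quoted and does not affect the $O(1/t)$ rate or the soundness of the argument.
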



\begin{figure}[t]
	\begin{center}
		\begin{tabular}{c}
			\begin{minipage}{0.5\columnwidth}
				\centering
				\includegraphics[width=\textwidth]{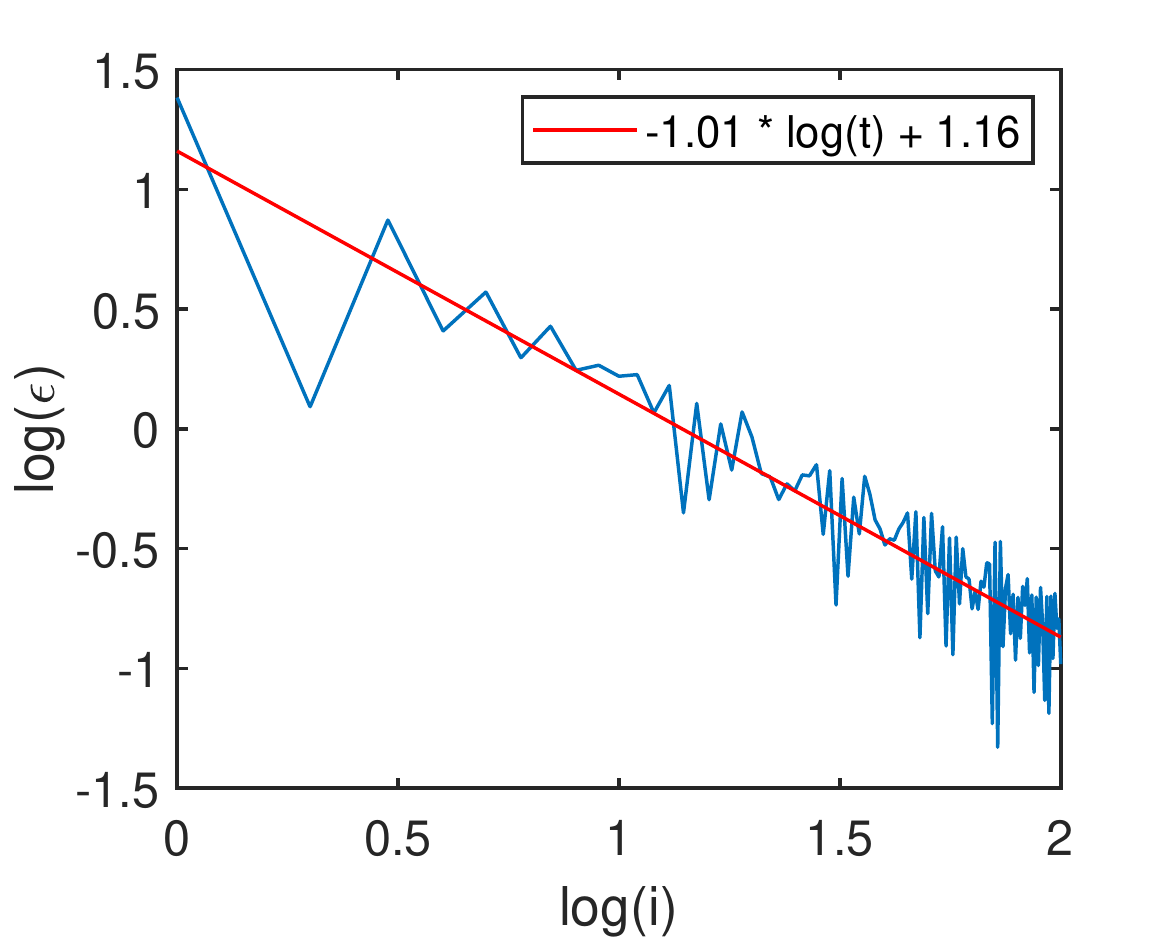}\\
			\end{minipage}
		\end{tabular}
	\end{center}
	\vspace{-2mm}
	\caption{The distance between the rotation center output and the intermediate rotations at different steps $\epsilon = \rotdist(R^*, R_i)$ acquired by SDP solver. The slope in the log scaled figure indicates the convergence rate of Algorithm~\ref{alg:sdp-iter}.
	\label{fig:sdp-iter}}
\end{figure}

We apply numerical experiments to validate the convergence rate of Algorithm~\ref{alg:sdp-iter} using commercial optimizer MOSEK\footnote{https://www.mosek.com/}. We run the SDP iteration for $100$ steps and calculate the difference between the rotation at each step and the final result $\epsilon_i = \rotdist(R_i, R^*)$. We then plot the $\epsilon_i$ against the iteration number $i$ in Figure~\ref{fig:sdp-iter} in logarithm scales. The $\log(\epsilon_i)$ decreases linearly with the iteration number $\log(i)$ with a slope close to $1$, which supports the theoretical conclusion of Theorem~\ref{thm:subgradconv}.

However, solving SDP is extremely time-consuming, and such an iterative method cannot be implement in parallel. Running on a laptop with a 12-core Intel i5-13500H CPU, each step takes more than $15$ seconds to solve, thus the overall algorithm with $100$ steps takes more than $25$ minutes to run. Therefore, this method is impractical in real-time applications.


\subsection{Boundary Sampler Algorithms}
In this section, we give a more detailed explanation of the translation boundary $\partial S_t$ sampler in Algorithm~\ref{alg:sample_t}. We also provide its parallelized version in Algorithm~\ref{alg:sample_t_parallel}. 

\subsubsection{Translation Boundary $\partial S_t$ Sampler}

\setlength{\textfloatsep}{0pt}%
\begin{algorithm}[t]
\SetAlgoLined
    {\bf Input: } initial poses $S_0\subset \SEthree$; \purse $S$; base center velocity magnitude $v_0$; time step decay factor $\gamma$; rotation perturbation scale $\theta_p$; random walk trial number $N_W$; iteration number $N_I$; perturbation number $N_P$; optimal perturbation number $N_P^*$; time step scaling number $N_T$;\\
    {\bf Output: } sampled boundary poses $\hat{\partial S_t} \subset \R^3$ as an inner approximation of $\partial S_t$;\\

    $\bar{R} \gets \text{proj}_{\SOthree}( \sum_{(R_j,*) \in S_0} R_j )$; \label{line:Ravg}\\
    $\bar{t} \gets \frac{1}{\vert S_0 \vert } \sum_{(*,t_j) \in S_0} t_j $; \label{line:tavg} \\
    $\hat{\partial S_t} \gets \emptyset$; \\
    \For{ $(R_0, t_0) \in S_0$}{
        \For { $n \gets 1$ to $N_W$} {
            $v \gets$ init_center_velocity$(t_0, \bar{t}, v_0)$; \\
            $R^*\gets R_0, t^* \gets t_0$; \\
            \% Iterate $N_I$ times so that the evolved pose gets close to $\partial S_t$\\
            \For { $i \gets 1$ to $N_I$} { \label{line:iterate}
                \% Randomize $N_P$ translation perturbations
                \For {$j \gets 1$ to $N_P$} {
                    $R_p$ = perturbation($\theta_p$); \\
                    $R_j \gets R_pR^*$ ; \\
                    $d_j \gets \text{dist}(R^*, t_j, \partial S)$; \\
                }
                \% Pick out $N_P^*$ perturbations that drag the pose away from $\partial S$\\
                $\{j_k\}_{k=1}^{N_P^*} \gets$ top_k_indices($\{d_j\}_{j=1}^{N_P}$, $N_P^*$); \\
                \For {$k \gets 1$ to $N_p^*$} {
                    \For {$ m \gets 1$ to $N_T$} {
                        $\Delta T \gets \gamma^{m-1}$ \\
                        $t_{km} \gets $ update_translation($t^*$, $R_{j_k}$, $v$, $\Delta T$); \label{line:update-t}\\
                        $I_{km} \gets $ in_purse($R_{j_k}, t_{km}$); \\
                    }
                }
                \% Find the optimal pose that is still in $S$ and has the maximum rotation movement\\
                $m_0 \gets \min\cbrace{m \mid \exists k \text{\ s.t.\ } I_{km} = 1}$; \label{line:start-find-farthest}\\ 
                $k_0 \gets \cbrace{k \mid I_{km_0} = 1}$; \\
                $R^* \gets R_{k_0m_0}$; \\
                $t^* \gets t_{k_0m_0}$; \label{line:end-find-farthest}\\
            }
            
            $\hat{\partial S_t} \gets \hat{\partial S_t} \cup t^*$; \\
        }
    }
    {\bf return:} $\hat{\partial S_t}$\\

    \caption{$\partial S_t$ sampler~\label{alg:sample_t}}
\end{algorithm}

In general, $\partial S_t$ sampler is similar to $\partial S_R$ sampler, but in $\R^3$ instead of $\SOthree$. However, there are two differences significantly impacting the result of relative ratio $\eta_t$ for the two examples that need to be considered.

For object pose estimation (Example \ref{ex:2D3D}), we need to initialize the center velocity with additional scaling along a specific axis. As shown in Fig.~\ref{fig:sample_comp}(a), we notice that $S_t$ (on the right) is in a needle-like shape with long expansion in one direction but very thin in two other orthogonal directions. If we uniformly randomize center velocities of the random walks, most of the walks will stop close to the center of $S_t$. Therefore, the two points on the two ends of $S_t$ are not likely to be reached.

To address this problem, we apply additional scaling to the translation velocity according to the principle component analysis (PCA) of $S_{t0}=\{t_0| (R_0, t_0)\in S_0\}$. Specifically, we first calculate the PCA of $S_{t0}$ so that we obtain the first weight vector ${u_1}$ which is parallel to the longest axis of $S_t$. In this way, we can rescale the center velocity based on the expansion of $S_{t0}$ on $3$ PCA weights. Thus, the center velocity is more likely to point to the direction of the longest axis of $S_t$. In fact, the algorithm already performs well if we directly use the first PCA weight as the center velocity. 

For point cloud registration (Example~\ref{ex:3D3D}), we found that $\partial S_R$ sampler is more efficient in sampling $\partial S_t$ than the $\partial S_t$ sampler in Algorithm~\ref{alg:sample_t}. We explain this surprising result with the following reason. 

Suppose the first point cloud is $A\in \R^{3\times N}$, a transform in $(R, t)\in \SEthree$ represents $RA + t$. During the random walk process, suppose we find an additional transform $(\tilde{R}, \tilde{t})$, and then the overall transform is $(\tilde{R}R, \tilde{R}t + \tilde{t})$. The rotation part is a direct multiplication of the two rotation matrices, therefore $\partial S_R$ is efficiently sampled even if we only vary $\tilde{R}$. However, when we use Algorithm~\ref{alg:sample_t} to sample $\partial S_t$, the translation part $\tilde{R}t + \tilde{t}$ not only depends on $\tilde{t}$, but also on $\tilde{R}$. Even if we extensively explore $\tilde{t}$ in the random walks, the overall translation could expand even wider due to the independent term $\tilde{R}t$ which is not optimized in the $\partial S_t$ sampler. Thus $\partial S_t$ might not be fully explored.

\subsubsection{Translation Boundary $\partial S_t$ Sampler with Parallelization}

Similar to the parallelized $\partial S_R$ sampler, we implement the parallelized version of $\partial S_t$ sampler on NVIDIA GPUs using CuPy. We annotate all the dimensions of each variable, so that readers can easily understand the parallelized computation pipeline. We provide more detailed explanations of the GPU operators as follows.

\begin{itemize}
    \item repeat($A$, $N$): repeat the array $A$ for $N$ times along certain dimension.
    \item matmul($A$, $B$): parallel matrix multiplication of $A$ and $B$. If $A$ or $B$ has more than 2 dimensions, the last two dimensions are treated as the matrix dimensions and all other dimensions are broadcasted.
    \item find_farthest_translation($R$, $t$, $I$): find the pose that is still in $S$ (indicated by $I$) and has the maximum translation movement from all the poses in $R$ and $t$. This is equivalent to line~\ref{line:start-find-farthest}-\ref{line:end-find-farthest} in Algorithm~\ref{alg:sample_t} but with parallel computation on all $|S_0|N_W$ walks.
\end{itemize}

The other operators not mentioned here are similar to the ones in the non-parallelized version but applies to all input values simultaneously.


\begin{algorithm}[t]
    \SetAlgoLined
    {\bf Input: } initial poses $S_0\subset \SEthree$; \purse $S$; base center velocity magnitude $v_0$; time step decay factor $\gamma$; rotation perturbation scale $\theta_p$; random walk trial number $N_W$; iteration number $N_I$; perturbation number $N_P$; optimal perturbation number $N_P^*$; time step scaling number $N_T$;\\
    {\bf Output: } sampled boundary poses $\hat{\partial S_t} \subset \R^3$ as an inner approximation of $\partial S_t$;\\
    $v \gets$ init_center_velocity($S_0, v_0, N_W$);  \% $(|S_0|, N_W, 3)$ \\
    $(R^*, t^*) \gets$ repeat($S_0, N_W$);  \% $R^*$: $(|S_0|, N_W, 3, 3)$, $t^*$: $(|S_0|, N_W, 3)$\\
    $\Delta T \gets (1, \beta, \beta^2, \cdots, \beta^{N_T-1})$; \% $(N_T)$\\

    \For { $i\gets 1$ to $N_I$} {
        $R_p \gets$ perturbation($\theta_p, |S_0|N_WN_P$);  \% $(|S_0|, N_W, N_P, 3, 3)$ \\
        $R \gets$ matmul($R_p$, repeat($R^*, N_P$));   \% $(|S_0|, N_W, N_P, 3, 3)$\\
        $t \gets$ repeat($t^*, N_P$); \% $(|S_0|, N_W, N_P, 3)$\\
        $d \gets$ dist($R, t, \partial S$); \% $(|S_0|, N_W, N_P)$\\ \label{line:dist_gpu}
        $j \gets$ top_k_indices($d, N_P^*$);  \% $(|S_0|, N_W, N_P^*)$\\
        $\tilde{R} \gets$ repeat($R_{[j]}, N_T$); \% $(|S_0|, N_W, N_T, N_P^*, 3, 3)$\\
        $\tilde{t} \gets $ update_translation(repeat($t^*, N_P^*$)$, \tilde{R}, v, \Delta T$); \% $(|S_0|, N_W, N_T, N_P^*, 3)$\\
        $I \gets$ in_purse($\tilde{R}, \tilde{t}$); \% $(|S_0|, N_W, N_T, N_P^*)$\\ \label{line:in_purse_gpu}
        $R^*, t^* \gets$ find_farthest_translation($\tilde{R}, \tilde{t}, I$); \\
    }
    {\bf return:} $\hat{\partial S_t}\gets t^*$\\
    \caption{$\partial S_t$ parallel sampler~\label{alg:sample_t_parallel}}
\end{algorithm}

\pdfstringdefDisableCommands{\def\neq{≠}}
\subsection{Intuitive example for $\partial S \neq \partial S_R \times \partial S_t$}
\label{app:intuitive-example}

Here we will give a simple example showing that by simply pushing the poses to the boundary of \purse set could never be enough for getting the tight characterization of the boundary of the \purse set with respect to rotation and translation.

We suppose $R$ lies in the $x$-$y$ plane and satisfies $x^2 + y^2 \leq 1$. $t$ lies on the z axis and satisfies $-1\leq z \leq 1$. And the \purse set as $S \coloneqq \{(x,y,z)|x^2+y^2+z^2\leq 1\}$. We remark that here $R$ and $t$ do not refer to rotation and translation anymore (so with a bit abuse of notation). We here take sampling the boundary of $R$ set here as an example.

We argue that, by simply sampling the boundary of the \purse set, it's hard to densely sample from the $R$ set and $t$ set.  See the visualization in Fig~\ref{fig:low-sample-eff-example}. We can see that, if we just simply uniformly sample from the boundary of the \purse set, it won't be efficient in sampling the boundary of the $R$ set, let alone this is just an over-simplified 2D example.

\begin{figure}
    \begin{center}
        \includegraphics[width = 0.5\textwidth]{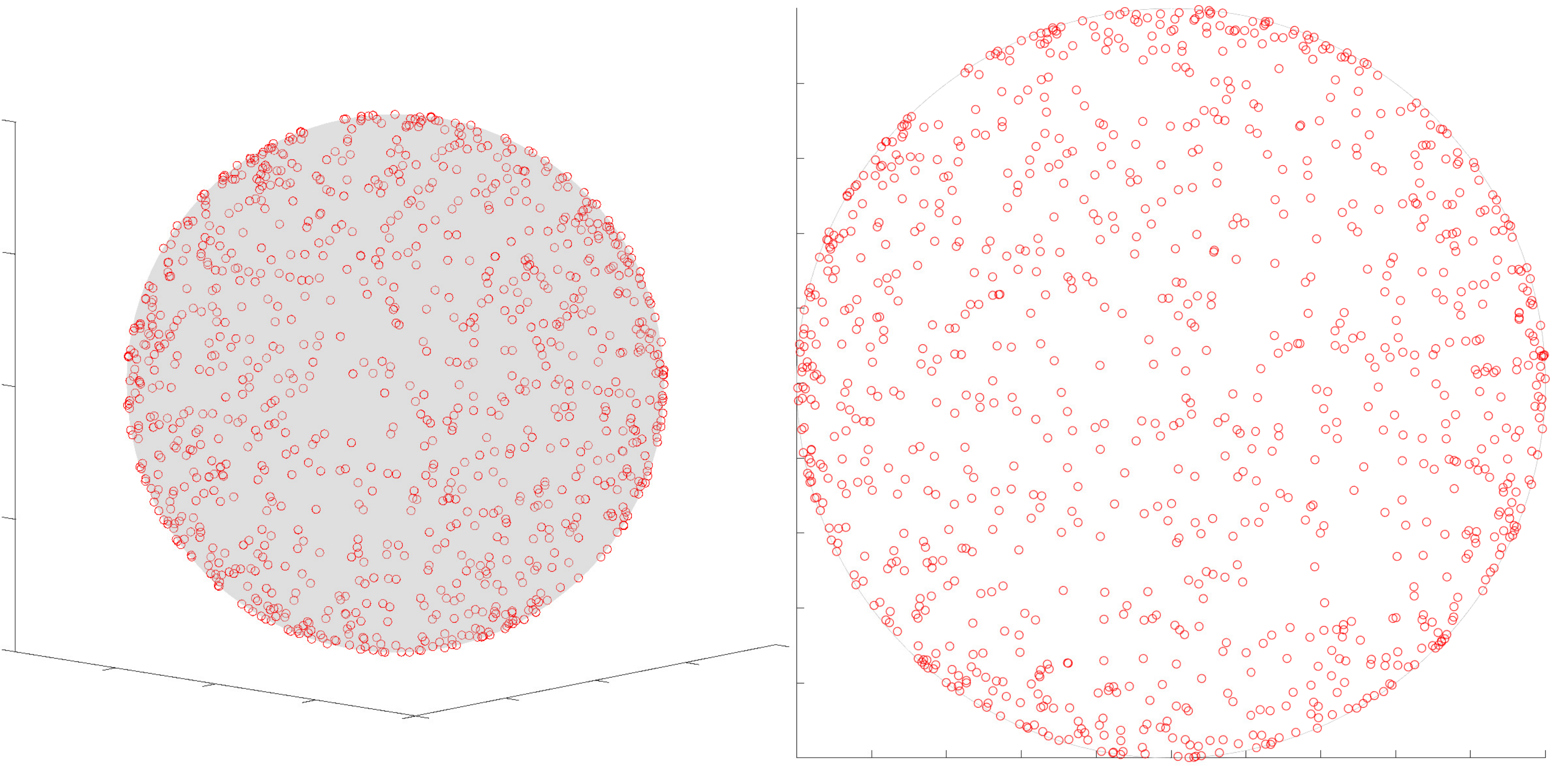}
    \end{center}
    \caption{Simply sample from \purse set boundary leads to low sampling efficiency of the $R$ set boundary. Left: Uniform sampling of the unit sphere, Right: $R$ set (projection to the x-y plane) \label{fig:low-sample-eff-example}}
\end{figure}

Our algorithm proceeds as follows: 

{\bf Step 1.} For the initial point, we first perturbs the translation $t$ $N_P$ times. And we keep the top $N_P^*$ best perturbations.

{\bf Step 2.} Then we move the rotation $R$ in some random direction. We keep $N_T$ exponentially-decay movements and check which is the farthest valid movement.

The visualization of the intuitive example is shown in Fig.~\ref{fig:intuitive example}. In each row, the left plot shows the points in the 3D \purse set, the middle plot shows the $R$ set (projection onto the x-y plane), and the right plot shows the $t$ set (projection onto the z axis). It's easy to see that our algorithm could efficiently sample the boundary of the $R$ set and $t$ set because as the iteration proceeds, the sampled point is becoming closer to the boundary of the $R$ set.

\begin{figure}[h]
	\begin{center}
		\begin{tabular}{c}
            \hspace{-10mm}	
			\begin{minipage}{0.5\textwidth}
				\centering
				\includegraphics[width=\textwidth]{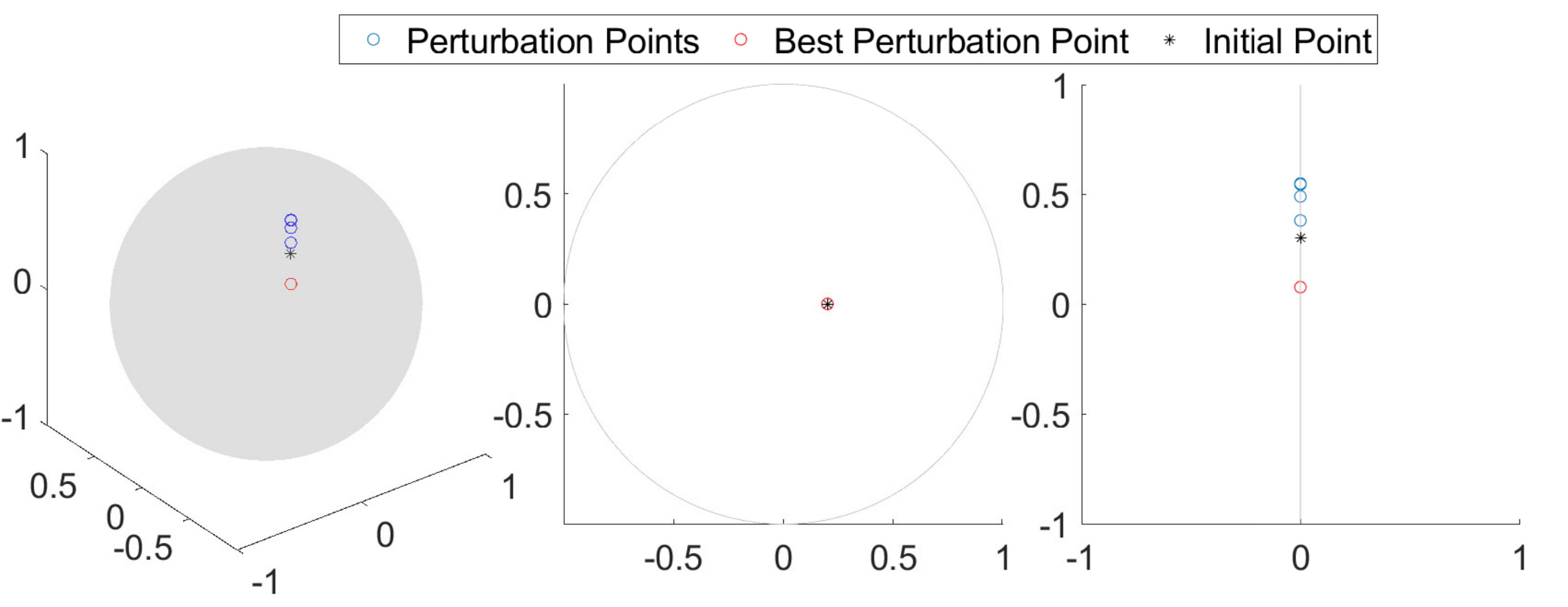}
			\end{minipage}\\
			\hspace{-10mm}	
			\begin{minipage}{0.5\textwidth}
				\centering
				\includegraphics[width=\textwidth]{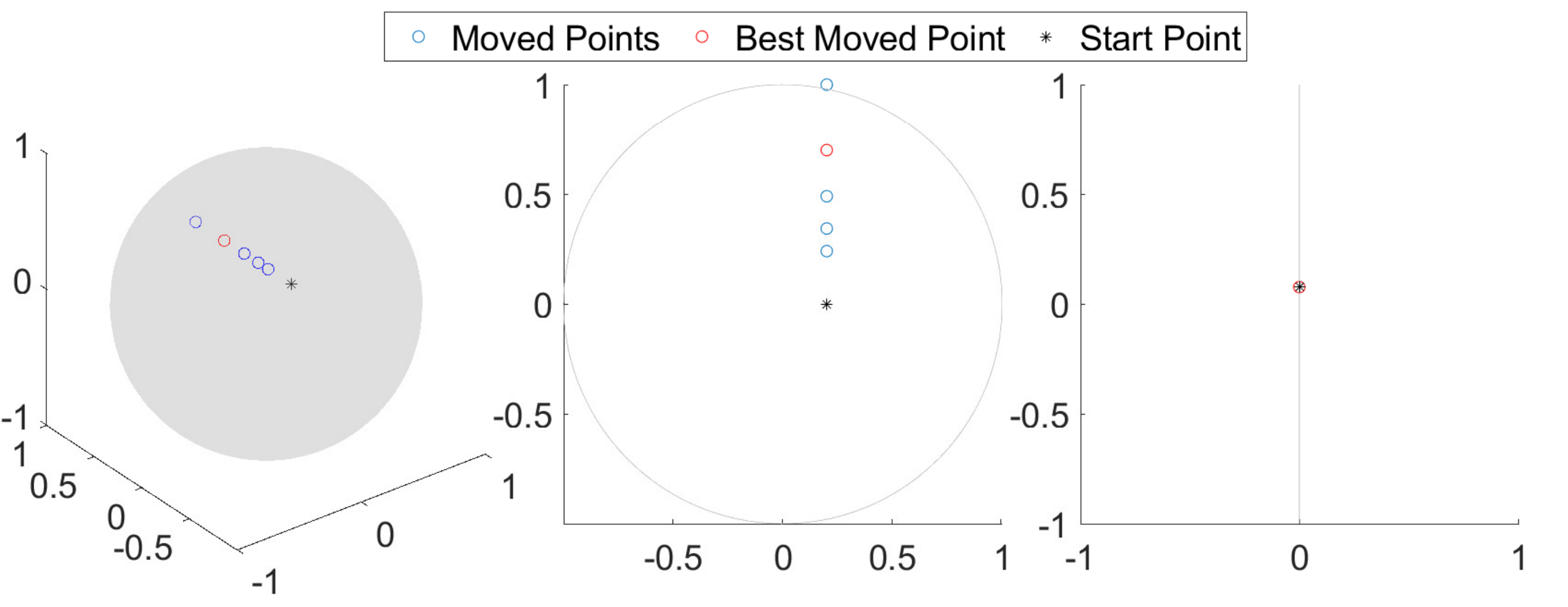}
			\end{minipage}\\
        \end{tabular}
	\end{center}
	\vspace{-2mm}
	\caption{Plot for the intuitive example. Up: Step 1, Down: Step 2.\\
    Left: 3D \purse set, Middle: $R$ set (projection onto the x-y plane), Right: $t$ set (projection onto the z axis) \label{fig:intuitive example}}
\end{figure}

\bibliographystyle{plainnat}
\bibliography{references}

\end{document}